\newif\ifanonymouss
\useunder{\uline}{\ul}{}
\let\NAT@parse\undefined
\newtheorem{lemma}{Lemma}
\theoremstyle{definition}
\theoremstyle{remark}
\DeclareMathOperator*{\argmin}{argmin} 
\newcommand{\appref}[1]{the Supplementary Material (Appendix~\ref{#1})}
\newcommand{\mainref}[1]{the main paper (Section~\ref{#1})}
\newcommand{\maineqref}[1]{Eq.~\ref{#1} in the main paper}
\newcommand{\x}{\mathbf{x}}
\title{\LARGE \bf
CSMapping: Scalable Crowdsourced Semantic Mapping and Topology Inference for Autonomous Driving
}
\author{Zhijian Qiao$^*$, Zehuan Yu$^*$, Tong Li, Chih-Chung Chou, Wenchao Ding and Shaojie Shen$^{\dag}$            
\thanks{This work was supported in part by the HKUST Postgraduate Studentship, and in part by the HKUST-DJI Joint Innovation Laboratory.}
\thanks{$^*$ Zhijian Qiao and Zehuan Yu are co-first authors. Zhijian Qiao, Zehuan Yu, Tong Li, and Shaojie Shen are with the Department of Electronic and Computer Engineering, the Hong Kong University of Science and Technology, Hong Kong, China. Wenchao Ding is with the Academy for Engineering and Technology, Fudan University, Shanghai, China. Chih-Chung Chou is with Zhuoyu Technology Co., Ltd., Shenzhen, China.}
\thanks{$^{\dag}$Corresponding author: Shaojie Shen}
}
\begin{document}
\maketitle
\thispagestyle{empty}
\pagestyle{empty}

\begin{abstract}
    Crowdsourcing offers a scalable approach to autonomous driving map construction by aggregating perception outputs and trajectories from large fleets of production vehicles. However, inherent observation noise from low-cost sensors prevents map quality from automatically scaling with data volume. We present CSMapping, a scalable system that constructs accurate semantic maps and topological road centerlines whose quality improves as more crowdsourced data becomes available. For semantic mapping, we learn a generative map prior using a latent diffusion model trained on high-definition (HD) maps---optionally with standard-definition (SD) map conditioning---without requiring paired crowdsourced/HD-map supervision. This prior captures structural regularities in real-world maps, enabling robustness to extreme observation noise and plausible completion in unobserved regions. We incorporate this prior through a constrained maximum a posteriori (MAP) formulation in latent space, optimizing the map within a low-dimensional manifold that preserves the prior. To initialize the latent space, we develop a robust curve-mapping module that produces vectorized maps, then apply diffusion inversion to transfer them into latent space. For latent optimization, we introduce an efficient Gaussian-basis reparameterization with projected gradient search and a multi-start strategy using posterior scoring. We further implement latent-space factor-graph optimization for global map consistency. For topological mapping, we cluster vehicle trajectories using confidence-weighted k-medoids and apply kinematic refinement, yielding smooth, human-pattern road centerlines robust to diverse trajectories and dispersed endpoints. Experiments on nuScenes, Argoverse~2, and a large proprietary dataset demonstrate state-of-the-art performance on both semantic and topological mapping, with comprehensive ablations and scalability analyses along training and inference axes. Code will be released to support the research community.
\end{abstract}

\section{Introduction}
\label{sec:introduction}

High-definition (HD) maps~\cite{elghazaly2023high} are a core infrastructure for autonomous driving, providing lane-level semantics~\cite{qiao2023online,chen2025online} and topology~\cite{ding2023flowmap,qin2023traffic} that enable reliable localization and long-horizon planning. Current production pipelines rely heavily on professional survey fleets equipped with high-precision sensors, making large-scale deployment costly and slow. Crowdsourced mapping~\cite{guo2016low,qin2021light} aggregates perception outputs and trajectories from large numbers of production vehicles with low-cost sensors, offering a scalable alternative.

In this work, as shown in Fig.~\ref{fig:challenge}, we present CSMapping, a scalable crowdsourced mapping system that leverages perceived road semantics (e.g., lane dividers and boundaries) and human driving trajectories to construct semantic maps and topological centerlines, respectively. The key scalability property is that the performance of both subsystems improves and eventually saturates at a high-quality level as crowdsourced data increases, despite extremely noisy observations (Fig.~\ref{fig:challenge}\textcolor{red}{a}).

\begin{figure}[t]
    \centering
    \includegraphics[width=0.9\linewidth]{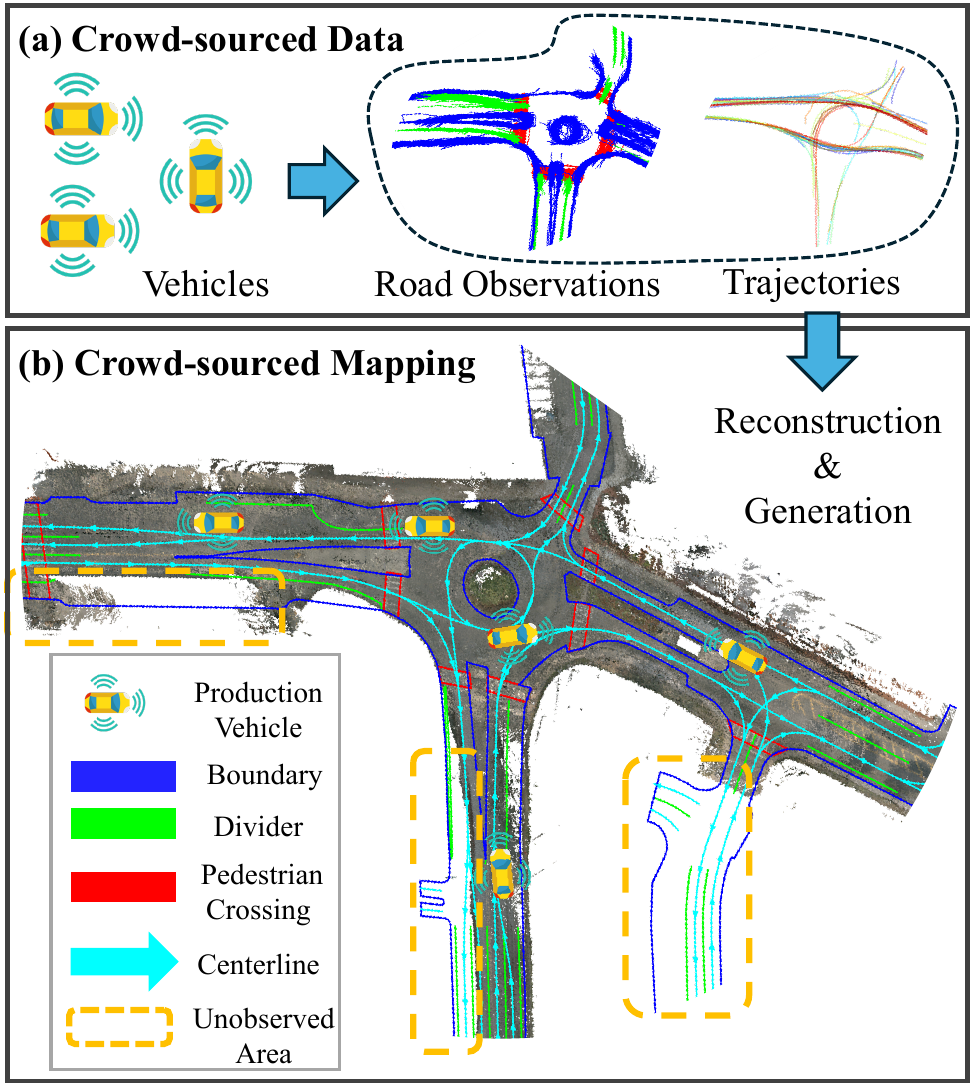}
    \caption{CSMapping leverages crowdsourced vectorized observations, including road semantic detections and vehicle trajectories collected at scale, to respectively produce semantic maps and topological centerlines. Despite noisy observations and incomplete coverage, the system robustly reconstructs observed regions and provides plausible generation of unobserved regions (\textcolor{orange}{orange dashed regions}). A top-down RGB basemap visualizes the coverage of crowdsourced observations.}
    \label{fig:challenge}
\end{figure}

For semantic mapping, while prior works achieve impressive quality via semantic grid mapping~\cite{qin2021light,qin2020avp,guo2016low} or point-cloud clustering followed by vectorization~\cite{kim2021hd,munoz2022robust,zhou2022visual}, we consider a more challenging setting with extremely high observation noise, where traditional robust estimators (e.g., robust cost functions~\cite{mactavish2015all} and Maximum Consensus~\cite{chin2017maximum}) for accurate mapping may fail. Moreover, the coverage of crowdsourced data may be incomplete, unlike professional survey fleets that are controlled to cover entire areas. This limited observability~\cite{yang2019observability} leads to incomplete maps. We attribute both inaccuracies and incompleteness to the lack of a strong map prior that captures structural regularities (element shapes, spatial layouts, connectivity) from large-scale HD map data. To address this, we employ a latent diffusion model~\cite{rombach2022high} to learn a mapping from a Gaussian distribution to the HD-map distribution, enabling any Gaussian latent to be transformed into a plausible, complete HD map via multiple differentiable denoising steps~\cite{ho2020denoising}. The training data contain only HD maps (optionally SD maps for conditional generation). We then formulate semantic mapping as a constrained maximum a posteriori (MAP) estimation in latent space. By constraining the latent to the Gaussian distribution, the estimated map preserves the prior. Intuitively, we convert the original problem—recovering the semantic map from noisy and incomplete observations—into finding a Gaussian latent whose generated HD map best agrees with the observations. To this end, we propose an efficient Gaussian-basis latent reparameterization to preserve Gaussian moment constraints during optimization, and apply projected-gradient optimization on the unit sphere to search for the optimal latent. This approach aligns with the manifold assumption that real map data lies on a low-dimensional manifold~\cite{li2025basicsletdenoisinggenerative}.

Another issue in latent-space optimization is latent initialization. Prior works~\cite{qi2024not,menon2020pulse,ma2025inference} often rely on random sampling, which can be inefficient and prone to poor local optima. We instead first construct a semantic map using classical mapping, and then apply diffusion inversion~\cite{song2020denoising,hong2024exact} to transfer the constructed map into latent space as initialization. To this end, a vectorized mapping system tailored to curve features is designed: it inputs curve observations from online perception~\cite{liao2024maptrv2,chen2024maptracker} and outputs a vectorized map, yielding higher accuracy than grid-based methods. Specifically, we introduce voxel-connectivity association (VCA) for curve observation–landmark association. Curve landmarks are represented with Chebyshev polynomials and parameterized by arc length. We use continuous dynamic time warping (CDTW) for order-preserving curve matching to estimate arc-length parameters, and employ Graduated Non-Convexity (GNC) to robustly estimate landmark states. Beyond providing initialization, we handle extreme cases where vectorized mapping fails (e.g., excessive noise or sparse observations) by adopting a multi-start strategy that samples multiple initializations around the base initialization, together with a posterior scoring method combining observation consistency and map plausibility to select the best estimation.

The proposed semantic mapping system scales not only with increasing crowdsourced data, but also along training and inference axes. During training, generative learning yields mapping performance gains with more data, more training steps, stronger conditioning, and larger model capacity. At inference, quality scales with the number of gradient-based optimization steps and the number of initial latents. Furthermore, we integrate latent-space optimization with factor-graph optimization to produce a globally consistent map.

For topological centerline mapping, prior works~\cite{ding2023flowmap,qin2023traffic} first group trajectories via endpoint clustering, then select representative trajectories in each group to obtain human-pattern centerlines. However, as trajectories accumulate, inter-driver variability causes endpoints to become broadly distributed rather than concentrated near lane centers, making endpoint clustering difficult and harming scalability. We therefore introduce a single-stage approach that clusters trajectories directly using confidence-weighted k-medoids~\cite{rdusseeun1987clustering}, avoiding endpoint clustering altogether. The k-medoids algorithm selects a representative trajectory (the medoid) per cluster and minimizes a sum of pairwise dissimilarities rather than the sum of squared Euclidean distances as in k-means, making it more robust to trajectory noise and outliers. Finally, Non-Maximum Suppression (NMS) and kinematic refinement remove redundant medoids and ensure the smoothness and feasibility of the resulting centerlines. Our contributions are summarized as follows.

\begin{itemize}
    \item We present a scalable crowdsourced system for semantic and topological centerline mapping. The quality of both subsystems improves and eventually saturates at a high-quality level as crowdsourced data increases.
    
    \item We propose a hybrid mapping system integrating classical and learning-based approaches. In learning-based mapping, we develop a latent-space optimization pipeline that integrates a learned generative prior to construct accurate and complete HD maps (we refer to this as generative mapping). In classical mapping, we build a customized vectorized mapping system for curve features, providing a high-quality initialization for latent-space optimization. Factor-graph optimization in latent space yields a globally consistent map. The hybrid system exhibits scalability in both training and inference.
    
    \item For topological mapping, we propose a single-stage approach that directly clusters trajectories using confidence-weighted k-medoids and kinematic refinement to produce human-pattern centerlines.
    
    \item Evaluation on nuScenes~\cite{caesar2020nuscenes}, Argoverse~2~\cite{wilson2023argoverse}, and a large proprietary dataset demonstrates state-of-the-art performance on both semantic and topological mapping tasks, with comprehensive ablations, scalability analyses, and application cases.
\end{itemize}

\section{Related Work}

\subsection{Crowdsourced Mapping}
\label{subsec:rw_crowdsourced_mapping}

Crowdsourced mapping shifts data collection from professional survey fleets to production vehicles, which upload compact semantics rather than raw sensor streams to reduce bandwidth. Typical uploads include semantic point clouds~\cite{zhang2021real,pannen2019lane,greve2024collaborative,pannen2020keep,qin2020avp}, vectorized 3D road markings~\cite{kim2021hd,qin2023traffic,zhou2022visual,chen2024mapcvv}, and 2D image-space contours~\cite{qin2023traffic,ding2023flowmap,ruhhammer2016automated}. On the cloud side, observations are aligned by transforming each vehicle's pose into a common frame using techniques from multi-robot and multi-session Simultaneous Localization and Mapping (SLAM)~\cite{xu2022omni,tian2022kimera,slim2025}. The system then constructs semantic (Section~\ref{related_work:semantic}) and topological (Section~\ref{related_work:topological}) maps.

\subsubsection{Semantic Mapping}
\label{related_work:semantic}
Cloud-side reconstruction typically follows two paradigms: vectorization and rasterization.

Vectorization methods~\cite{kim2021hd,munoz2022robust,chen2024mapcvv,zhou2022visual} adopt a feature-based SLAM-like scheme: they associate, across time and sessions, all observations that correspond to the same curve-landmark and then estimate the landmark state; unlike classical SLAM with point/line landmarks, the landmarks here are curves. For association, some~\cite{zhang2021real,doer2020hd} discretize curves into points and cluster them using Density-Based Spatial Clustering of Applications with Noise (DBSCAN), while filtering outliers; others~\cite{kim2021hd,pannen2019lane,qiao2023online,munoz2022robust} define curve-to-curve similarities to track curve instances. To handle observation uncertainty, shell-based similarities are employed~\cite{kim2021hd,pannen2019lane}; pose uncertainty is incorporated in Chamfer distance-based similarity~\cite{qiao2023online}; curvature-angle descriptors further improve robustness~\cite{munoz2022robust}. After association, clustered curves are fitted with B\'ezier curves~\cite{kim2021hd}, cubic polynomials~\cite{zhou2022visual}, B-splines~\cite{chen2024mapcvv}, or Catmull--Rom splines~\cite{qiao2023online}.

Rasterization methods~\cite{qin2021light,qin2023traffic} maintain probabilistic semantic grids for flexible, real-time updates. Beyond foreground classes emphasized by vectorization methods, they explicitly model background (e.g., road surface) to suppress outliers~\cite{qin2023traffic,qin2021light}; learned uncertainty further improves grid estimation~\cite{xie2023mv}. Learned feature grids~\cite{you2022hindsight,yuan2024presight,zhu2023nemo,xiong2023neural} encode richer per-cell embeddings at the cost of extra storage.

Both representations are usable for downstream planning~\cite{jiang2023vad,hu2023planning}. Vectorized mapping offers high precision but can be brittle under high outlier ratios and complex geometries (e.g., U-shapes), largely due to association and parameterization errors. Rasterized mapping is more tolerant to noise but discretizes the scene into independent cells, weakening global layout cues. These trade-offs motivate our approach: starting from a precise vectorized map initialization (Section~\ref{sec:vec_mapping}), we perform latent-space optimization (Section~\ref{sec:diff_mapping}) under a learned generative prior to produce an accurate and plausible semantic grid map.

\subsubsection{Topological Mapping}
\label{related_work:topological}
Topological mapping estimates drivable centerlines that provide long-horizon structure for navigation and planning, especially in complex intersections where field-of-view limits make online perception models~\cite{wu2023topomlp,fu2024topologic} less reliable. Crowdsourced human driving trajectories are a natural signal: they encode connectivity and common driving patterns without additional labeling. Autograph~\cite{zurn2023autograph,buchner2023learning} learns successor graphs from satellite imagery and trajectories, aggregates them into lane graphs, and relies on post-processing to extract centerlines. Alternatively, Zhou et al.~\cite{zhou2021automatic} infer entry/exit pairs from vehicle flows and then generate B\'ezier trajectories. Methods such as \cite{qin2023traffic,ding2023flowmap,ruhhammer2016automated} cluster trajectories given known entry/exit points to obtain human-like centerlines. We follow the clustering route but avoid explicit entry/exit detection to reduce brittleness, directly clustering raw trajectories and then refining with kinematic constraints.

\subsection{Prior Construction in Mapping}
\label{related_work:priors}

\subsubsection{Hand-crafted Priors}
Hand-crafted priors encode designer knowledge about scene structure. Structured SLAM leverages Manhattan/Atlanta/tilted-world assumptions~\cite{coughlan1999manhattan,schindler2004atlanta,straub2017manhattan,li2023hong}, adding angular constraints between lines and principal directions to stabilize pose estimation. When scenes consist of parameterizable primitives—lines~\cite{xu2025airslam,wen2022roadside,schaefer2019long}, planes~\cite{bao2021utilization}, cylinders~\cite{zhou2022plc}—feature-level parameterization reduces reliance on point correspondences and improves accuracy~\cite{liu2023efficient} and long-term maintainability~\cite{slim2025}. Regularization also appears in NeRF~\cite{mildenhall2021nerf}: RegNeRF~\cite{niemeyer2022regnerf} imposes depth smoothness, and SimpleNeRF~\cite{somraj2023simplenerf} limits high-frequency position encoding to avoid spurious discontinuities. These priors are effective but limited in expressiveness and can fail when assumptions are violated.

\subsubsection{Generative Priors}
Generative modeling learns priors from large-scale data in an unsupervised fashion; in conditional variants, it can incorporate weak or easily obtainable supervision (e.g., text~\cite{rombach2022high} or edge maps~\cite{zhang2023adding}), thereby overcoming the limitations of hand-crafted assumptions.

In image generation, Stable Diffusion~\cite{rombach2022high} and its distilled variants SD-Turbo~\cite{sauer2024adversarial} learn strong priors from large-scale text--image data. Difix3D++~\cite{zhangjie2025difix3d+} fine-tunes SD-Turbo~\cite{sauer2024adversarial} with reference images to reduce artifacts for novel-view synthesis, then uses the deartifacted images to fine-tune NeRF~\cite{mildenhall2021nerf} or 3DGS~\cite{kerbl20233d}. Similarly, 3DGS-Enhancer~\cite{liu20243dgs} leverages Stable Video Diffusion (SVD)~\cite{blattmann2023stable} to interpolate frames, improving 3DGS. Beyond pure synthesis, generators can act as evaluators~\cite{gu2023nerfdiff,warburg2023nerfbusters,wu2024reconfusion,poole2022dreamfusion}, integrating diffusion priors into loss functions to favor plausible reconstructions.

For crowdsourced mapping, generic SVD~\cite{blattmann2023stable}/Stable Diffusion~\cite{rombach2022high} priors are not directly applicable. We therefore train a specialized HD map diffusion model, adapting Stable Diffusion-style training to HD/SD map datasets rather than Internet images, yielding a prior tailored to road semantics and layouts.

\subsection{Latent Optimization Methods}

While Gaussian noise can be used to generate images~\cite{rombach2022high} or music~\cite{ben2024d} by diffusion models~\cite{ho2020denoising}, not all noise seeds are equally effective. Recent work explores selecting seeds and metrics for quality evaluation. Qi et al.~\cite{qi2024not} randomly enumerate seeds and select by inversion stability; Zhou et al.~\cite{zhou2024golden} train a network to predict ``golden'' noise directly. Zero-order search over denoising paths has been investigated~\cite{ma2025inference}, and pre-trained verifiers such as DINO~\cite{oquab2023dinov2} and CLIP~\cite{radford2021learning} guide selection. These efforts fall under inference-time scaling for diffusion models.

Because the denoising process is differentiable, first-order optimization is feasible and often more efficient. CodeSLAM~\cite{bloesch2018codeslam} and follow-ups~\cite{matsuki2021codemapping,zuo2021codevio} encode dense geometry into VAE codes and jointly optimize codes and poses without constraining the code distribution. To constrain the latent to the Gaussian distribution, projected gradient descent on the unit sphere $S^{d-1}$, where $d$ is the dimension of the Gaussian latent space, can be used to enforce Gaussian geometry in high dimensions~\cite{Papaspiliopoulos02102020}, and has been applied in audio and image synthesis~\cite{ben2024d,menon2020pulse}. We similarly use projected gradients but introduce a Gaussian-basis reparameterization to preserve first and second moments and reduce dimensionality, coupled with a multi-start strategy to mitigate sensitivity to initialization.
\section{Semantic Mapping Overview and Problem Formulation}

\begin{figure}[t]
    \centering
    \includegraphics[width=0.8\linewidth]{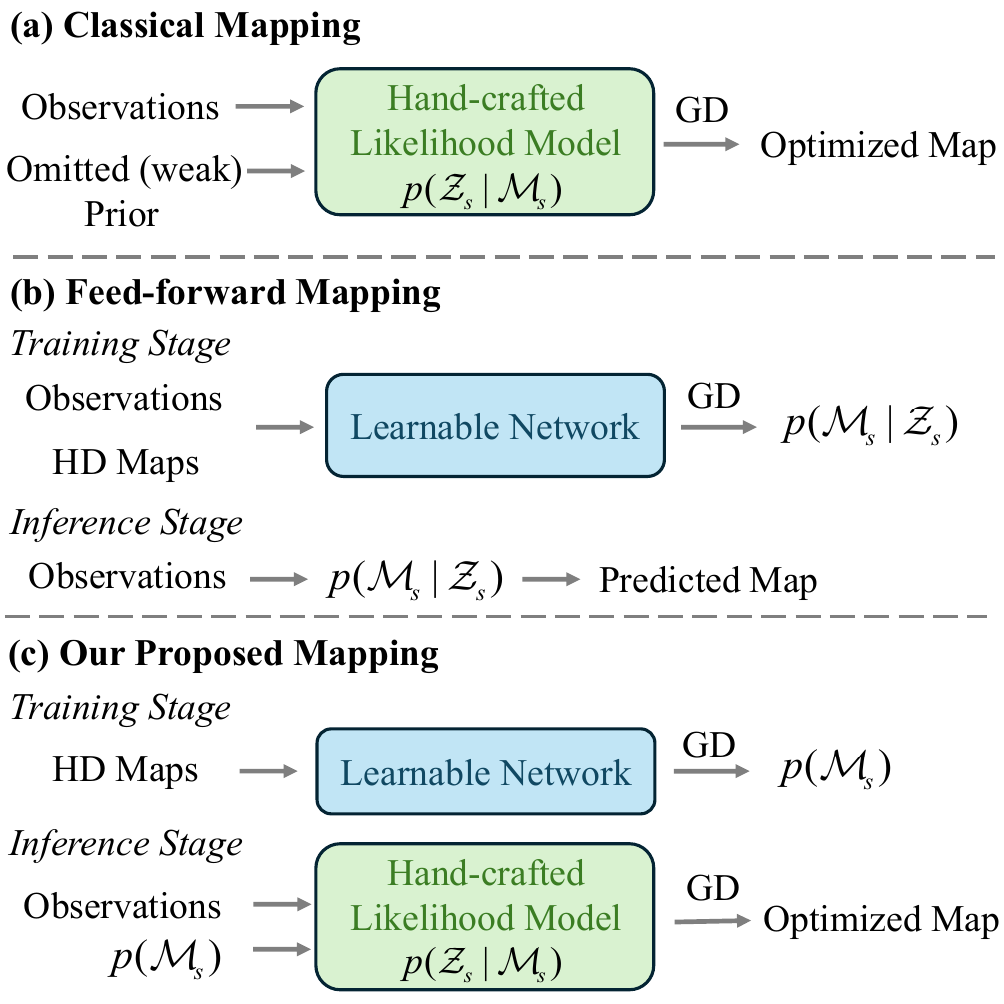}
    \caption{Comparison of mapping paradigms. (a) Classical mapping employs gradient descent (GD) optimization with hand-crafted likelihoods for accurate results, though it relies on weak or absent priors. (b) Feed-forward mapping learns the posterior directly, enabling fast inference but demanding paired data supervision. (c) Our approach learns a prior on HD maps without requiring paired supervision and conducts scene-specific MAP estimation to construct accurate and complete maps.}
    \label{fig:pipe_comp}
  \end{figure}

We consider semantic mapping for a fixed-size scene (e.g., 100 m × 100 m). Crowdsourced vehicles upload vectorized observations $\mathcal{Z}_s$ from online perception models~\cite{liao2024maptrv2,chen2024maptracker}, containing polyline coordinates and $C$ semantic categories. The target semantic map $\mathcal{M}_s \in \{0,1\}^{H \times W \times C}$ is a binary multi-channel raster representing road elements. Unlike conventional formulations that implicitly assume full coverage, our fixed-size map is only partially observed in practice due to limited routes, viewpoints, and traversals; thus, some regions may remain unobserved.

\subsubsection{MAP Formulation in Latent Space}

We formulate semantic mapping as a MAP estimation:
\begin{equation}
\label{eq:map_formulation}
\begin{aligned}
\mathcal{M}_s^* &= \arg\max_{\mathcal{M}_s} p(\mathcal{M}_s|\mathcal{Z}_s) \\
&= \arg\max_{\mathcal{M}_s} p(\mathcal{Z}_s|\mathcal{M}_s)p(\mathcal{M}_s)
\end{aligned}
\end{equation}
where $p(\mathcal{Z}_s|\mathcal{M}_s)$ is the observation likelihood and $p(\mathcal{M}_s)$ is the map prior.

Classical mapping methods typically omit the prior term $p(\mathcal{M}_s)$ because valid map distributions are difficult to hand-craft—reducing Eq.~\ref{eq:map_formulation} to maximum likelihood estimation. We instead use Denoising Diffusion Probabilistic Models \cite{ho2020denoising} (DDPM) to explicitly learn the prior: we train an HD map generator $G$ that learns a mapping from standard Gaussian noise $\mathbf{x}_T \sim \mathcal{N}(0,I)$ to plausible and complete maps through $G := D \circ g_\theta: \mathbb{R}^d \to \{0,1\}^{H \times W \times C}$, where $T$ is the timestep in the diffusion process, $g_\theta$ denotes the denoising process that produces a clean latent $\mathbf{x}_0$, and $D$ decodes it to a map. By constraining $\mathbf{x}_T$ within a Gaussian latent space, we enforce that generated maps follow the learned prior distribution. Eq.~\ref{eq:map_formulation} transforms to latent space optimization in a constrained MAP formulation:
\begin{equation}
\label{eq:latent_opt}
\begin{aligned}
\mathbf{x}_T^* &= \arg\max_{\mathbf{x}_T \sim \mathcal{N}(0,I)} p(\mathcal{Z}_s|G(\mathbf{x}_T)) \\
&= \arg\min_{\mathbf{x}_T} -\log p(\mathcal{Z}_s|G(\mathbf{x}_T)) \quad \text{s.t.} \quad \mathbf{x}_T \sim \mathcal{N}(0,I)
\end{aligned}
\end{equation}

To solve this optimization problem, vectorized mapping (Section~\ref{sec:vec_mapping}) first constructs an initial semantic map from noisy crowdsourced vectorized observations, which is then transformed into latent space via diffusion inversion (Section~\ref{subsec:latent_init}) for initialization. Subsequently, Section~\ref{sec:diff_mapping} details the latent-space optimization process to refine the initial map into an accurate and complete semantic map.


\subsubsection{Relation to Existing Paradigms}
\label{subsec:paradigms_comp}

Fig.~\ref{fig:pipe_comp} contrasts classical and feed-forward mapping paradigms with ours. Classical mapping employs hand-crafted likelihoods \( p(\mathcal{Z}_s|\mathcal{M}_s) \) with weak or omitted priors to recover maps from observations, often suffering from increasing noise and incomplete coverage. When noise is moderate, however, mapping quality can be improved through iterative gradient descent optimization. The feed-forward paradigm~\cite{wang2025vggt,keetha2025mapanything} learns the posterior \( p(\mathcal{M}_s|\mathcal{Z}_s) \) from paired data and predicts maps in a single inference step. It offers fast inference but requires expensive paired crowdsourced-HD map data for training and lacks the ability to iteratively refine maps for higher accuracy. Our method essentially enhances the classical paradigm by incorporating a learned prior on HD maps via diffusion models, eliminating the need for paired training data. Crowdsourced data is utilized during inference to identify the HD map in the learned prior distribution that best matches the observations.

\section{Vectorized Mapping}
\label{sec:vec_mapping}

Vectorized mapping targets curve-based observations and produces an initial semantic map from noisy, crowdsourced vector inputs. This map provides the latent initialization in Eq.~\ref{eq:latent_opt}. Section~\ref{subsec:data_association} describes observation–landmark association; Section~\ref{subsec:curve_param} presents the representation and parameterization; Section~\ref{subsec:robust_estimation} estimates landmark states robustly.

\subsection{Observation-Landmark Association}
\label{subsec:data_association}

Given vectorized observations (2D polylines), association assigns observed curve segments to landmark instances. Unlike points or lines, a curve observation is not atomic, which raises three practical issues: (1) Confidence varies along a curve due to viewpoint, occlusion, and distance; an observation can be partly correct rather than a pure inlier or outlier. Hence, keeping or discarding the entire curve is inappropriate (see the circled regions in Fig.~\ref{fig:data_association}\textcolor{red}{c}). (2) When landmarks are spatially close, a single observation may overlap multiple landmarks, precluding one-to-one assignment (Fig.~\ref{fig:data_association}\textcolor{red}{c}). (3) Curve landmarks are typically long; most observations cover only fragments (Fig.~\ref{fig:data_association}\textcolor{red}{b}), making curve-level similarity measures~\cite{qiao2023online,munoz2022robust} unreliable for association.

We therefore propose a voxel-connectivity association (VCA) method for curve features. We maintain a hashed voxel observation map with $(C+1)$ channels (foreground classes plus background) that accumulates counts over time. For each frame, we rasterize~\cite{bresenham1977} detected foreground polylines to increment per-class counts along traversed voxels, and increment background counts within observed regions that lack foreground detections. We then binarize each voxel: a foreground class is assigned if its count exceeds $\alpha$ times the background count ($\alpha<1$ favors recall). Connected components of same-class voxels define landmark instances. For each instance, we clip each curve to the voxels of that instance and retain only the in-region segments as observations. 

\begin{figure}[tbp]
    \centering
    \includegraphics[width=0.8\linewidth]{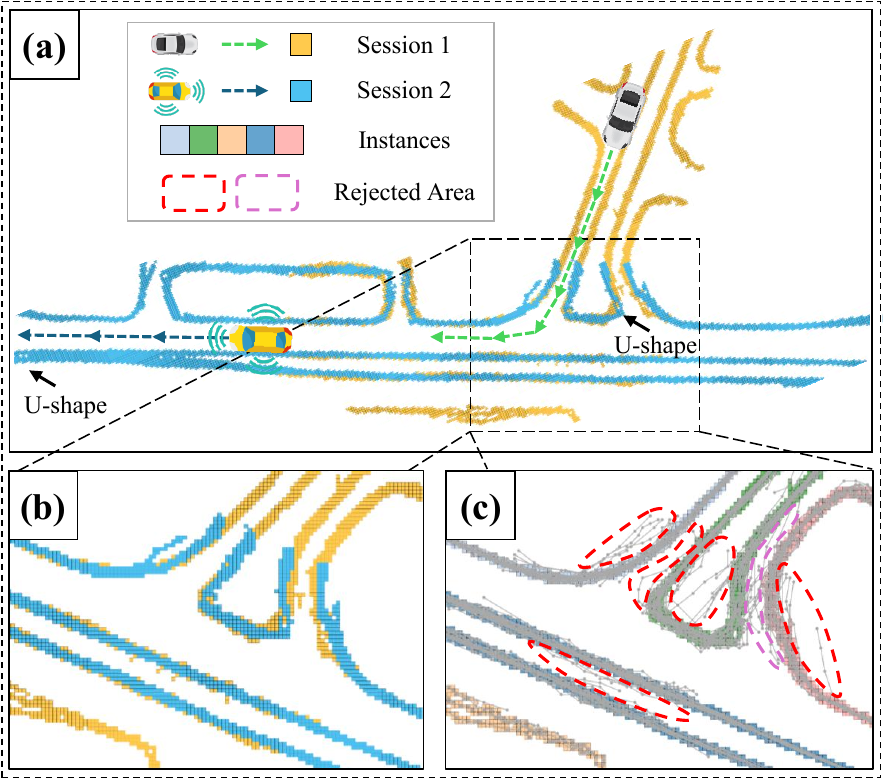}
    \caption{Data association for vectorized observations. (a) Two local semantic maps from different sessions (\textcolor{cyan}{cyan} and \textcolor{orange}{orange}). (b) Limited overlap between observations from different sessions. (c) Instance segmentation with different colors; rejected regions (\textcolor{red}{red} and \textcolor{violet}{violet} dashed circles represent issue 1 and 2 in Section \ref{subsec:data_association}, respectively) indicate portions of vectorized observations to be excluded.}
    \label{fig:data_association}
\end{figure}

\subsection{Landmark Representation and Parameterization}
\label{subsec:curve_param}

We represent 2D landmark curves using an arc-length–parameterized Chebyshev expansion. The landmark state is denoted as $\mathbf{a} = [a_0, \ldots, a_n]$, where $a_i \in \mathbb{R}^2$ are the Chebyshev coefficients and $n$ (the expansion order) is chosen adaptively based on the number of associated voxels. For a point $p_j$ with parameter $t_j$ (arc length affinely normalized to $[-1,1]$), we compute
\begin{equation}
        p_j = \sum_{i=0}^n a_i \cos(i \arccos(t_j)) \quad \text{for} \quad t_j \in [-1, 1]
\end{equation}
Once the paired observations $(p_j, t_j)$ are available for all curves, the coefficients $\mathbf{a}$ are obtained via linear least squares. To obtain the parameters $t_j$, we initialize from one observation and propagate a unified parameter space to others through curve-to-curve matching. Arc-length parameterization requires an order-preserving curve matching method to ensure parameter space consistency, especially for complex shapes (e.g., U-turns in Figs.~\ref{fig:data_association}\textcolor{red}{a} and~\ref{fig:cdtw}), where nearest-neighbor-based matching tends to produce ambiguities. Thus, we employ continuous dynamic time warping (CDTW)~\cite{brankovic2020} to enforce order preservation.

CDTW seeks a monotone matching path $\pi(\cdot)$ that minimizes the cumulative cost:
\begin{equation}
\label{eq:cdtw}
\pi^* = \argmin_{\pi} \int_0^1 \| c_r(\pi(t)) - c_o(\pi(t)) \|^2_2 \cdot \|\pi'(t)\|_1 \, dt
\end{equation}
where $\pi(0) = (0,0)$ and $\pi(1) = (L_r, L_o)$, given two curves $c_r$ and $c_o$ with arc lengths $L_r$ and $L_o$. Intuitively, CDTW computes a shortest path on the discretized cost field from the bottom-left to the top-right corner (Fig.~\ref{fig:cdtw}\textcolor{red}{b}), using only nondecreasing moves (right, up, or right-up on the grid). It is solved via an improved shortest-path algorithm~\cite{brankovic2020}. Additional details about the CDTW algorithm and non-overlapping portion recognition are provided in \appref{app:cdtw}.

Finally, we build a graph whose nodes are observation curves and whose edges encode truncated Chamfer distances. We extract a minimum spanning tree and perform a breadth-first traversal, starting from the highest-degree node as the reference, to propagate arc-length parameters along the tree (Fig.~\ref{fig:cdtw}\textcolor{red}{a}) based on the CDTW matching.


\begin{figure}[t]
    \centering
    \includegraphics[width=0.8\linewidth]{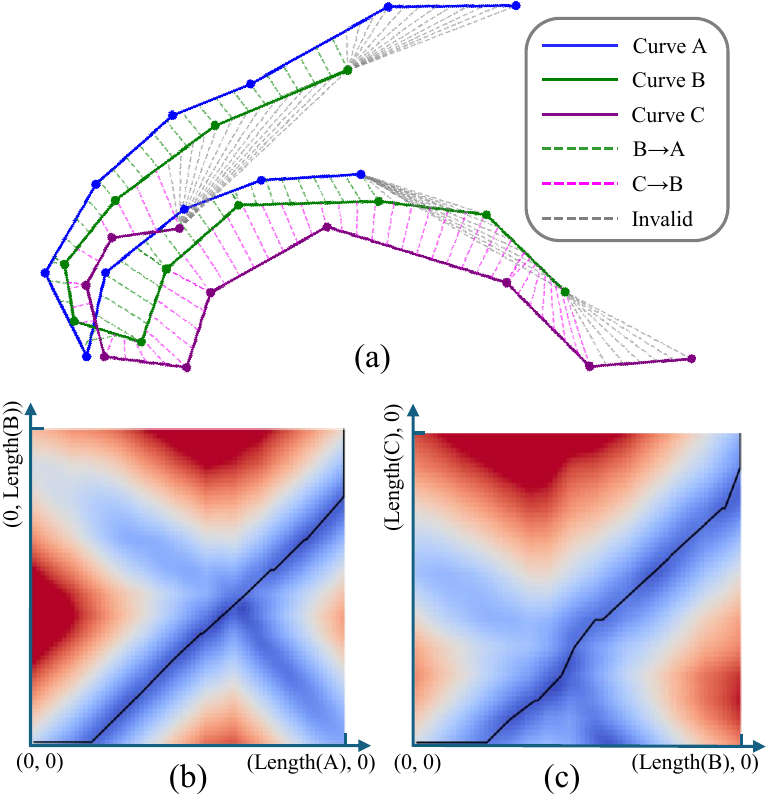}
    \caption{(a) Continuous curve matching via CDTW, where the C$\rightarrow$A correspondence is obtained through sequential C$\rightarrow$B and B$\rightarrow$A propagation. Non-overlapping portions are marked as invalid matches (gray dashed lines). (b) Cost fields for B$\rightarrow$A and C$\rightarrow$B matching, with black lines denoting optimal warping paths. The cost is defined as the Euclidean distance between corresponding points on the two curves.}
    \label{fig:cdtw}
\end{figure}

\subsection{Landmark State Estimation}
\label{subsec:robust_estimation}

After parameterizing all observation curves, we estimate the Chebyshev coefficients $\mathbf{a}$ by solving the following truncated least-squares (TLS) problem:
\begin{equation}
\label{eq:tls_objective}
\mathbf{a}^* = \argmin_{\mathbf{a}} \sum_{j=1}^N \min \left( \|p_j - \sum_{i=0}^n a_i \cos(i \arccos(t_j))\|^2_2, c^2 \right)
\end{equation}
where $c^2$ is the truncation threshold and $N$ is the number of point–parameter pairs. This non-convex objective is solved using graduated non-convexity (GNC)~\cite{yang2020graduated}; see \appref{app:gnc} for details.

\section{Generative Mapping}
\label{sec:diff_mapping}

\begin{figure}[t]
  \centering
  \begin{subfigure}[t]{\linewidth}
    \includegraphics[width=\linewidth]{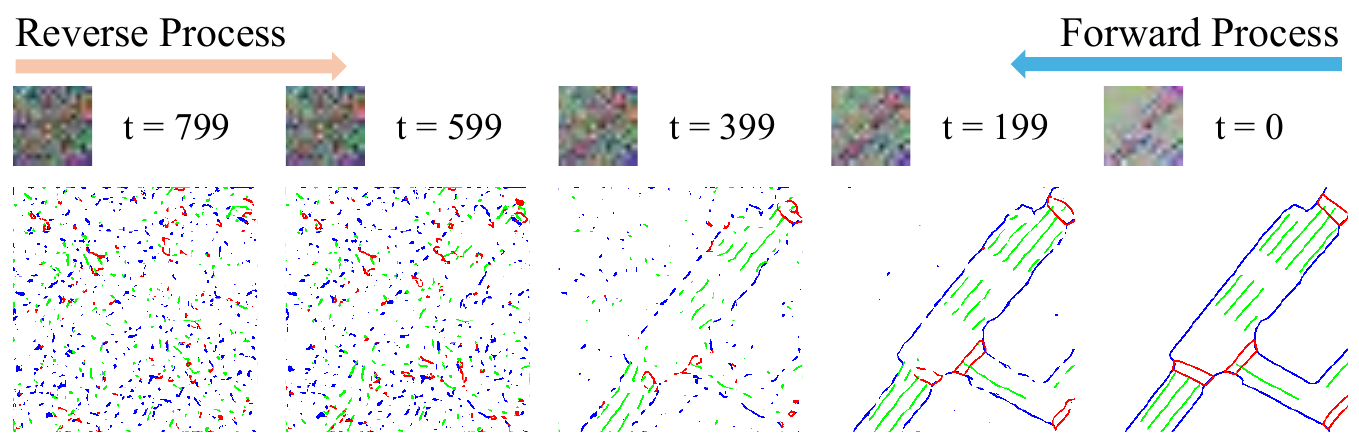}
    \caption{}\label{fig:denoising_ddpm_example}
  \end{subfigure}
  \hfill
  \begin{subfigure}[t]{\linewidth}
    \includegraphics[width=\linewidth]{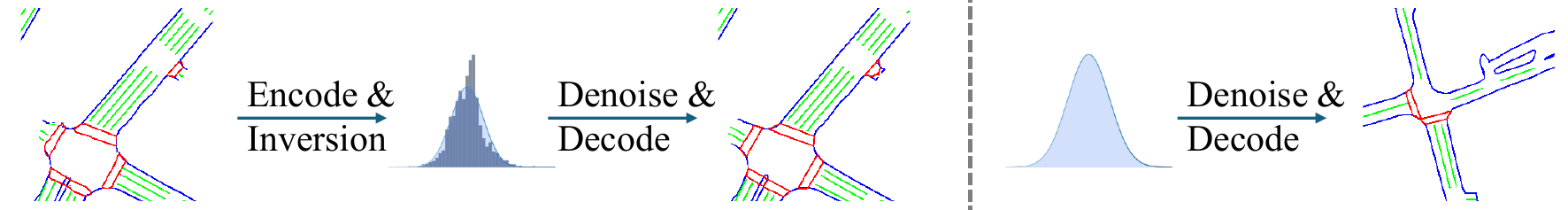}
    \caption{}\label{fig:denoising_dpm_inversion}
  \end{subfigure}
  \caption{Illustration of latent diffusion and inversion. (a) Latent diffusion and denoising process with decoded maps along the trajectory. (b) From left to right: initial map from vectorized mapping, inverted latent from diffusion inversion and its generated map, and a random Gaussian latent with its generated map.}\label{fig:denoising}
\end{figure}

After obtaining the initial semantic map from vectorized mapping (Section~\ref{sec:vec_mapping}), this section presents our approach to solving the constrained MAP problem in latent space (Eq.~\ref{eq:latent_opt}). Section~\ref{subsec:diffusion} introduces the map latent diffusion model and its training. Section~\ref{subsec:latent_init} describes the initialization of the latent variable via diffusion inversion. Section~\ref{subsec:latent_reparam} proposes a Gaussian-basis reparameterization and optimization strategy. Section~\ref{subsec:obs_likelihood} defines the observation likelihood as a masked squared error. Section~\ref{subsec:multistart} introduces a multi-start strategy for more robust optimization. Section~\ref{subsec:global_consistency} extends to global multi-submap scenarios via factor graph optimization.

\subsection{Map Latent Diffusion Model}
\label{subsec:diffusion}
$G(\cdot)$ in Eq.~\ref{eq:latent_opt} denotes a generative process that maps a noise latent $\mathbf{x}_T$ to a map $\mathcal{M}_s$ based on Denoising Diffusion Probabilistic Models (DDPM) \cite{ho2020denoising}. Following latent diffusion practice in image generation \cite{rombach2022high}, the map $\mathcal{M}_s \in \{0,1\}^{H \times W \times C}$ is first encoded by a Variational Autoencoder (VAE) \cite{kingma2013auto} into a compact latent $\mathbf{x}_0 = E(\mathcal{M}_s) \in \mathbb{R}^{H' \times W' \times C'}$, and we denote $d = H' W' C'$ for brevity. In the forward diffusion process, we gradually add Gaussian noise $\epsilon \sim \mathcal{N}(0,I)$ to obtain the noised latent $\mathbf{x}_t$ over $T$ timesteps, where $\mathbf{x}_t = \sqrt{\bar{\alpha}_t} \, \mathbf{x}_0 + \sqrt{1-\bar{\alpha}_t} \, \epsilon$ (see \appref{app:forward_process} for details); when $T$ is large, $\mathbf{x}_T$ approaches a standard Gaussian. In the reverse process, $\mathbf{x}_T$ is iteratively denoised to $\mathbf{x}_0$ using DPM-Solver \cite{lu2022dpm++} with a noise-prediction network $\epsilon_\theta$. DDIM \cite{song2020denoising} is provided as an alternative and a comparison is reported in Section~\ref{subsec:ab_scheduler}. In this work, we omit the stochastic term in DPM-Solver, yielding a deterministic mapping from $\mathbf{x}_T$ to $\mathbf{x}_0$. Fig.~\ref{fig:denoising_ddpm_example} visualizes the trajectories of $\mathbf{x}_t$ and the corresponding decoded maps.

\begin{figure}[t]
    \centering
    \includegraphics[width=\linewidth]{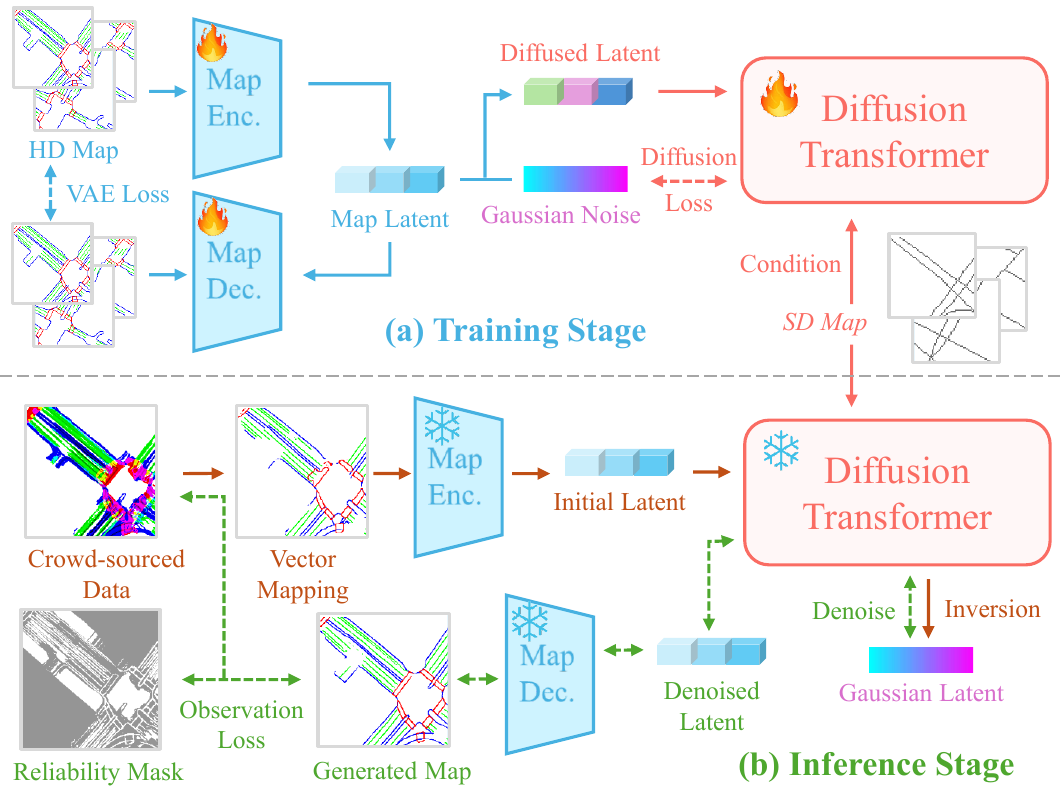}
    \caption{Generative mapping pipeline. (a) Training the HD-map latent diffusion model: a VAE encodes HD maps into latent space and decodes them back to reconstruct the map (\textcolor{CornflowerBlue}{blue} paths), while a diffusion transformer predicts noise at each timestep, optionally conditioned on an SD map (\textcolor{Salmon}{pink} paths). (b) \textcolor{RawSienna}{Brown} paths illustrate latent initialization from vectorized mapping via diffusion inversion; \textcolor{OliveGreen}{green} paths show gradients backpropagated through the denoising network to the noise latent $\mathbf{x}_T$ using a masked loss.}
    \label{fig:generative_pipe}
\end{figure}

The training pipeline is shown in Fig.~\ref{fig:generative_pipe}\textcolor{red}{a} and comprises two parts: (1) VAE training and (2) denoising network training. The encoder $E$ and decoder $D$ of the VAE are implemented as convolutional networks and are trained with a Kullback--Leibler (KL) divergence term and a reconstruction term on HD map samples $\mathcal{M}_s \in \{0,1\}^{H \times W \times C}$. The denoising network $\epsilon_\theta$ is a skip-connected DiT \cite{bao2023all} that takes the latent $\mathbf{x}_t$, timestep $t$, and the SD map $c_{\text{sd}}$ (optional) as inputs to predict the noise. The timestep $t$ is injected via adaptive layer normalization (adaLN~\cite{peebles2023}). The SD map $c_{\text{sd}}$ encodes a rasterized skeleton of road-level geometry, which has the same spatial resolution as the HD map; for every pixel belonging to the SD skeleton, all semantic channels are set to $1$, and $0$ otherwise. The SD map is encoded by $E$ and injected into the DiT through zero-initialized layers at each DiT block, following ControlNet \cite{zhang2023adding}. The complete architecture is illustrated in \appref{app:diffusion_architecture}. Given the latent $\mathbf{x}_0$ and its corresponding SD map $c_{\text{sd}}$, we sample $\epsilon \sim \mathcal{N}(0,I)$ and a timestep $t$ uniformly from $\{1,\dots,T\}$ to train the denoising network. The training objective maximizes the log-likelihood of the data, $\log p_\theta(\mathbf{x}_0)$, and simplifies to minimizing the mean-squared error between the true and predicted noise as follows (see \appref{app:ddpm_loss_derivation}):
\begin{equation}
\label{eq:masked_loss}
    \mathcal{L} = \mathbb{E}_{\mathbf{x}_0, \, \epsilon \sim \mathcal{N}(0,I), \, t \sim \mathcal{U}\{1,\dots,T\}} \left\| \epsilon - \epsilon_\theta(\mathbf{x}_t, t, c_{\text{sd}}) \right\|_2^2 
\end{equation}

During training, the SD map is randomly replaced with a zero map with 20\% probability to enable unconditional generation. Given the learned $\epsilon_\theta$, $\mathbf{x}_T$ is gradually denoised to $\mathbf{x}_0$ through DPM-Solver \cite{lu2022dpm++}, which defines the mapping $\mathbf{x}_0 = g_\theta(\mathbf{x}_T)$ (we omit $c_{\text{sd}}$ for brevity). Problem~\ref{eq:latent_opt} becomes:
\begin{equation}
    \label{eq:latent_opt_3}
    \begin{aligned}
        &\arg\min_{\mathbf{x}_T} -\log p(\mathcal{Z}_s|D(g_\theta(\mathbf{x}_T))) \\
        &\text{ subject to } \mathbf{x}_T \sim \mathcal{N}(0,I)
    \end{aligned}
\end{equation}

\subsection{Latent Initialization}
\label{subsec:latent_init}

To solve Eq.~\ref{eq:latent_opt_3}, directly searching in latent space from a random initialization is challenging. We therefore initialize from the vectorized mapping result (Section~\ref{sec:vec_mapping}) and then perform diffusion inversion, as shown in Fig.~\ref{fig:generative_pipe}\textcolor{red}{b}. Although the vectorized map may be noisy and incomplete, it remains close to the ground truth and thus provides a strong initialization. Specifically, we rasterize the vectorized map to obtain $\mathcal{M}_s^{\text{init}}$ and encode it as $\mathbf{x}_0^{\text{init}} = E(\mathcal{M}_s^{\text{init}})$. Inversion reverses the generative direction by tracing back along the denoising trajectory to the noise latent. Ideally, $\mathbf{x}_0$ can be recovered by first inverting to $\mathbf{x}_T$ and then denoising back to $\mathbf{x}_0$ using the same denoising network. For the implementation of DPM-Solver++ and DDIM as well as their inversion, we used the diffusers library~\cite{diffusers}. Fig.~\ref{fig:denoising_dpm_inversion} demonstrates the importance of inversion: starting from the inverted latent $\mathbf{x}_T^{\text{init}}$ reconstructs the initial map closely, whereas a random Gaussian latent yields a significantly different result. The quantitative comparison can be found in Section~\ref{subsec:ab_initialization}.

\begin{figure*}[htbp]
    \centering
    \includegraphics[width=\linewidth]{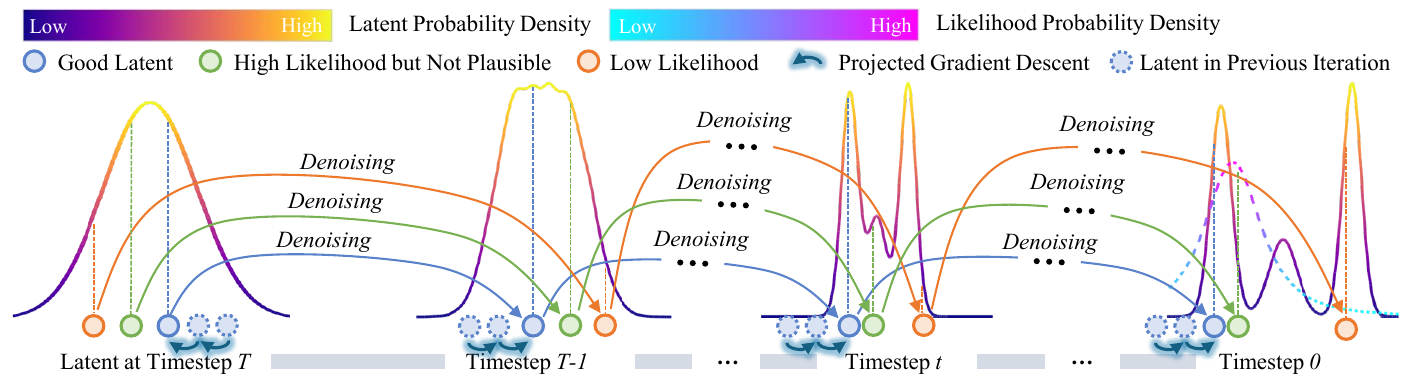}
    \caption{Latent-space optimization. \textcolor[RGB]{90, 130, 203}{Blue}, \textcolor[RGB]{236, 121, 44}{orange}, and \textcolor[RGB]{120, 177, 81}{green} balls denote three typical initial latents. Initial latents ($t=T$) are optimized by projected gradient descent to maximize the observation likelihood while preserving the Gaussian prior. Latent probability density over timesteps is shown in solid color-changing curves; the likelihood density at $t=0$ is shown in a dashed color-changing curve. The three latent candidates are evaluated at timestep $t=0$: the orange latent has a high prior score but low likelihood; the blue and green latents both achieve high observation likelihood, but the blue latent lies in a region of higher prior probability and is therefore more likely to produce a plausible map.}
    \label{fig:latent_opt}
\end{figure*}

\subsection{Latent Reparameterization and Optimization}
\label{subsec:latent_reparam}

Directly optimizing $\mathbf{x}_T$ from the inverted latent $\mathbf{x}_T^{\text{init}}$ while maintaining the Gaussian constraint in Eq.~\ref{eq:latent_opt_3} is challenging. Therefore, we propose a Gaussian-basis reparameterization and an accompanying optimization strategy to explore the Gaussian latent space. Specifically, we reparameterize $\mathbf{x}_T$ as a linear combination of orthogonal Gaussian basis vectors:
\begin{equation}
\label{eq:basis_decomp}
    \mathbf{x}_T = \sum_{i=0}^{K} w_i \mathbf{b}_i, \quad \text{s.t.} \quad \|\mathbf{w}\|_2 = 1
\end{equation}
where $\mathbf{b}_0 = \mathbf{x}_T^{\text{init}}$ serving as the primary anchor, and $\{\mathbf{b}_i\}_{i=1}^K$ are pre-computed orthogonal vectors sampled from a Gaussian distribution, providing diverse directions for exploration around the initial anchor $\mathbf{b}_0$. Specifically, these basis vectors are generated by QR decomposition of a Gaussian matrix $\mathbf{A} \sim \mathcal{N}(\mathbf{0}_{d \times d}, \mathbf{I}_{d \times d})$:
\begin{equation}
\label{eq:basis_generation}
[\mathbf{b}_1, \dots, \mathbf{b}_K] = \sqrt{d} \, \mathbf{Q}[:,1:K] \quad \text{where } \mathbf{A} = \mathbf{QR}
\end{equation}

The key insight is that the linear combination of orthogonal Gaussian basis vectors approximately preserves the first and second moments of the distribution when the weight vector lies on the unit sphere $\|\mathbf{w}\|_2 = 1$. Although $\mathbf{x}_T^{\text{init}}$ may deviate from a perfect Gaussian sample due to vectorized mapping errors and inversion imperfections, with sufficiently large $K$, the resulting $\mathbf{x}_T$ remains close to $\mathcal{N}(0,I)$.

With this reparameterization, our final optimization problem becomes:
\begin{equation}
    \label{eq:latent_opt_4}
    \begin{aligned}
        &\arg\min_{\mathbf{w}} -\log p(\mathcal{Z}_s|D(g_\theta(\sum_{i=0}^{K} w_i \mathbf{b}_i))) \\
        &\text{ subject to } \|\mathbf{w}\|_2 = 1
    \end{aligned}
\end{equation}

Given the initial guess $\mathbf{x}_T^{\text{init}}$, $\mathbf{w}$ can be initialized as $\mathbf{w} = [1, 0, \dots, 0]$ and we solve the spherical optimization problem by projected gradient descent. The dimension of the optimization problem is reduced from $d$ to $K$, with $K \ll d$ in practice. The proposed Gaussian-basis reparameterization outperforms other parameterization methods in both reconstruction quality in observed areas and plausible generation in unobserved areas, which is validated in Section~\ref{subsec:ab_opt_state}. The influence of $K$ and the Gaussian bases $\mathbf{b}_{1}, \ldots, \mathbf{b}_{K}$ is analyzed in Section~\ref{subsec:ab_num_basis}.

\subsection{Observation Likelihood}
\label{subsec:obs_likelihood}

To realize the constrained MAP in Eq.~\ref{eq:latent_opt_4}, we define the objective as a masked $\ell_2$ loss. Let $\mathcal{M}_s = D(g_\theta(\sum_{i=0}^{K} w_i \mathbf{b}_i)) \in \{0,1\}^{H \times W \times C}$ denote the generated map from the reparameterized latent. We construct the target map $\mathcal{M}_{tgt} \in \{0,1\}^{H \times W \times C}$ and a reliability mask $\mathcal{M}_{mask} \in \{0,1\}^{H \times W \times C}$ from observations.

To construct $\mathcal{M}_{tgt}$ and $\mathcal{M}_{mask}$, we maintain an observation-count tensor $\mathbf{N} \in \mathbb{N}^{H \times W \times (C+1)}$, where the first $C$ channels store foreground class counts and the last channel stores the total observation count $N_p$. This requires only $O(HWC)$ memory, instead of $O(NHWC)$ for storing $N$ individual observations. We normalize the first $C$ channels by $N_p$ to obtain a probability map $\mathcal{M}_{prob} \in [0,1]^{H \times W \times C}$. As described in \appref{app:obs_aggregation}, computing the $\ell_2$ loss between $\mathcal{M}_s$ and $\mathcal{M}_{prob}$ is equivalent (up to a constant) to summing over all individual observations treated as inliers.

However, observations vary in reliability, so we only use high-confidence ones to construct the observation likelihood. We first binarize $\mathcal{M}_{prob}$ with a threshold $\tau_{\text{fg}}$ to obtain the target map $\mathcal{M}_{tgt}$, and then build a reliability mask $\mathcal{M}_{mask}$ to keep only trustworthy foreground and background supervision. The problem~\ref{eq:latent_opt_4} becomes:
\begin{equation}
    \label{eq:latent_opt_5}
    \begin{aligned}
        &\arg\min_{\mathbf{w}} \mathcal{L}_{\text{mask}}(\mathcal{M}_s, \mathcal{M}_{tgt}, \mathcal{M}_{mask}) \\
        &\quad := \sum_{p \in \Omega} \sum_{c=1}^C m_{mask}^c(p) \cdot \left\| \mathcal{M}_s^c(p) - m_{tgt}^c(p) \right\|^2 \\
        &\text{ subject to } \|\mathbf{w}\|_2 = 1
    \end{aligned}
\end{equation}
where $\Omega$ is the pixel domain; $m_{mask}^c(p)$ is true if $m_{tgt}^c(p)$ is true, or the pixel is observed but only as background with sufficient total observation count (such as the 75th percentile of $N_p$), and false otherwise. Once the observation likelihood is defined, the gradient can be backpropagated through the denoising network to the noise latent $\mathbf{x}_T$, as shown in Fig.~\ref{fig:generative_pipe}\textcolor{red}{b}.

\subsection{Multi-Start Strategy and Posterior-Based Scoring}
\label{subsec:multistart}

While projected gradient descent is efficient, it can be sensitive to the initial guess $\mathbf{x}_T^{\text{init}}$, especially when the observations $\mathcal{Z}_s$ are extremely sparse or noisy (beyond the capability of vectorized mapping). To mitigate this, we introduce a multi-start strategy. We generate $N_s$ starting points by perturbing the initial latent with Gaussian noise (Fig.~\ref{fig:latent_opt}). For each perturbed starting point $\mathbf{x}_T^{\text{init}, i}$, we execute the full spherical optimization, yielding a set of $N_s$ candidate maps $\{\mathcal{M}_{s,i}\}$ with corresponding denoised latents $\{\mathbf{x}_{0,i}\}$.

To evaluate the quality of each $\mathcal{M}_{s,i}$, we adopt a heuristic posterior-based scoring. The likelihood term is computed as the Intersection over Union (IoU) between the candidate map $\mathcal{M}_{s,i}$ and $\mathcal{M}_{tgt}$ under the reliability mask $\mathcal{M}_{mask}$. Although $\mathcal{M}_{mask}$ indicates the high-confidence pixels, some of them may still be misclassified. Meanwhile, $\mathcal{M}_{mask}$ is sparse, so relying only on the likelihood term may produce candidates that achieve high likelihood yet are implausible. We therefore introduce a prior term $p(\mathcal{M}_s) \propto p(\mathbf{x}_0)$ to measure plausibility. As described in \appref{app:ddpm_loss_derivation}, the DDPM training objective maximizes the data log-likelihood $\log p_\theta(\mathbf{x}_0)$ and reduces to minimizing the noise-prediction loss. This motivates using the cosine similarity between the predicted noise and the true noise to approximate the prior term, i.e.,   

\begin{equation}
\label{eq:prior_score}
s_{\text{prior}}(\mathbf{x}_0) = \frac{1}{N_t} \sum_{j=1}^{N_t} \mathbb{E}_{\epsilon \sim \mathcal{N}(0,I)} \left[ \frac{\epsilon \cdot \epsilon_\theta(\mathbf{x}_{t_j}, t_j, c_{\text{sd}})}{\|\epsilon\| \|\epsilon_\theta(\mathbf{x}_{t_j}, t_j, c_{\text{sd}})\|} \right],
\end{equation}
where $\mathbf{x}_{t_j} = \sqrt{\bar{\alpha}_{t_j}} \mathbf{x}_0 + \sqrt{1-\bar{\alpha}_{t_j}} \epsilon$ and $t_j = j \cdot \frac{T}{N_t}$. In implementation, we use $N_t=5$ uniformly spaced timesteps and add random noise to each $\{\mathbf{x}_{0,i}\}$ to compute the prior score.

Both terms are naturally bounded in $[0,1]$ for straightforward combination. We first discard candidates with low likelihood to ensure basic consistency with the crowdsourced observations, and then select the best candidate using a weighted sum of the two terms. The advantage of the proposed posterior score over a pure likelihood score is validated in Section~\ref{subsec:scalability_inference}.

\subsection{Global Consistency via Factor Graph Optimization}
\label{subsec:global_consistency}

\begin{figure}[t]
    \centering

    \begin{subfigure}[b]{\linewidth}
        \centering
        \includegraphics[width=0.8\linewidth]{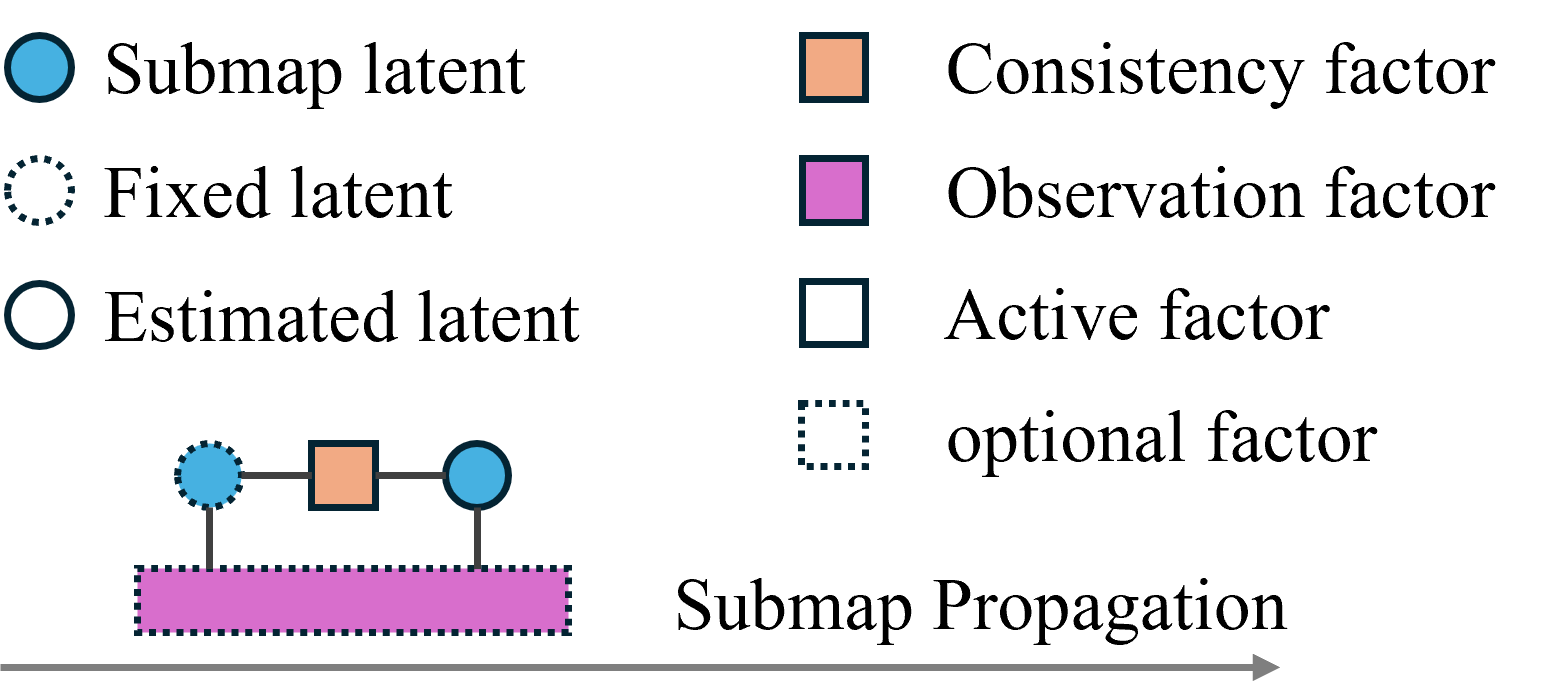}
        \caption{}
        \label{fig:submap_propagation}
    \end{subfigure}

    \vspace{0.5em}

    \begin{subfigure}[b]{\linewidth}
        \centering
        \includegraphics[width=0.8\linewidth]{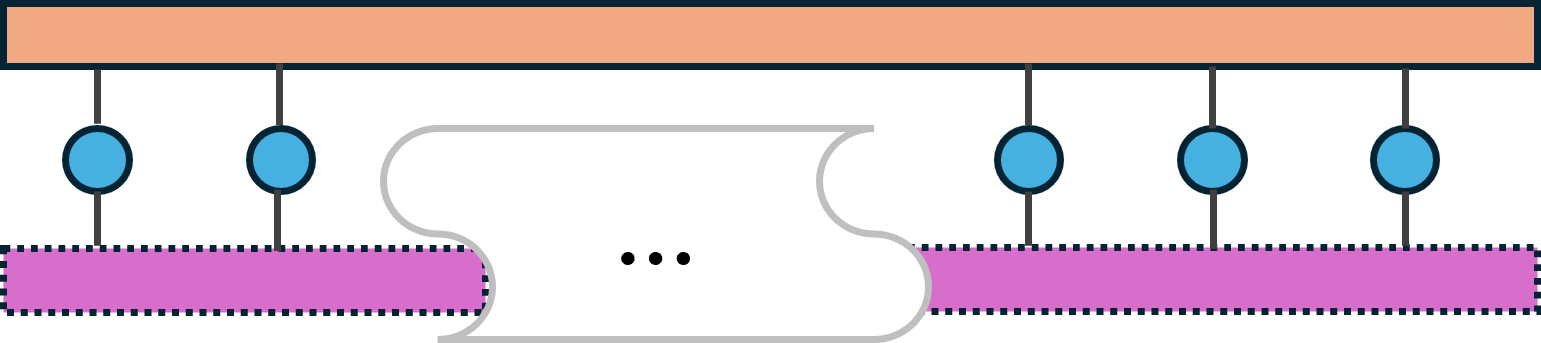}
        \caption{}
        \label{fig:global_graph}
    \end{subfigure}

    \caption{(a) Sequential submap propagation: the new latent is initialized by enforcing consistency with the previous submap. (b) Joint factor-graph optimization: all overlapping submap pairs are connected via consistency factors, yielding a globally coherent map. In both stages, observation factors are applied only when observations are available for the corresponding submap.}
    \label{fig:global_consistency}
\end{figure}

While the previous sections focus on optimizing individual fixed-size submaps, constructing globally consistent maps across large-scale scenes requires merging multiple overlapping submaps. We achieve this through factor graph optimization, which operates in two stages: sequential propagation for initialization (Fig.~\ref{fig:submap_propagation}) followed by joint refinement (Fig.~\ref{fig:global_graph}).

\subsubsection{Sequential Submap Propagation}
Given an optimized submap latent $\mathbf{x}_{\text{prev}}$ ($T$ is omitted for simplicity), we optimize the next submap latent $\mathbf{x}_{\text{next}}$ along a predefined path (e.g., following vehicle trajectories) using the consistency and observation factors, both defined as masked $\ell_2$ losses. The consistency factor is:
\begin{equation}
\label{eq:consistency_factor}
r_{\text{cons}}(\mathbf{x}_{\text{prev}}, \mathbf{x}_{\text{next}}) = \mathcal{L}_{\text{mask}}(\mathcal{M}_{\text{next}}, \mathcal{M}_{\text{prev}}^{\prime}, \mathcal{M}_{mask,\text{prev}})
\end{equation}
where $\mathcal{M}_{\text{prev}}^{\prime}$ is the warped map of $\mathcal{M}_{\text{prev}}$ by the relative pose $T_{\text{prev,next}}$, and $\mathcal{M}_{mask,\text{prev}}$ is a binary mask indicating the overlapping region between the two submaps.

The observation factor is defined as:
\begin{equation}
\label{eq:observation_factor}
r_{\text{obs}}(\mathbf{x}_{\text{next}}) = \mathcal{L}_{\text{mask}}(\mathcal{M}_{\text{next}}, \mathcal{M}_{tgt,\text{next}}, \mathcal{M}_{mask,\text{next}})
\end{equation}
where $\mathcal{M}_{tgt,\text{next}}$ and $\mathcal{M}_{mask,\text{next}}$ are the binarized target and reliability mask constructed from observations $\mathcal{Z}_{s,\text{next}}$ as described in Section~\ref{subsec:obs_likelihood}. The observation factor is enforced only when observations are available for the submap; otherwise the next submap is ``imagined'' subject to overlap consistency with the previous submap.

\subsubsection{Joint Factor Graph Refinement}
To handle overlap between multiple submaps and loop closures, we refine the entire map by jointly optimizing all latents in a factor graph (Fig.~\ref{fig:global_graph}). The graph connects submaps via consistency factors for all overlapping pairs, while observation factors ground submaps with available data:
\begin{equation}
\label{eq:factor_graph}
\{\mathbf{x}^*\} = \arg\min_{\{\mathbf{x}\}} \left[ \sum_{(i,j) \in \mathcal{E}} r_{\text{cons}}(\mathbf{x}_i, \mathbf{x}_j) + \sum_{t \in \mathcal{O}} r_{\text{obs}}(\mathbf{x}_t) \right]
\end{equation}
where $\mathcal{E}$ denotes the set of overlapping submap pairs and $\mathcal{O}$ denotes submaps with observations. See Section~\ref{exp:global_map} for global map construction validation.

\section{Trajectory-based Topological Mapping}
\label{sec:topology}

\begin{figure}[t]
    \centering
    \includegraphics[width=0.95\linewidth]{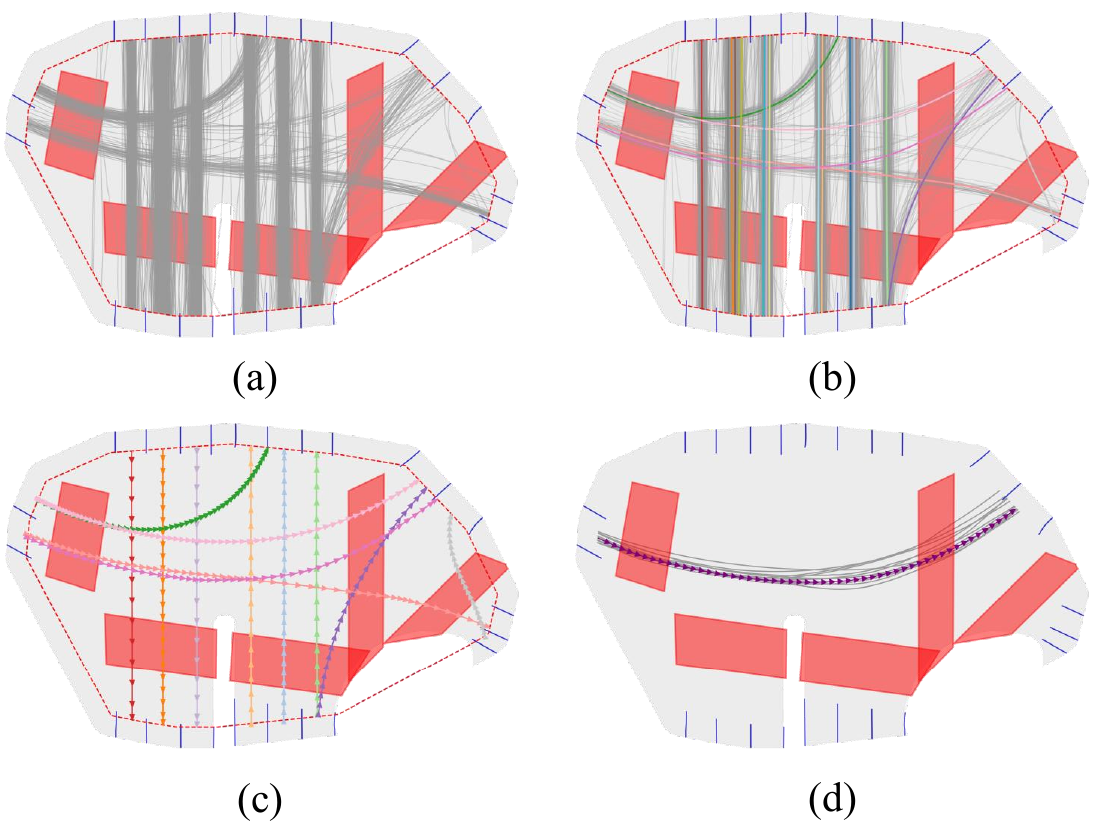}
    \caption{Topological mapping pipeline. (a) All raw trajectories (\textcolor[RGB]{169, 168, 170}{gray}) passing through the ROI (\textcolor[RGB]{220, 20, 60}{red} dashed box). (b) Clustered representative trajectories (indicated by different colors). (c) Representative trajectories after NMS. (d) Kinematically-optimized representative trajectory (medoid, \textcolor[RGB]{133,18,131}{purple}) and its associated non-medoid samples in a single cluster (\textcolor[RGB]{169, 168, 170}{gray}).}\label{fig:topo_pipeline}
\end{figure} 

Topological centerline maps \(\mathcal{M}_t\) provide long-horizon navigation structures and planning references, particularly in complex intersections. Human driving trajectories naturally capture connectivity and common driving patterns. We cluster raw trajectories to obtain representative ones (Section~\ref{subsec:traj_cluster}), then refine each medoid through kinematically constrained optimization to ensure smoothness and dynamic feasibility (Section~\ref{subsec:kinematic}), yielding drivable topological centerlines \(\mathcal{M}_t\).

\subsection{Trajectory Clustering}
\label{subsec:traj_cluster}

Given raw vehicle trajectories \(\mathcal{T} = \{\tau_i\}_{i=1}^N\) in a region of interest (ROI), as illustrated in Fig.~\ref{fig:topo_pipeline}\textcolor{red}{a}, we perform confidence-weighted k-medoids clustering to extract topological centerlines \(\mathcal{M}_t\) that summarize human-pattern driving routes. The optimization problem is formulated as:
\begin{equation}
\label{eq:topo_formulation}
    \mathcal{C}^* = \arg\min_{\mathcal{C} \subset \mathcal{T}} \sum_{i=1}^k \sum_{\tau_j \in C_i} w(\tau_j, c_i) \cdot d_\tau(\tau_j, c_i)
\end{equation}
where \(\mathcal{C} = \{c_1, \dots, c_k\} \subset \mathcal{T}\) is the set of \(k\) medoid trajectories, \(C_i\) denotes the cluster assigned to medoid \(c_i\), \(d_\tau\) measures trajectory distance (we use CDTW by default; Fréchet distance~\cite{buchin2019approximating} performs similarly under moderate noise), and \(w(\tau_j, c_i)\) reflects pairwise trajectory confidence. Unlike k-means, which averages trajectories to form centroids and may distort driving patterns, k-medoids selects actual trajectories as representatives. Additionally, it minimizes absolute distances rather than squared distances, making it more robust to outliers.

The pairwise confidence \(w(\tau_i, c_j)\) is computed as the harmonic mean of individual confidences \(w(\tau_i)\) and \(w(c_j)\), emphasizing reliable pairs:
\begin{equation}
\label{eq:dist_mat}
w(\tau_i, c_j) = \frac{2 w(\tau_i) w(c_j)}{w(\tau_i) + w(c_j)}.
\end{equation}

Each trajectory's confidence \(w(\tau_i)\) combines three weighted scores: \(\lambda_{\text{obs}} s_{\text{obs}} + \lambda_{\text{head}} s_{\text{head}} + \lambda_{\text{smooth}} s_{\text{smooth}}\), where \(\lambda\) are tunable weights. Specifically, \(s_{\text{obs}}\) is the proportion of observable waypoints across vehicles; \(s_{\text{head}}\) is the proportion of points with heading changes below \(\tau_{\theta}\) to suppress jitter; and \(s_{\text{smooth}} = \exp(-\bar{r}/\sigma_r)\) rewards geometric coherence, with \(\bar{r}\) as the average Chebyshev fitting error (see Section~\ref{subsec:curve_param}) and \(\sigma_r\) as a tunable parameter.

Given $k$, we solve Eq.~\ref{eq:topo_formulation} using FasterPAM~\cite{schubert2020fastpam} to obtain the medoid set $\mathcal{C}^*$. The dominant computational cost lies in constructing the distance and confidence matrices. In practice, when a new trajectory arrives, we update these matrices by appending its corresponding row and column rather than reconstructing the entire matrices. To avoid manually selecting $k$, we initialize with a larger value and then apply Non-Maximum Suppression (NMS), removing any medoid whose $d_{\tau}(\tau_i,\tau_j)$ to the medoid of a larger cluster falls below the threshold $\tau_{\text{nms}}$. The representatives after NMS are shown in Fig.~\ref{fig:topo_pipeline}\textcolor{red}{c}.

\subsection{Kinematic-Constrained Refinement}
\label{subsec:kinematic}

Medoids may retain observational artifacts. Fig.~\ref{fig:topo_pipeline}\textcolor{red}{d} illustrates kinematically constrained refinement for one cluster, producing an optimized medoid and associated samples. To obtain planning-ready paths similar to their cluster's trajectories while satisfying kinematic constraints, we refine each medoid $\hat{\tau}_k$ via optimization. To avoid ambiguity, we define the medoid as the collection of states $\hat{\tau}_k = \{x_i\}_{i=0}^n$ propagated by associated controls $\gamma =\{u_i\}_{i=0}^{n-1}$ with a time step $dt$:
\begin{align}
\label{eq:bspline_opt}
\hat{\tau}_k^* &= \argmin_{\gamma} \sum_{x \in \tau_k} \sum_{\tau_j \in C_k}  D_{\tau_j}(x) + \lambda_{\text{ctrl}} \sum_{u \in \gamma} \|u\|_2^2 \, dt  \nonumber \\
\text{s. t.  } & x_{i+1}= f(x_i, u_i) \quad \forall i\in \{0, \dots, n-1\}, \nonumber \\
& h_c(u) \leq 0 \quad \forall u\in \gamma, \quad h_s(x) \leq 0 \quad \forall x\in \hat{\tau}_k, 
\end{align}
where $C_k$ is the cluster of medoid $\hat{\tau}_k$, $D_{\tau_j}$ is the Euclidean distance field defined on $\tau_j$ and $\lambda_{\text{ctrl}}$ is the control regularization factor. The control constraint, state constraint, and bicycle transition function are denoted by $h_c(\cdot)$, $h_s(\cdot)$, and $f(\cdot)$. The optimization aligns the medoid to all cluster points via Euclidean distance fields and enforces the control smoothness while ensuring kinematic feasibility. We solve optimization with \textit{Augmented Lagrangian iterative linear quadratic regulator} (AL-iLQR)~\cite{mayne1966second} to obtain the refined medoid $\{\hat{\tau}_k^*\}$ constituting precise and drivable centerlines $\mathcal{M}_t$.

\begin{figure*}[t]
    \centering
    \includegraphics[width=0.96\linewidth]{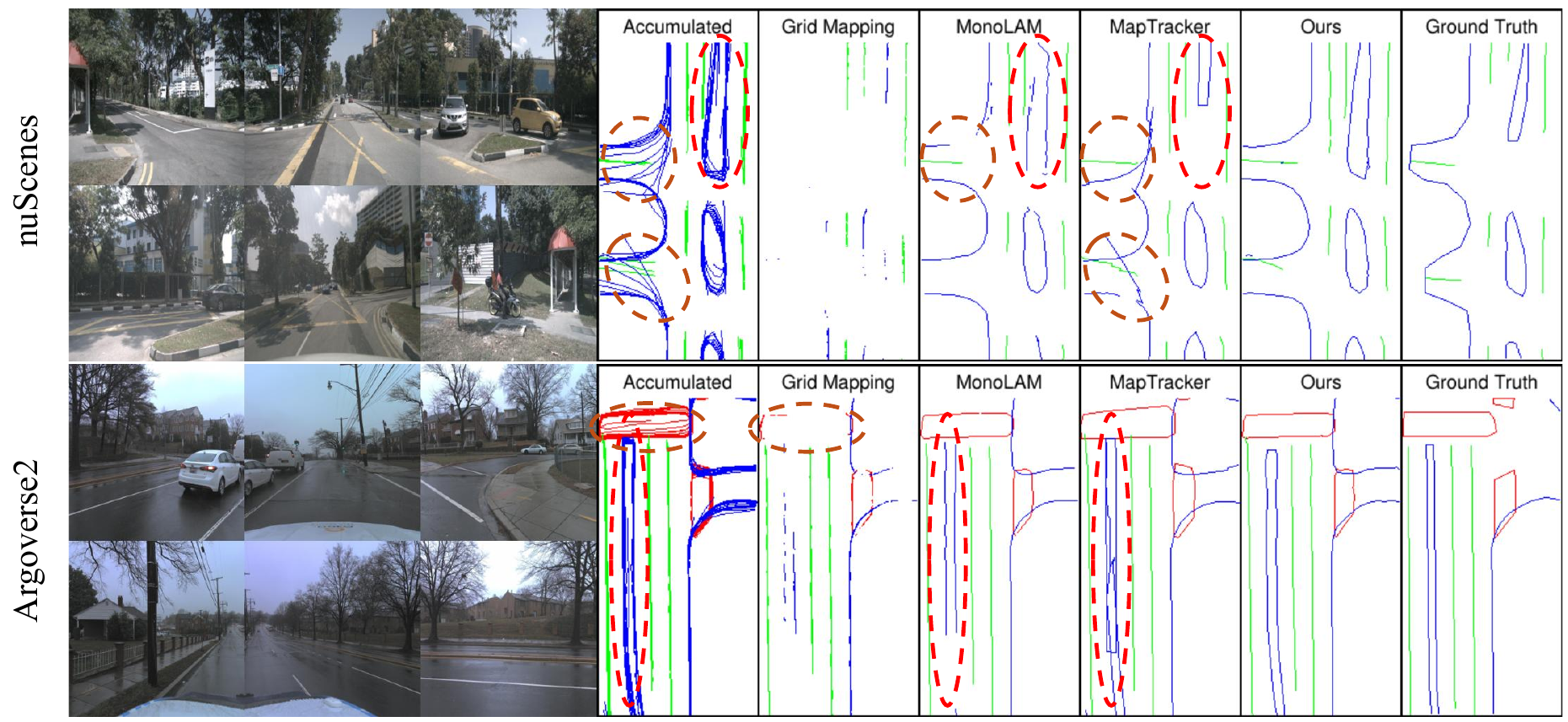}
    \caption{Visual comparison of single-session semantic mapping on nuScenes and Argoverse~2. Left to right: surround-view images, accumulated detection results~\cite{chen2024maptracker}, GridMap~\cite{qin2021light}, MonoLAM~\cite{qiao2023online}, MapTracker$^\dagger$~\cite{chen2024maptracker}, ours, and ground truth. \textcolor{brown}{Brown} dashed boxes highlight temporally inconsistent detection results. \textcolor{red}{Red} dashed boxes indicate performance under complex shapes, such as U-turns.}
    \label{fig:scene_mapping_vis}
    \vspace{-2em}
\end{figure*}

\section{Experiments}
\label{sec:experiments}
This section evaluates CSMapping on semantic and topological mapping. Section~\ref{subsec:single_run_mapping} studies single-session semantic mapping and demonstrates that our vectorized mapping (Section~\ref{sec:vec_mapping}) is robust to observation noise and complex road geometry. Section~\ref{subsec:crowdsourced_mapping} evaluates crowdsourced semantic mapping (Section~\ref{sec:diff_mapping}) on both public and proprietary datasets and shows applications to online detection, while Section~\ref{subsec:ablation} presents ablation studies and Section~\ref{subsec:scalability} analyzes scalability in both training and inference. Section~\ref{subsec:topo_exp} reports topological mapping results (Section~\ref{sec:topology}), including the impact of increasing trajectory data. Task setup, datasets, metrics, and implementation details are provided at the beginning of each subsection. All experiments are conducted on an Intel Core i7-13700K CPU and an NVIDIA GeForce RTX 4080 GPU. Throughout this section, we use consistent color coding: \textcolor{red}{red} for pedestrian crossings, \textcolor{blue}{blue} for boundaries, \textcolor{green}{green} for dividers, and \textcolor{gray}{gray} for SD maps. 

\subsection{Single-session Semantic Mapping}
\label{subsec:single_run_mapping}
\begin{table}[t]
    \centering
    \caption{Single-session semantic mapping on nuScenes and Argoverse~2.}
    \label{tab:scene_mapping}
    \setlength{\tabcolsep}{4pt}
    \resizebox{\linewidth}{!}{%
    \begin{tabular}{@{}ll ccc ccc ccc c@{}}
        \toprule
        \multicolumn{12}{c}{\textbf{nuScenes}} \\
        \addlinespace[2pt]
        \textbf{Model} & \textbf{Method} & \multicolumn{3}{c}{\textbf{Ped. Crossing}} & \multicolumn{3}{c}{\textbf{Divider}} & \multicolumn{3}{c}{\textbf{Boundary}} & \textbf{mIoU} \\
        \cmidrule(lr){3-5} \cmidrule(lr){6-8} \cmidrule(l){9-11} \cmidrule(l){12-12}
        & & IoU & Pre. & Rec. & IoU & Pre. & Rec. & IoU & Pre. & Rec. & \\
        \addlinespace[2pt] \midrule
        \multirow{4}{*}{MapTR} & MapTR~\cite{liao2024maptrv2}      & 9.16 & 16.08 & 14.47 & 27.89 & 51.77 & 41.23 & 30.58 & 55.42 & 47.59 & 22.54 \\
                               & GridMap~\cite{qin2021light}    & 3.92 & \cellcolor{blue!10}{22.02} & 5.46 & 20.25 & 60.64 & 26.57 & 21.68 & \cellcolor{blue!30}{73.32} & 29.87 & 15.28 \\
                               & MonoLAM~\cite{qiao2023online}    & \cellcolor{blue!10}{9.78} & 18.02 & \cellcolor{blue!10}{16.03} & \cellcolor{blue!10}{41.64} & \cellcolor{blue!30}{69.08} & \cellcolor{blue!10}{61.89} & \cellcolor{blue!10}{32.65} & 59.72 & \cellcolor{blue!10}{49.56} & \cellcolor{blue!10}{28.02} \\
                               & Ours       & \cellcolor{blue!30}{14.53} & \cellcolor{blue!30}{22.57} & \cellcolor{blue!30}{24.44} & \cellcolor{blue!30}{42.21} & \cellcolor{blue!10}{68.84} & \cellcolor{blue!30}{63.09} & \cellcolor{blue!30}{39.66} & \cellcolor{blue!10}{63.44} & \cellcolor{blue!30}{62.55} & \cellcolor{blue!30}{32.13} \\
        \midrule
        \multirow{5}{*}{MapTracker} & MapTracker~\cite{chen2024maptracker}   & \cellcolor{blue!10}{18.69} & 28.77 & \cellcolor{blue!10}{27.57} & 50.37 & 75.60 & 71.47 & \cellcolor{blue!10}{43.42} & 67.97 & 65.47 & \cellcolor{blue!10}{37.49} \\
                               & GridMap~\cite{qin2021light}    & 16.16 & \cellcolor{blue!30}{33.97} & 22.03 & 50.19 & \cellcolor{blue!30}{81.04} & 69.58 & 41.48 & \cellcolor{blue!30}{79.26} & 57.83 & 35.94 \\
                               & MonoLAM~\cite{qiao2023online}    & 16.42 & 29.21 & 23.40 & \cellcolor{blue!10}{53.09} & \cellcolor{blue!10}{78.88} & 75.44 & 41.44 & 68.62 & 61.52 & 36.98 \\
                               & MapTracker$^\dagger$~\cite{chen2024maptracker} & 12.63 & 20.22 & 20.25 & 51.36 & 73.50 & \cellcolor{blue!30}{78.10} & 42.24 & 64.56 & \cellcolor{blue!10}{67.13} & 35.41 \\
                               & Ours       & \cellcolor{blue!30}{20.30} & \cellcolor{blue!10}{31.42} & \cellcolor{blue!30}{29.43} & \cellcolor{blue!30}{53.19} & 78.23 & \cellcolor{blue!10}{76.26} & \cellcolor{blue!30}{47.42} & \cellcolor{blue!10}{72.96} & \cellcolor{blue!30}{70.61} & \cellcolor{blue!30}{40.30} \\
        \addlinespace[2pt] \midrule[.8pt] \addlinespace[2pt]
        \multicolumn{12}{c}{\textbf{Argoverse~2}} \\
        \addlinespace[2pt]
        \textbf{Model} & \textbf{Method} & \multicolumn{3}{c}{\textbf{Ped. Crossing}} & \multicolumn{3}{c}{\textbf{Divider}} & \multicolumn{3}{c}{\textbf{Boundary}} & \textbf{mIoU} \\
        \cmidrule(lr){3-5} \cmidrule(lr){6-8} \cmidrule(l){9-11} \cmidrule(l){12-12}
        & & IoU & Pre. & Rec. & IoU & Pre. & Rec. & IoU & Pre. & Rec. & \\
        \addlinespace[2pt] \midrule
        \multirow{4}{*}{MapTR} & MapTR~\cite{liao2024maptrv2}      & \cellcolor{blue!10}{13.64} & 22.40 & \cellcolor{blue!10}{21.41} & 35.29 & 54.13 & 50.41 & \cellcolor{blue!10}{37.80} & 63.34 & \cellcolor{blue!10}{57.37} & \cellcolor{blue!10}{28.91} \\
                               & GridMap~\cite{qin2021light}    & 10.16 & \cellcolor{blue!30}{28.04} & 14.50 & 35.55 & \cellcolor{blue!30}{58.74} & 50.73 & 37.31 & \cellcolor{blue!30}{73.69} & 54.62 & 27.67 \\
                               & MonoLAM~\cite{qiao2023online}    & 9.94 & 20.56 & 15.07 & \cellcolor{blue!10}{37.92} & \cellcolor{blue!10}{57.11} & \cellcolor{blue!10}{54.78} & 36.61 & 62.82 & 57.20 & 28.16 \\
                               & Ours       & \cellcolor{blue!30}{14.49} & \cellcolor{blue!10}{24.62} & \cellcolor{blue!30}{23.47} & \cellcolor{blue!30}{38.79} & 57.09 & \cellcolor{blue!30}{56.67} & \cellcolor{blue!30}{42.84} & \cellcolor{blue!10}{65.72} & \cellcolor{blue!30}{68.56} & \cellcolor{blue!30}{32.04} \\
        \midrule
        \multirow{5}{*}{MapTracker} & MapTracker~\cite{chen2024maptracker}   & \cellcolor{blue!30}{24.32} & 39.33 & \cellcolor{blue!30}{37.29} & 55.88 & 80.54 & 78.66 & 44.14 & 69.05 & 68.23 & \cellcolor{blue!10}{41.45} \\
                               & GridMap~\cite{qin2021light}    & 20.99 & \cellcolor{blue!30}{46.81} & 28.96 & 56.23 & \cellcolor{blue!30}{84.38} & 77.92 & 44.06 & \cellcolor{blue!30}{79.40} & 62.88 & 40.43 \\
                               & MonoLAM~\cite{qiao2023online}    & 16.73 & 34.56 & 24.41 & \cellcolor{blue!10}{57.55} & \cellcolor{blue!10}{82.56} & \cellcolor{blue!10}{81.43} & \cellcolor{blue!10}{45.56} & 73.10 & 68.17 & 39.95 \\
                               & MapTracker$^\dagger$~\cite{chen2024maptracker} & 17.05 & 27.85 & 28.18 & 50.85 & 73.08 & 78.84 & 42.83 & 65.65 & \cellcolor{blue!10}{68.92} & 36.91 \\
                               & Ours       & \cellcolor{blue!10}{23.03} & \cellcolor{blue!10}{39.95} & \cellcolor{blue!10}{34.29} & \cellcolor{blue!30}{57.83} & 82.21 & \cellcolor{blue!30}{82.07} & \cellcolor{blue!30}{47.17} & \cellcolor{blue!10}{74.02} & \cellcolor{blue!30}{70.89} & \cellcolor{blue!30}{42.68} \\
        \bottomrule
    \end{tabular}%
    }
    \begin{flushleft}
    \footnotesize{\textit{Note}: Best/second-best results are highlighted with \colorbox{blue!30}{blue}/\colorbox{blue!10}{light blue}. MapTracker$^\dagger$ refers to MapTracker's built-in mapping module. All metrics are in percentage (\%). Higher values are better ($\uparrow$). IoU, Pre., Rec., mIoU: IoU, Precision, Recall, mean IoU.}
    \end{flushleft}
\end{table}

This task evaluates single-session semantic mapping, validating the proposed vectorized mapping (Section~\ref{sec:vec_mapping}) for locally accurate maps under noisy observations and complex road elements. Unless otherwise stated, the hyperparameters are: VCA weighting \(\alpha=0.5\) and TLS threshold \(c=1.5\,\mathrm{m}\). Experiments are conducted on the \emph{nuScenes} validation set (150 scenes, 20 s each) and the \emph{Argoverse 2} sensor validation set (150 scenes, 15 s each). Following MonoLAM~\cite{qiao2023online}, we warp the constructed map to each keyframe local frame and rasterize to \(30\,\mathrm{m}\times60\,\mathrm{m}\) (matching the detector output size) at \(0.2\,\mathrm{m}\)/px. A \(3\times3\) morphological dilation is applied to mitigate discretization error. We report Intersection-over-Union (IoU), Precision, and Recall for pedestrian-crossing, divider, and boundary. Two map detection models are used: MapTR~\cite{liao2024maptrv2} and MapTracker~\cite{chen2024maptracker} (leveraging temporal cues).

Table~\ref{tab:scene_mapping} summarizes results on \emph{nuScenes} and \emph{Argoverse 2} with MapTR~\cite{liao2024maptrv2} and MapTracker~\cite{chen2024maptracker}. Across datasets and detectors, our vectorized mapping consistently increases IoU over frame-level detection and classical baselines while preserving a balanced precision–recall trade-off, demonstrating plug-and-play benefits. 

As highlighted by the brown dashed boxes in Fig.~\ref{fig:scene_mapping_vis}, the accumulation of frame-level detector outputs exhibits strong temporal inconsistency; consequently, GridMap's strict foreground voting attains high precision but sacrifices recall. For methods that treat curves as atomic instances (MonoLAM~\cite{qiao2023online}, MapTracker\(^\dagger\)~\cite{chen2024maptracker}), uneven confidence and instance ambiguity (Section~\ref{subsec:data_association}) lead to over-keeping or over-discarding, as shown in the brown dashed boxes in Fig.~\ref{fig:scene_mapping_vis}. Instead, our approach introduces VCA to perform grid mapping with a small background weight \(\alpha\) to preserve recall, then clips each curve using the grid map, keeping complete reliable observations while removing spurious segments. 

For complex curves (the red dashed boxes in Fig.~\ref{fig:scene_mapping_vis}), nearest-neighbor association (used in MonoLAM~\cite{qiao2023online} and MapTracker\(^\dagger\)~\cite{chen2024maptracker}) becomes ambiguous under arc-length parameterization, whereas our order-preserving CDTW matching resolves such cases. Furthermore, GNC-based TLS optimization rejects association outliers and produces smooth and accurate landmark estimates.

\subsection{Crowdsourced Semantic Mapping}
\label{subsec:crowdsourced_mapping}

\subsubsection{Task Setup and Evaluation}
This task evaluates generative mapping (Section~\ref{sec:diff_mapping}) using crowdsourced data from multiple vehicles. We use the public \emph{nuScenes} dataset for main evaluation and a larger proprietary dataset (Section~\ref{subsec:self_proprietary_datasets}) for practical validation. Since the two datasets differ substantially in scale and complexity, we adopt dataset-specific model configurations (detailed in \appref{app:diffusion_architecture}).

For latent-space optimization, default settings are: number of Gaussian bases \(K=256\) (the basis set is randomly instantiated once and reused across all experiments), DPM-Solver++ as the denoising scheduler, 3 denoising steps, no multi-start strategy (\(N_s=1\)), 60 optimization iterations, initialization of the latent by vectorized mapping with 5-step inversion, and foreground binarization threshold \(\tau_{\text{fg}}=\text{0.33}\).

\begin{table}[t]
\centering
\caption{Crowdsourced mapping benchmark. Comparison under \textbf{Visible} (masked) and \textbf{Full} areas.}
\label{tab:csm_bm}
\setlength{\tabcolsep}{4pt}
\resizebox{\linewidth}{!}{%
\begin{tabular}{@{}ll ccc ccc ccc c@{}}
\toprule
\multirow{2}{*}{\textbf{Area}} & \multirow{2}{*}{\textbf{Method}} & \multicolumn{3}{c}{\textbf{Ped. Crossing}} & \multicolumn{3}{c}{\textbf{Divider}} & \multicolumn{3}{c}{\textbf{Boundary}} & \textbf{mIoU} \\
\cmidrule(lr){3-5} \cmidrule(lr){6-8} \cmidrule(l){9-11} \cmidrule(l){12-12}
& & IoU & Pre. & Rec. & IoU & Pre. & Rec. & IoU & Pre. & Rec. & \\
\addlinespace[2pt] \midrule
\multirow{5}{*}{Visible} & GridMap~\cite{qin2021light} & 33.1 & \cellcolor{blue!30}{59.9} & 37.8 & 54.8 & \cellcolor{blue!30}{89.5} & 59.2 & 54.4 & \cellcolor{blue!30}{83.5} & 61.1 & 47.4 \\
& Vectorized Mapping & 40.2 & 53.1 & 51.6 & 66.2 & \cellcolor{blue!10}{81.6} & 77.4 & 54.3 & \cellcolor{blue!10}{78.2} & 64.0 & 53.6 \\
& Classifier Guidance~\cite{dhariwal2021diffusion} & 44.7 & 54.6 & 57.7 & 65.9 & 73.8 & \cellcolor{blue!30}{85.4} & 58.9 & 69.3 & \cellcolor{blue!30}{79.0} & 56.5 \\
& Ours (w/o SD Map) & \cellcolor{blue!10}{46.8} & 57.0 & \cellcolor{blue!10}{58.5} & \cellcolor{blue!10}{67.2} & 79.0 & 81.2 & \cellcolor{blue!10}{60.2} & 75.1 & 74.4 & \cellcolor{blue!10}{58.1} \\
& Ours & \cellcolor{blue!30}{47.9} & \cellcolor{blue!10}{58.2} & \cellcolor{blue!30}{59.6} & \cellcolor{blue!30}{68.8} & 79.3 & \cellcolor{blue!10}{83.3} & \cellcolor{blue!30}{61.3} & 75.2 & \cellcolor{blue!10}{76.0} & \cellcolor{blue!30}{59.3} \\
\addlinespace[2pt] \midrule[.8pt] \addlinespace[2pt]
\multirow{5}{*}{Full} & GridMap~\cite{qin2021light} & 32.3 & \cellcolor{blue!30}{59.9} & 36.6 & 51.2 & \cellcolor{blue!30}{89.5} & 54.9 & 49.0 & \cellcolor{blue!30}{83.5} & 54.4 & 44.2 \\
& Vectorized Mapping & 39.3 & 53.1 & 50.3 & 62.2 & \cellcolor{blue!10}{81.6} & 72.2 & 49.1 & \cellcolor{blue!10}{78.1} & 57.1 & 50.2 \\
& Classifier Guidance~\cite{dhariwal2021diffusion} & 43.7 & 54.4 & 56.3 & 62.4 & 73.1 & \cellcolor{blue!30}{80.5} & 53.7 & 67.9 & \cellcolor{blue!30}{71.5} & 53.3 \\
& Ours (w/o SD Map) & \cellcolor{blue!10}{45.6} & 56.5 & \cellcolor{blue!10}{57.2} & \cellcolor{blue!10}{63.4} & 78.6 & 76.2 & \cellcolor{blue!10}{54.7} & 73.9 & 67.2 & \cellcolor{blue!10}{54.6} \\
& Ours & \cellcolor{blue!30}{46.7} & \cellcolor{blue!10}{57.4} & \cellcolor{blue!30}{58.4} & \cellcolor{blue!30}{65.5} & 78.2 & \cellcolor{blue!10}{79.6} & \cellcolor{blue!30}{56.2} & 73.1 & 70.0 & \cellcolor{blue!30}{56.1} \\
\bottomrule
\end{tabular}}
\begin{flushleft}
\footnotesize{\textit{Note}: Best/second-best results are highlighted with \colorbox{blue!30}{blue}/\colorbox{blue!10}{light blue}. All metrics are in percentage (\%). Higher values are better ($\uparrow$). IoU, Pre., Rec., mIoU: IoU, Precision, Recall, mean IoU.}
\end{flushleft}
\end{table}

\subsubsection{Evaluation on nuScenes}
\label{exp:csm_quality}

For nuScenes, we split training and test sets by geographic regions. The test set comprises 224 samples, each covering \(100\,\mathrm{m}\times100\,\mathrm{m}\) with on average 12.8 trajectories per sample. The training set contains 190{,}000 samples with the same spatial extent as the test set. The distance between the centers of any training sample and any test sample is \(>10\,\mathrm{m}\) to avoid train–test leakage of ground-truth patterns. Each sample is rasterized to \(256\times256\times3\), where the three channels correspond to pedestrian-crossing, divider, and boundary. The latent's shape is $16\times16\times16$. Training uses 200{,}000 steps with batch size 192 on 4 NVIDIA RTX 3090 GPUs.

We evaluate both reconstruction and generation by partitioning the area into: \emph{Visible}, the sensor-observed region (robust reconstruction), and \emph{Full}, the entire region including unobserved areas (plausible completion). Metrics are IoU, Precision, and Recall.

Results are shown in Table~\ref{tab:csm_bm}, comparing against vectorized mapping, GridMap~\cite{qin2021light}, and classifier guidance~\cite{dhariwal2021diffusion}. Our method attains the highest IoU in both \emph{Visible} and \emph{Full} areas. As in Section~\ref{subsec:single_run_mapping}, GridMap's strict decision rule yields high precision but low recall, whereas vectorized mapping achieves a more balanced precision–recall trade-off. 

Classifier guidance~\cite{dhariwal2021diffusion} (CG) injects observation-likelihood gradients into each denoising step to steer generation, but does not optimize the initial noise latent \(x_T\). In contrast, we perform MAP optimization directly on \(x_T\) while preserving the learned prior. To ensure fair comparison, we strengthen the CG baseline by initializing with inversion, using 25 denoising steps, and adopting adaptive guidance coefficients; even so, our approach yields higher precision and IoU. We attribute this advantage to: (i) Gaussian-basis reparameterization, which enforces stronger Gaussian constraints and improves plausibility; and (ii) iterative gradient-based optimization on the initial latent, which is more effective than step-wise gradient injection for accurate reconstruction. 

We also compare with and without SD Map conditioning: SD maps encode road-level skeleton information that steers generation toward structurally plausible maps, aiding convergence and improving quality in both \emph{Visible} and \emph{Full} areas. Additional analyses on controllability and robustness to invalid SD maps are provided in Section~\ref{subsec:scalability_training}.

\begin{figure*}[t]
    \centering
    \includegraphics[width=0.8\linewidth]{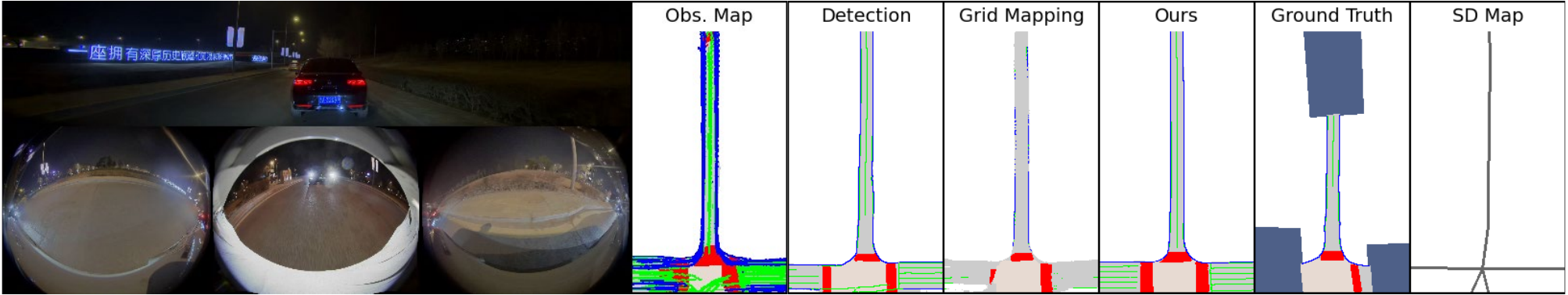}
    \includegraphics[width=0.8\linewidth]{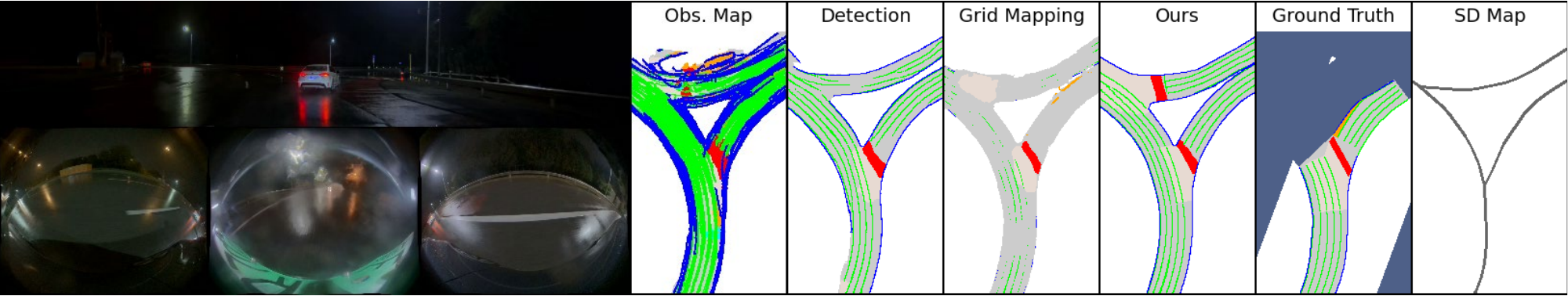}
    \caption{Qualitative results of crowdsourced semantic mapping on the proprietary dataset. \textcolor[RGB]{78,96,136}{Dark blue} areas represent unannotated pixels, which are excluded from evaluation.}
    \label{fig:zyt_exp}
\end{figure*}

\begin{table}[t]
    \centering
    \caption{Evaluation on proprietary datasets (IoU).}
    \label{tab:zyt_bm}
    \setlength{\tabcolsep}{3pt}
    \resizebox{0.9\linewidth}{!}{%
    \begin{tabular}{@{}lcccccc c@{}}
        \toprule
        \textbf{Method} & \textbf{Road} & \textbf{Intersect.} & \textbf{Rev. Lane} & \textbf{Ped. Cross.} & \textbf{Lane} & \textbf{Border} & \textbf{mIoU} \\
        \midrule
        Detection & 88.7 & 51.3 & 47.9 & 37.9 & 31.9 & 25.9 & 47.3 \\
        GridMap~\cite{qin2021light} & \cellcolor{blue!10}{92.7} & \cellcolor{blue!30}{60.9} & \cellcolor{blue!30}{60.9} & \cellcolor{blue!10}{45.3} & \cellcolor{blue!10}{32.9} & \cellcolor{blue!10}{26.6} & \cellcolor{blue!10}{53.2} \\
        Ours & \cellcolor{blue!30}{92.8} & \cellcolor{blue!10}{60.6} & \cellcolor{blue!10}{57.0} & \cellcolor{blue!30}{47.7} & \cellcolor{blue!30}{40.8} & \cellcolor{blue!30}{33.5} & \cellcolor{blue!30}{55.4} \\
        \bottomrule
    \end{tabular}%
    }
\end{table}
\subsubsection{Evaluation on Proprietary Datasets}
\label{subsec:self_proprietary_datasets}

We utilize a large-scale proprietary dataset collected from major Chinese cities for practical validation. The dataset comprises a training set of 400 million HD-map samples and a test set of 25,000 samples, which are geographically disjoint to ensure zero leakage. Each sample is represented as a \(272\times160\times6\) multi-channel raster with a resolution of \(0.6\,\mathrm{m}\)/px. The six semantic channels correspond to pedestrian crossings, dividers, boundaries, intersections, reversal lanes, and road areas. Unlike nuScenes where we use perception outputs from MapTR~\cite{liao2024maptrv2}, here we train a map diffusion model conditioned on surround-view images and SD maps to directly produce map detections, which then serve as the map generation model $G$ for our generative mapping framework (see \appref{app:diffusion_architecture} for implementation details).

Ground-truth maps for the test set were manually annotated based on high-density point clouds captured by LiDAR-equipped survey vehicles. Regions lacking annotation due to occlusion or limited LiDAR sensor field-of-view (FOV) are masked out during evaluation. We benchmark our method against standard detection and grid mapping baselines. Quantitative and qualitative comparisons are presented in Table~\ref{tab:zyt_bm} and Fig.~\ref{fig:zyt_exp}, respectively, with additional examples in \appref{app:proprietary}.

\begin{figure*}[t]
    \centering
    \includegraphics[width=0.9\linewidth]{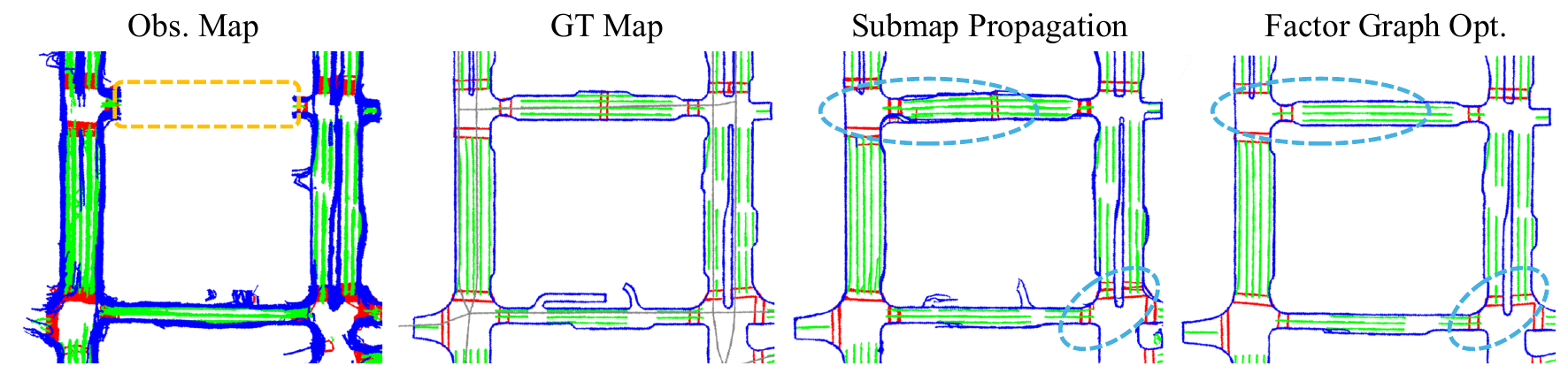}
    \caption{Global map reconstruction. From left to right: input crowdsourced observations; ground-truth map (SD map is colored by \textcolor{gray}{gray}); global map by sequential submap propagation; final result after joint factor graph optimization. \textcolor{orange}{Orange} dashed circles mark unobserved regions, which our method plausibly completes. \textcolor{blue}{Blue} dashed circles indicate improved inter-submap consistency after optimization.}
    \label{fig:global_map}
\end{figure*}

\subsubsection{Global Map Generation}
\label{exp:global_map}
Fig.~\ref{fig:global_map} demonstrates global map construction via factor-graph optimization. In this example, 22 latents represent submaps with \(\sim10\,\mathrm{m}\) spacing between adjacent centers. The optimization runs on one GPU and uses \(\sim13\) GB memory. As shown in the figure, our method plausibly completes unobserved regions while maintaining smooth transitions between generated and reconstructed areas, and improves inter-submap consistency after optimization.

\begin{table}[t]
    \centering
    \caption{Online perception enhancement on HRMapNet with different prior maps. All metrics are in percentage (\%).}
    \label{tab:csm_hrmapnet_enhancement}
    \resizebox{0.90\columnwidth}{!}{%
    \begin{tabular}{@{}l cccc@{}}
        \toprule
        Prior Map & AP\textsubscript{div} $\uparrow$ & AP\textsubscript{ped} $\uparrow$ & AP\textsubscript{bou} $\uparrow$ & mAP $\uparrow$ \\
        \midrule
        HRMap~\cite{zhang2024hrmapnet} (Baseline) & 72.2 & 73.0 & 73.9 & 73.0 \\
        Our CSM & \cellcolor{blue!10}{86.2} & \cellcolor{blue!10}{76.0} & \cellcolor{blue!10}{82.7} & \cellcolor{blue!10}{81.6} \\
        GT Map (Oracle)  & \cellcolor{blue!30}{87.7} & \cellcolor{blue!30}{87.7} & \cellcolor{blue!30}{86.2} & \cellcolor{blue!30}{87.2} \\
        \bottomrule
    \end{tabular}%
    }
\end{table}
\subsubsection{Enhancing Online Detection}
\label{subsec:enhancing_online_perception}
Constructed crowdsourced maps can serve as priors for online detection. We use HRMapNet~\cite{zhang2024hrmapnet}, which accepts rasterized prior maps. We compare three priors on nuScenes: (1) HRMapNet’s original rasterized prior (grid-mapping variant), (2) our generated crowdsourced map, and (3) the ground truth. As shown in Table~\ref{tab:csm_hrmapnet_enhancement}, introducing CSM as a prior improves mean Average Precision (mAP) across all categories and outperforms HRMapNet’s original prior.

\subsection{Ablation Studies for Latent-Space Optimization}
\label{subsec:ablation}

\begin{figure}[t]
    \centering
    \begin{subfigure}[t]{0.49\linewidth}
        \centering
        \includegraphics[width=\linewidth]{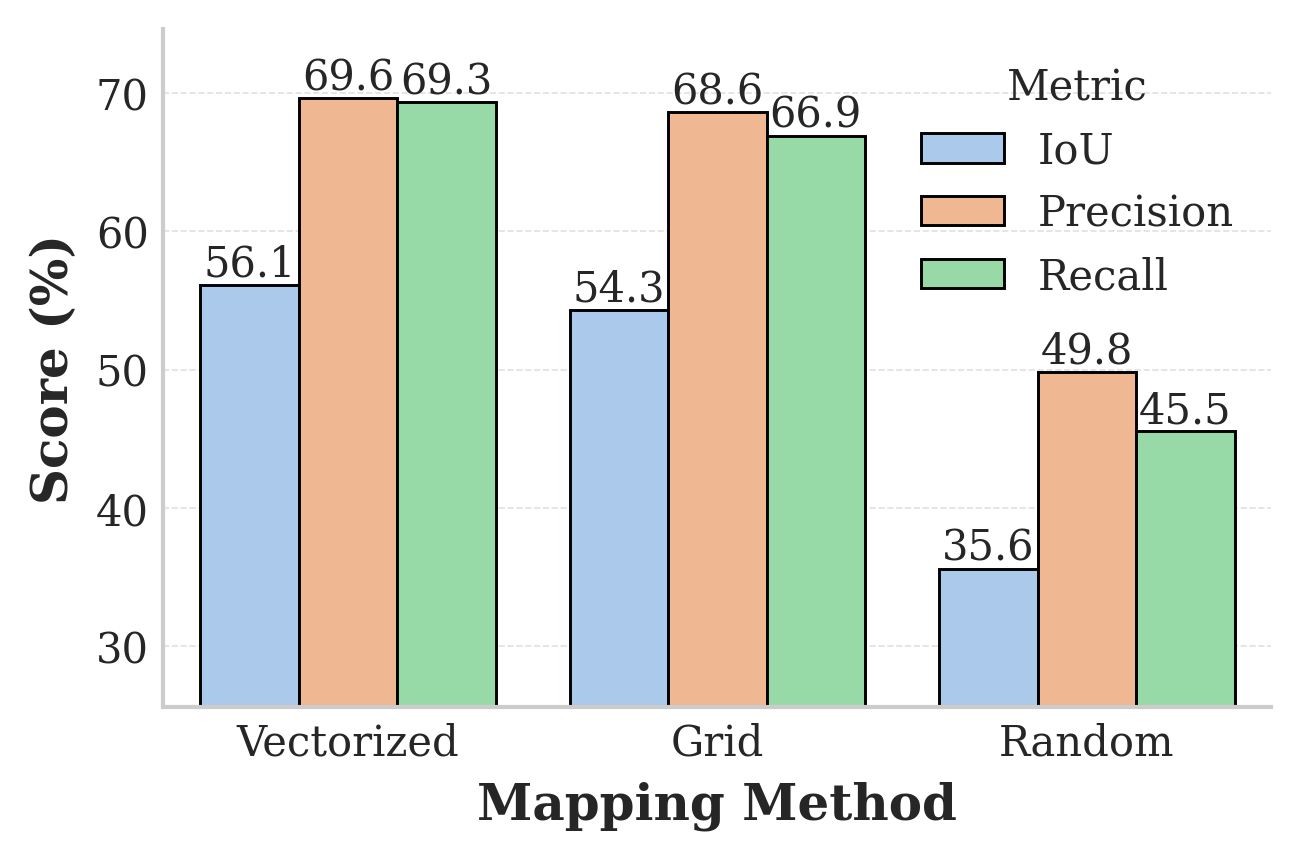}
        \caption{Initial Map}
        \label{fig:ablation_initial_map}
    \end{subfigure}
    \hfill
    \begin{subfigure}[t]{0.49\linewidth}
        \centering
        \includegraphics[width=\linewidth]{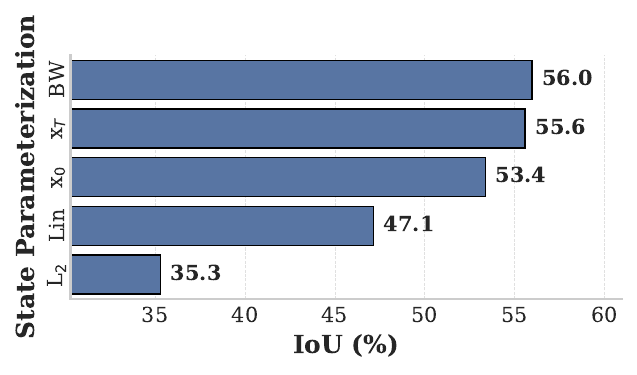}
        \caption{Optimization State}
        \label{fig:ablation_opt_state}
    \end{subfigure}

    \vspace{1em}

    \begin{subfigure}[t]{0.49\linewidth}
        \centering
        \includegraphics[width=\linewidth]{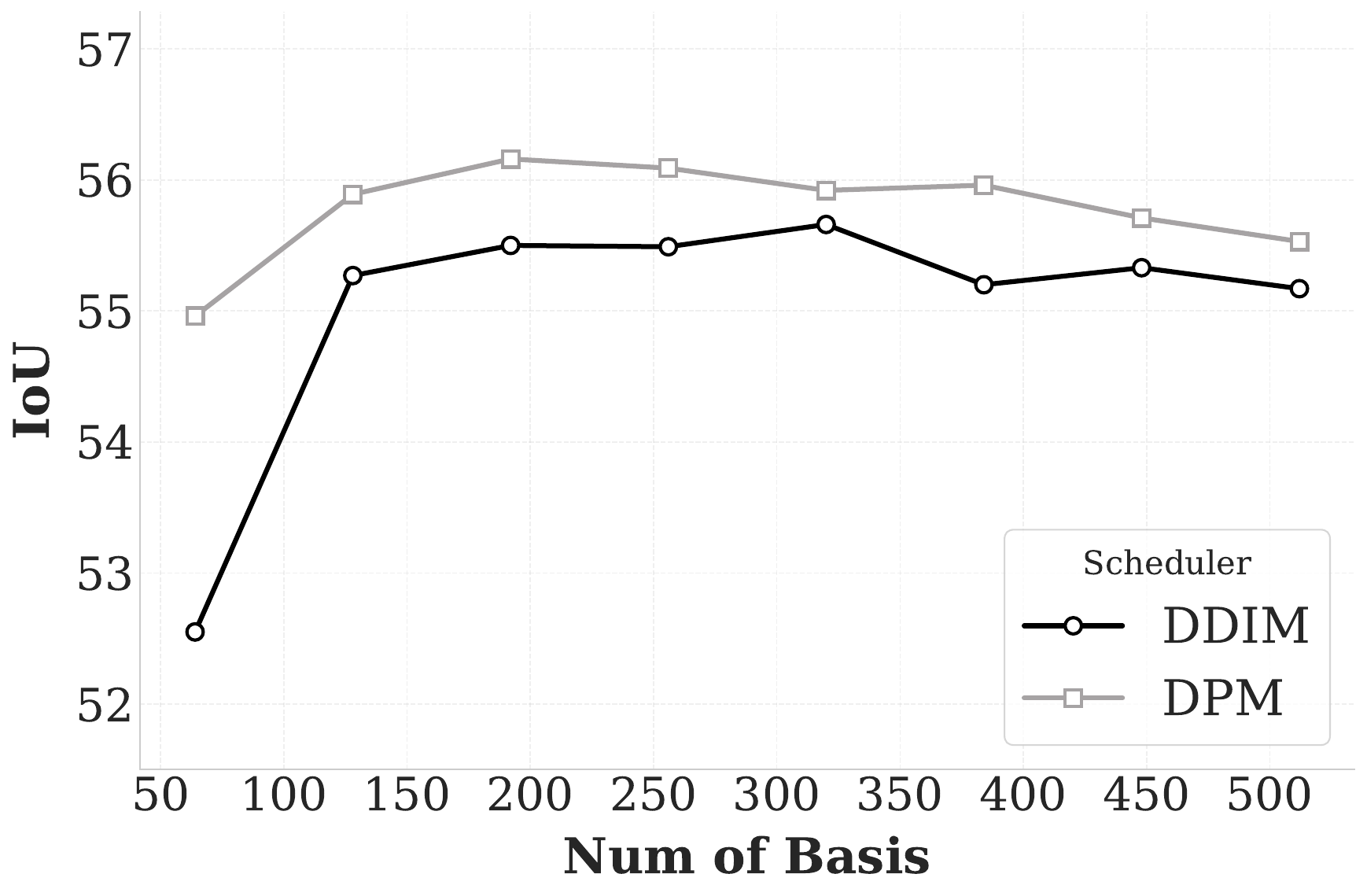}
        \caption{Number of Basis}
        \label{fig:ablation_num_basis}
    \end{subfigure}
    \hfill
    \begin{subfigure}[t]{0.49\linewidth}
        \centering
        \includegraphics[width=\linewidth]{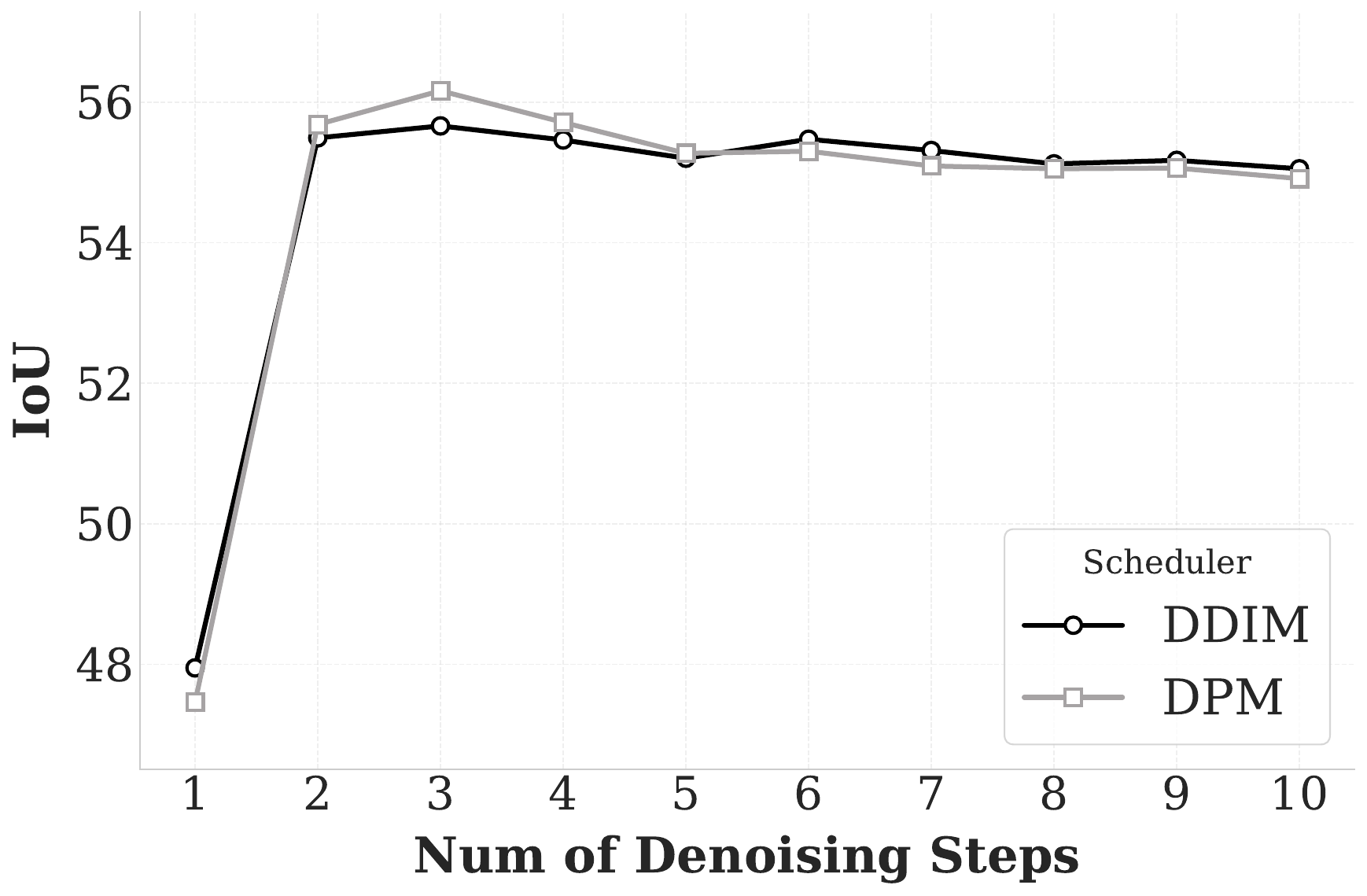}
        \caption{Denoising Steps}
        \label{fig:ablation_denoising_steps}
    \end{subfigure}

    \caption{Ablation results. All metrics are averaged over 3 categories. (a) IoU with different initial map sources; (b) IoU across optimization strategies; (c) impact of basis count; (d) effect of denoising steps.}
    \label{fig:ablation_quantitative}
\end{figure}

\begin{figure}[t]
    \centering
    \includegraphics[width=0.96\linewidth]{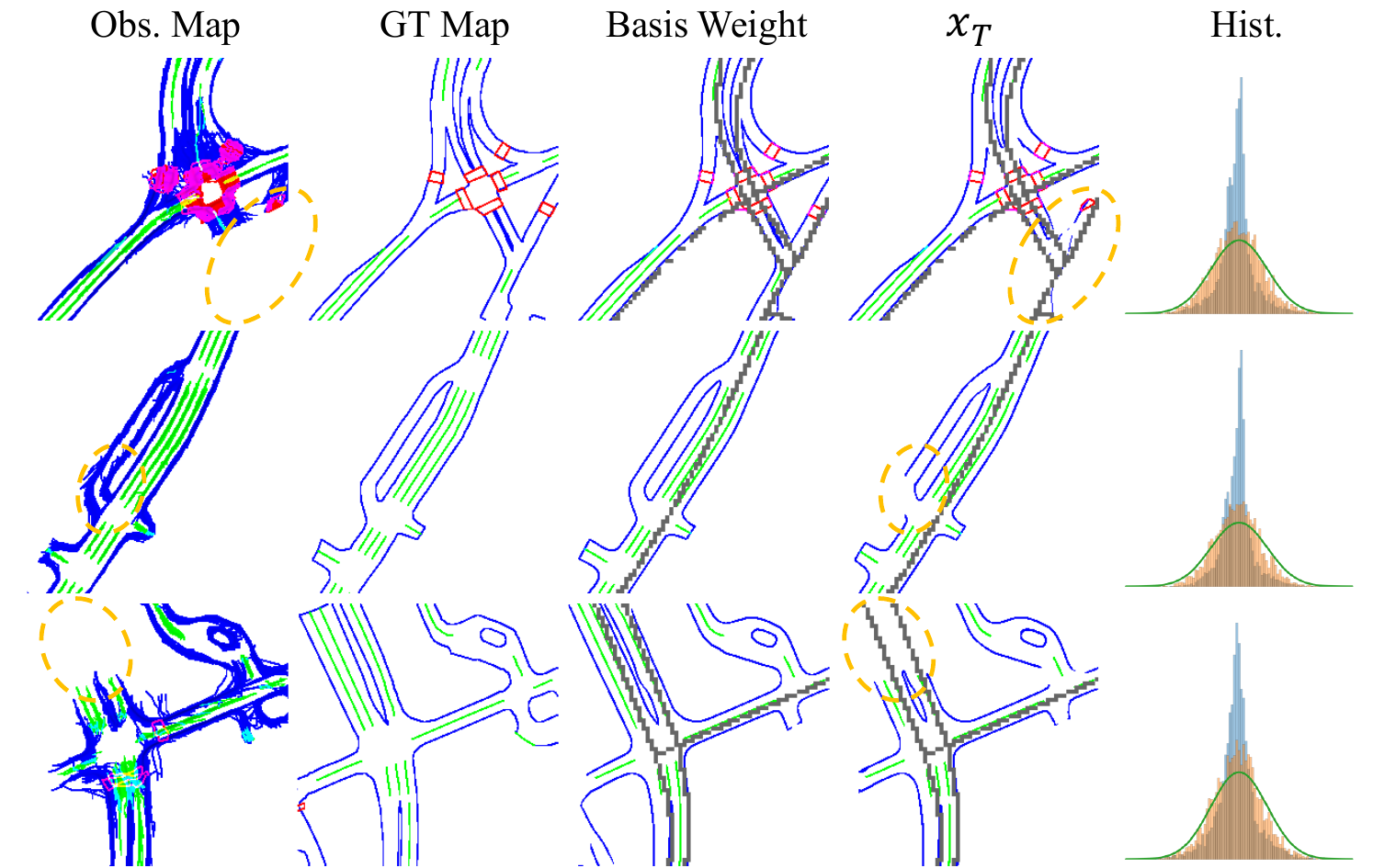}
    \caption{Visualization comparison of optimization strategies: Gaussian-basis reparameterization (ours) and $x_T$ with $\sqrt{d}$ norm constraint on three evaluation samples (No.~189, No.~191, and No.~215). From left to right: input observation, ground-truth map, optimized map by our method, optimized map by $x_T$ (with $\sqrt{d}$ norm constraint), and histogram of optimized latent distribution. \textcolor{orange}{Orange} dashed circles mark unobserved areas where $x_T$ loses generation ability. In the histogram, \textcolor[RGB]{255, 198, 146}{orange}, \textcolor[RGB]{155, 194, 223}{blue}, and \textcolor[RGB]{44, 160, 44}{green} represent optimized latents by our method, $x_T$ (with $\sqrt{d}$ norm constraint), and true Gaussian distribution, respectively.}
    \label{fig:opt_state_ablation}
\end{figure}

\begin{figure}[t]
    \centering
    \includegraphics[width=0.96\linewidth]{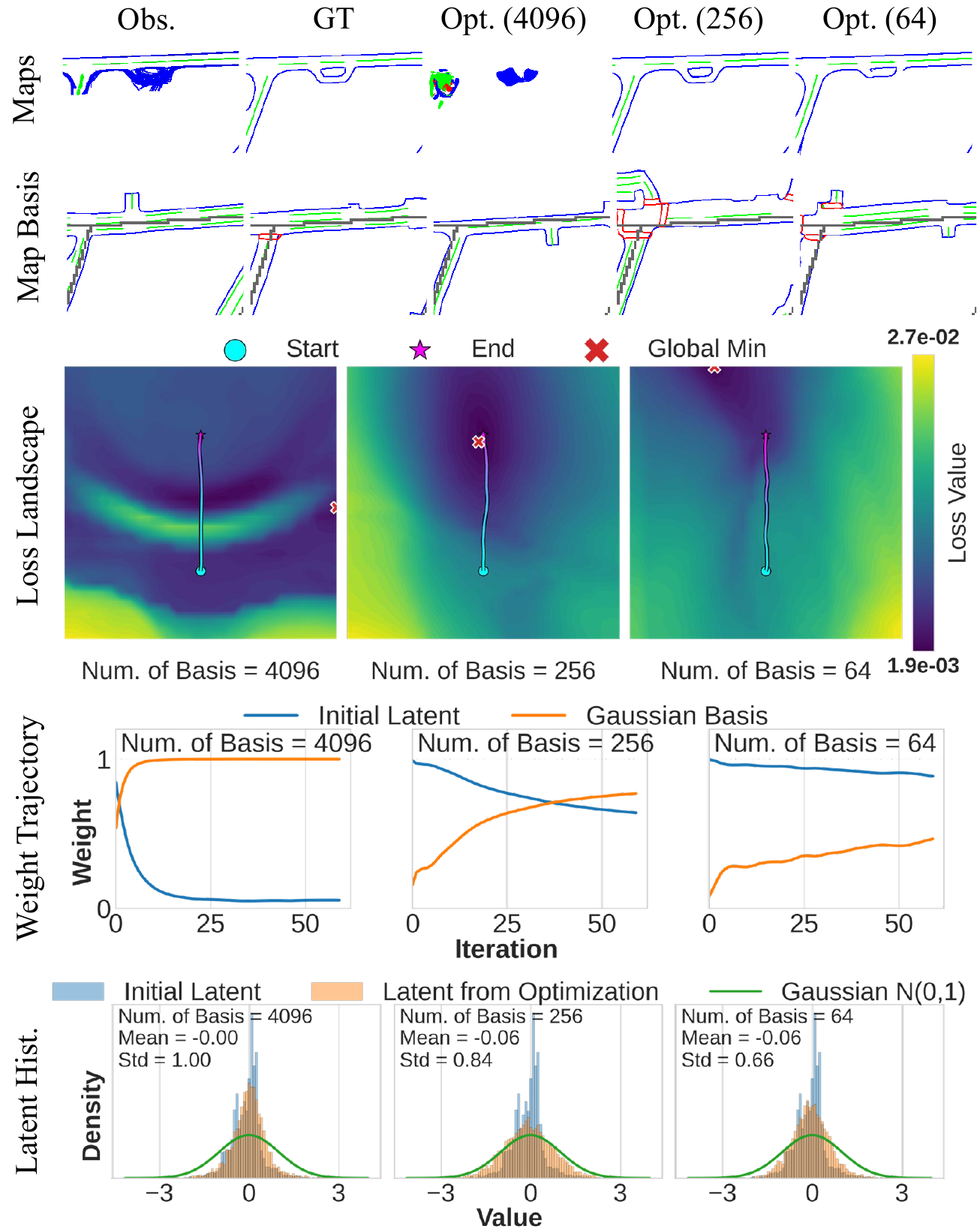}
    \caption{Analysis of basis number in Gaussian-basis reparameterization (sample No.~171). Row 1: observations, ground-truth map, and reconstructions with different basis numbers; Row 2: maps generated from different Gaussian basis sets under the same SD-map condition; Row 3: loss landscapes of different basis numbers along with historical optimization trajectories; Row 4: weight trajectories where \textcolor[RGB]{255, 127, 13}{orange} and \textcolor[RGB]{31, 120, 181}{blue} represent the weight norm of Gaussian bases and the inversion-based initial latent, respectively; Row 5: latent distributions where \textcolor[RGB]{255, 198, 146}{orange}, \textcolor[RGB]{155, 194, 223}{blue}, and \textcolor[RGB]{44, 160, 44}{green} represent histograms of optimized latent distribution, initial latent, and true Gaussian distribution, respectively.}
    \label{fig:num_basis}
\end{figure}

\begin{figure}[t]
    \centering
    \includegraphics[width=0.618\linewidth]{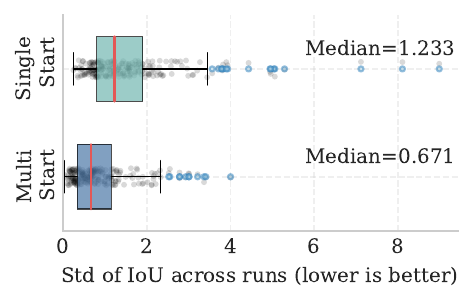}
    \caption{Stability of IoU across different Gaussian-basis groups at fixed \(K\) under \emph{Full}-area evaluation.}
    \label{fig:basis_stable}
\end{figure}

\subsubsection{Map Initialization}
\label{subsec:ab_initialization}
As shown in Fig.~\ref{fig:ablation_initial_map}, we compare initialization strategies: random latent, inversion from grid mapping~\cite{qin2021light}, and inversion from vectorized mapping. Incorporating prior via vectorized-map inversion outperforms random and grid mapping based initializations, consistent with Table~\ref{tab:csm_bm} where vectorized mapping surpasses grid mapping. Both inversion-based initializations outperform random. In summary, classical mapping thus provides an effective warm-start for latent-space optimization, and improvements in classical mapping and inversion further translate to our performance.

\subsubsection{Re-parameterization of Optimization Variables}
\label{subsec:ab_opt_state}
As shown in Fig.~\ref{fig:ablation_opt_state}, we compare four strategies for constraining the latent to a Gaussian distribution: (1) optimize \(x_T\) with \(\lVert x_T\rVert_2=\sqrt{d}\) (latent dimension \(d\); cf. PULSE~\cite{menon2020pulse}); (2) directly optimize \(x_0\); (3) optimize a linear transform applied to \(x_T^{\text{Init}}\) (denoted Lin); and (4) optimize \(x_T\) with an \(\ell_2\) regularizer toward 0 (denoted \(\text{L}_2\)). Our Gaussian-basis reparameterization is denoted as BW (basis weight).

Directly optimizing \(x_0\) discards the prior and relies solely on observation likelihood, reducing robustness to outliers. Although Lin and \(\text{L}_2\) attempt to constrain \(x_T\) to a Gaussian, they are ineffective: Lin's capacity is limited (essentially scale and bias), and \(\text{L}_2\) is inappropriate because high-dimensional Gaussian mass concentrates near radius \(\sqrt{d}\)~\cite{menon2020pulse}. Imposing \(\lVert x_T\rVert_2=\sqrt{d}\) yields acceptable performance overall but, as the orange dashed circles in Fig.~\ref{fig:opt_state_ablation} show, loses generation ability in unobserved areas: matching the norm alone still deviates from the true Gaussian (shown in the histogram in Fig.~\ref{fig:opt_state_ablation}) and harms generation quality.

\begin{figure*}[t]
    \centering
    \includegraphics[width=0.95\linewidth]{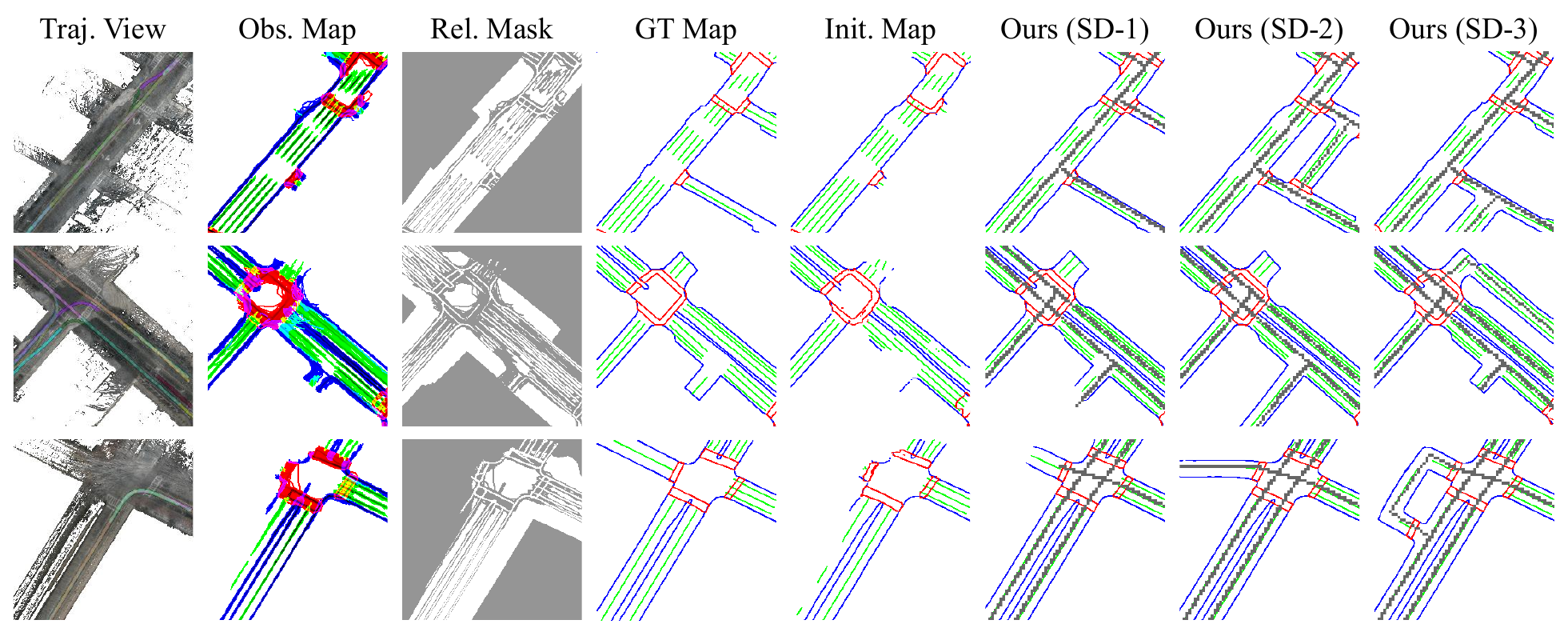}
    \caption{Controllable generation with different SD-Map conditions. Each row represents a sample. Left to right: top-down view of road and vehicle trajectories, observation map, reliability mask (Rel. Mask), ground truth map, initial map from vector mapping, and optimized maps under different SD-Map conditions (gray skeletons).}
    \label{fig:sd_control}
\end{figure*} 

\begin{figure}[t]
    \centering
    \includegraphics[width=0.98\linewidth]{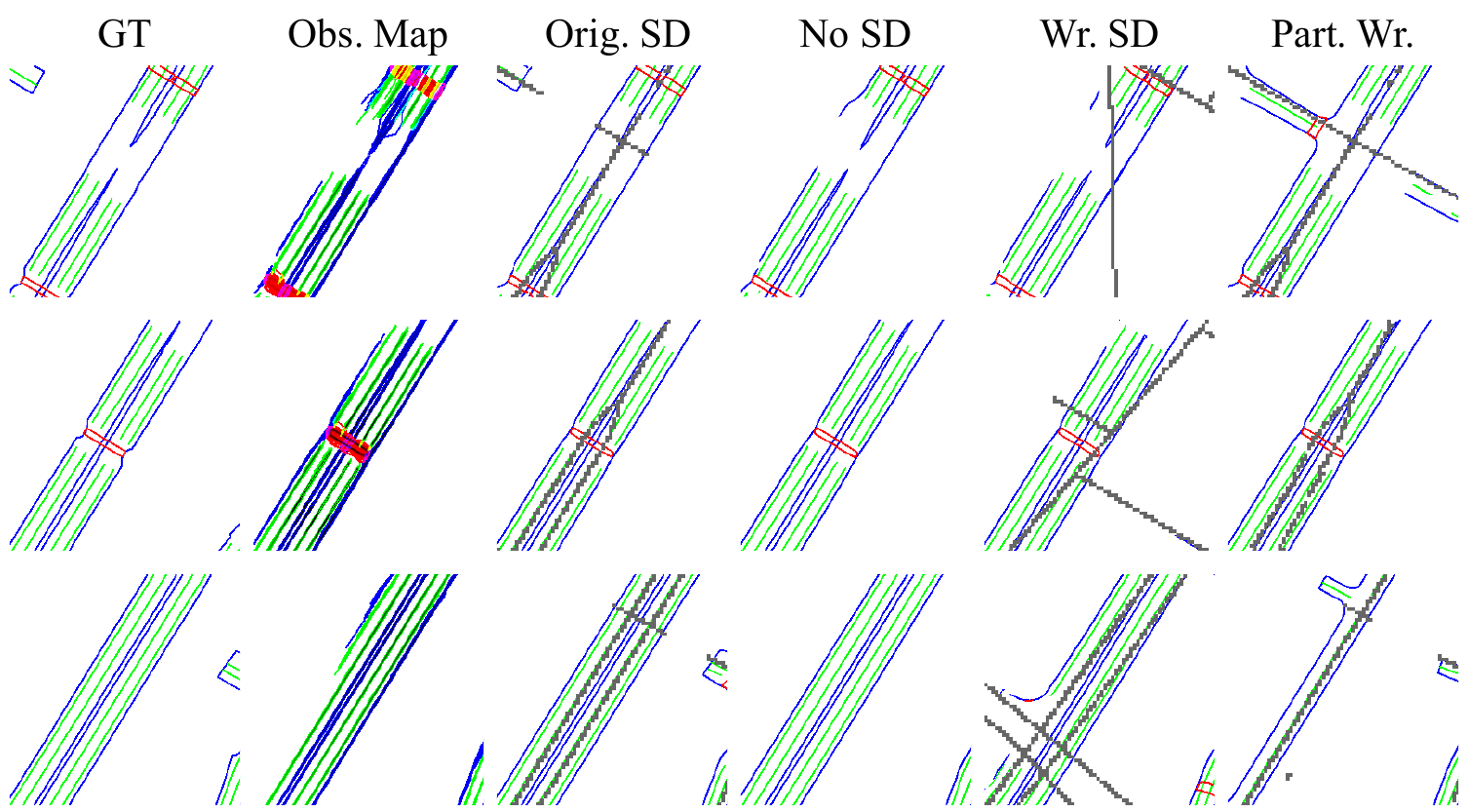}
    \caption{Generation under invalid SD-Map conditions. Each row is a scene. Columns (left to right): ground truth, observations, and our outputs under original SD (Orig. SD), no SD (No SD), entirely wrong SD (Wr. SD), and partially wrong SD (Part. Wr.) guidance.}
    \label{fig:invalid_sd}
\end{figure}

\begin{figure}[t]
    \centering
    \includegraphics[width=0.618\linewidth]{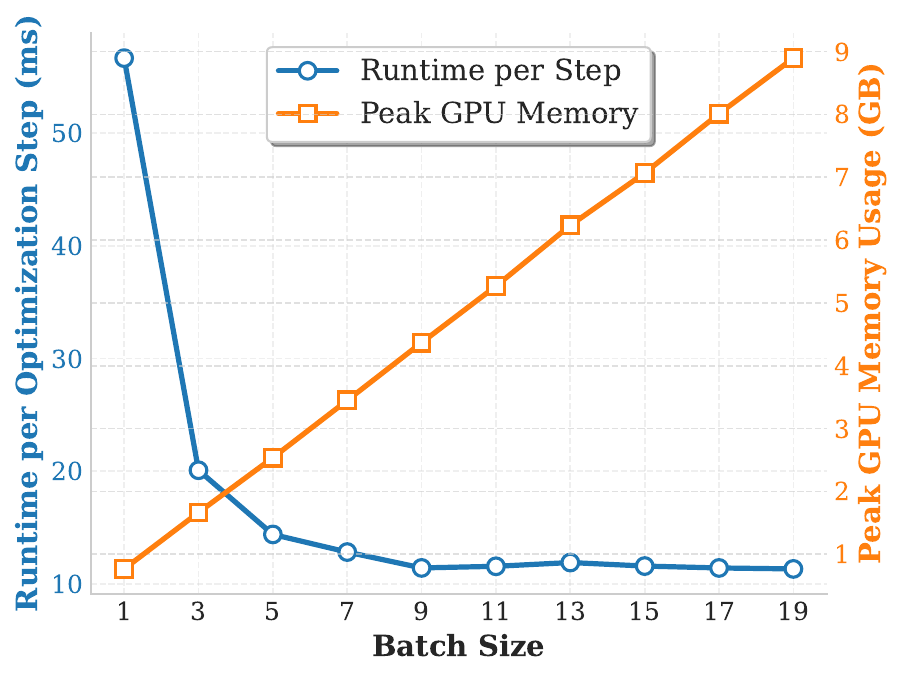}
    \caption{Runtime and GPU memory usage versus batch size.}
    \label{fig:runtime_analysis}
\end{figure}

\subsubsection{Number of Gaussian Bases}
\label{subsec:ab_num_basis}
The number of bases is a key factor in Gaussian-basis reparameterization. As shown in Fig.~\ref{fig:ablation_num_basis}, \(K\in[128,320]\) yields strong results for latent dimension \(d=4096\). Fig.~\ref{fig:num_basis} provides qualitative analysis (sample No.~171). Comparing reconstructions across different basis numbers (row 1) reveals the impact of basis count on reconstruction quality. The second row shows maps generated from different Gaussian basis sets under the same SD-map condition, demonstrating that different bases explore diverse latent directions while the SD map provides structural guidance. 

The loss landscape visualization (row 3) shows that with too many bases (4096), a noisy landscape hinders convergence, while with too few (64), limited control impedes reaching the global optimum, causing reconstruction to degrade in extremely noisy areas. Weight analysis (rows 4--5) reveals that with 4096 bases, the remaining weights dominate (norm near 1) and deviate from a true Gaussian despite preserving mean 0 and unit variance. With 64 bases, the remaining-weight norms stay relatively small, indicating limited influence and reduced expressiveness. Therefore, a moderate \(K\) is necessary to balance optimization stability and expressiveness.

Fig.~\ref{fig:basis_stable} studies result consistency across different Gaussian-basis groups at fixed \(K\). We run 20 seeds to generate different basis sets and compute the per-sample standard deviation of IoU under \emph{Full}-area evaluation. The distribution shows a small median (\(\sim1.2\)) but a long tail (up to \(\sim8\)), attributable to imperfect initialization or weak observation supervision. With a multi-start strategy (\(N_s=20\)), the deviation further reduces (median \(\sim0.67\)), indicating reduced sensitivity to initialization; posterior score-based selection also reduces randomness.

\begin{figure*}[t]
    \centering

    \begin{subfigure}{0.95\linewidth}
        \centering
        \begin{subfigure}[c]{0.05\linewidth}
            \rotatebox{90}{(a) Model Size}
        \end{subfigure}%
        \begin{subfigure}[c]{0.95\linewidth}
            \includegraphics[width=0.33\linewidth]{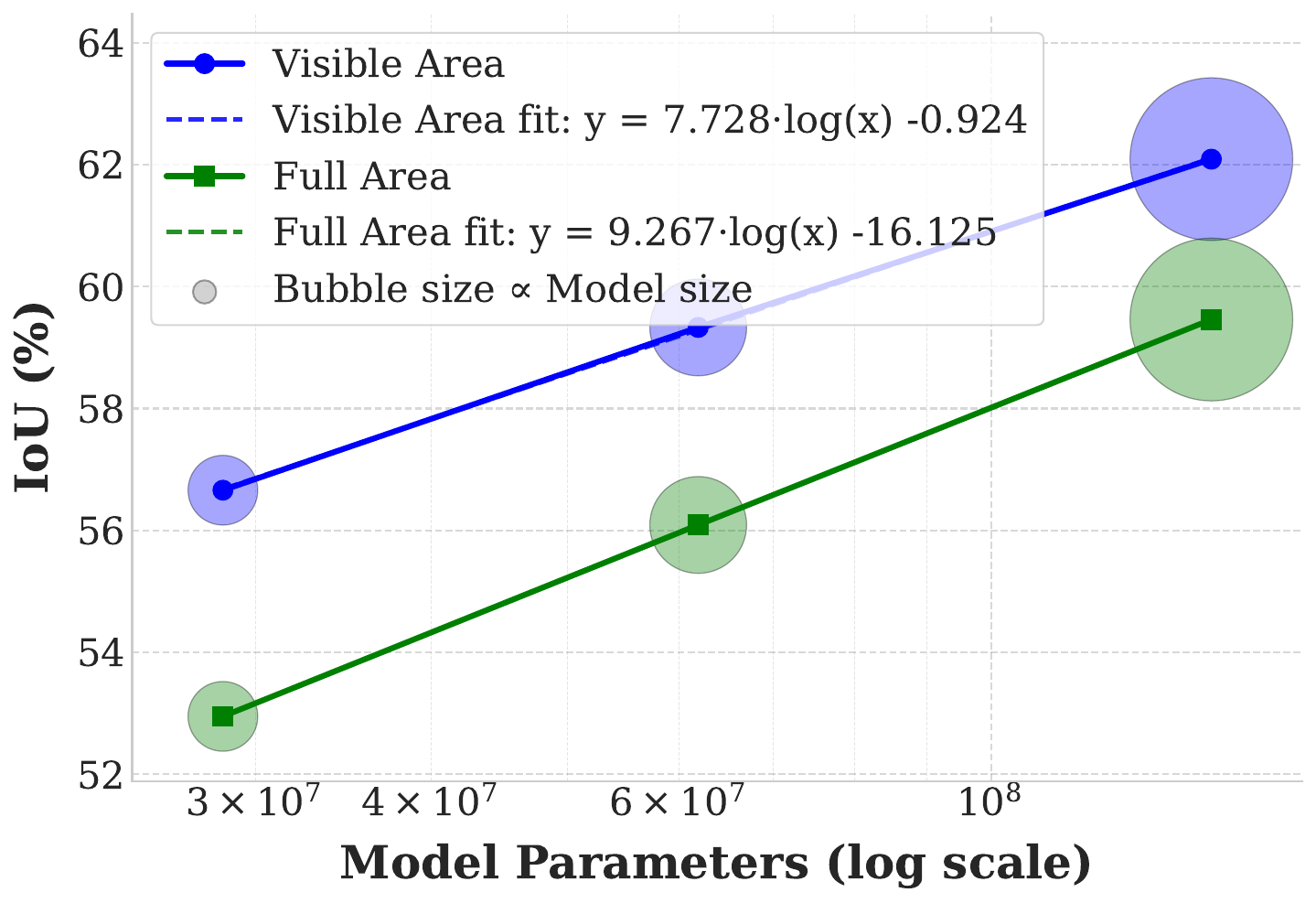}%
            \includegraphics[width=0.33\linewidth]{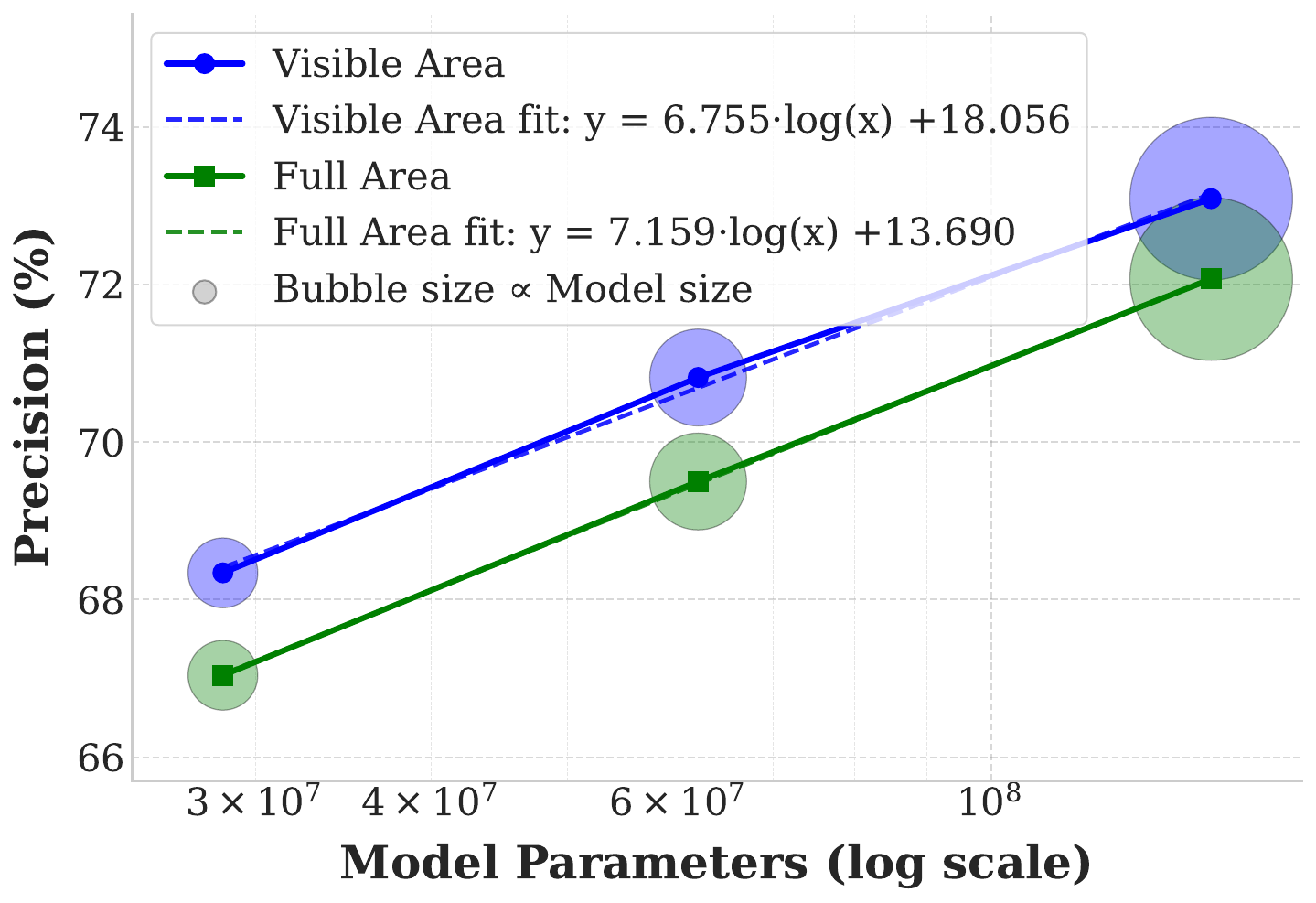}%
            \includegraphics[width=0.33\linewidth]{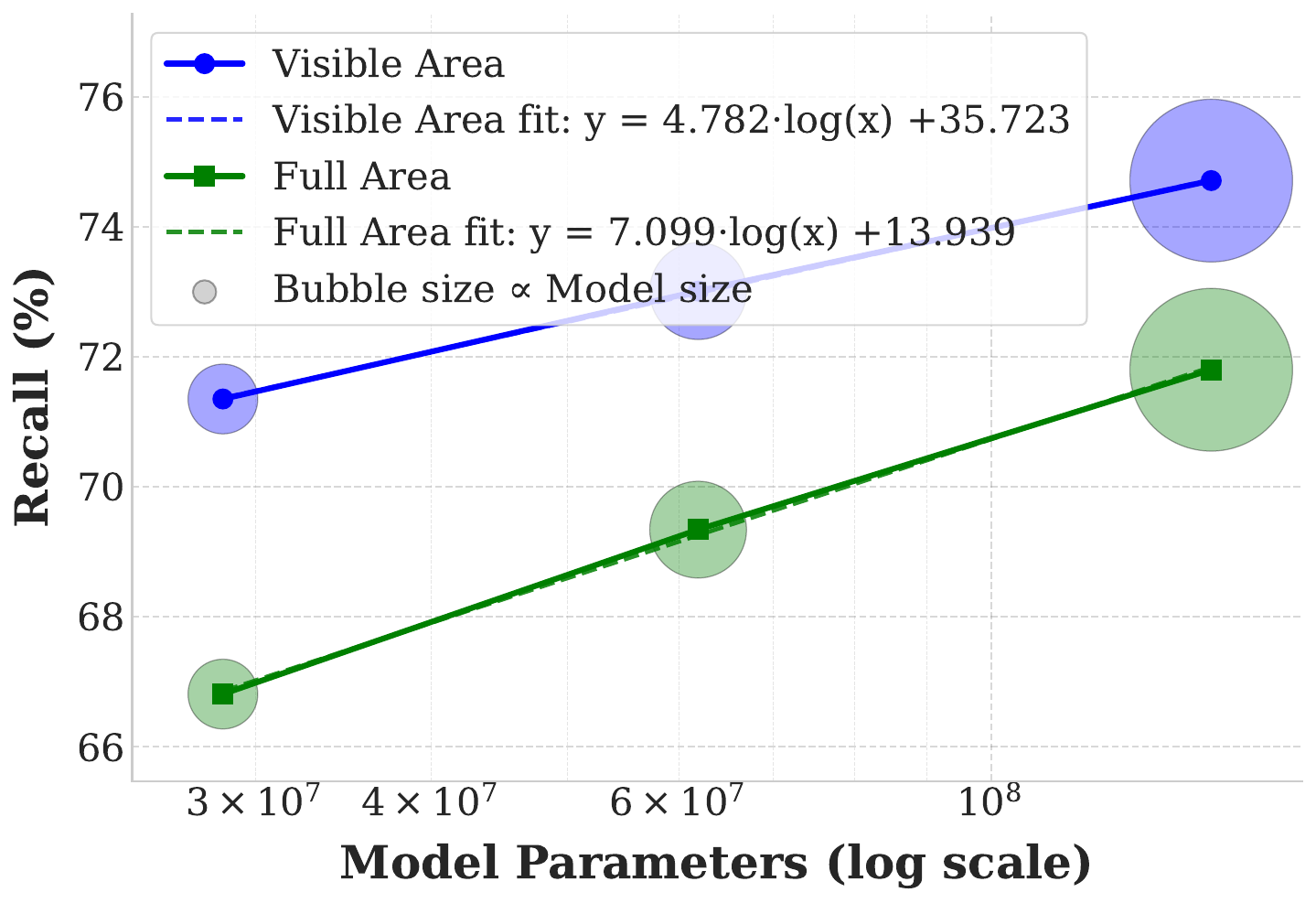}
        \end{subfigure}
        \phantomsubcaption
        \label{subfig:model_size}
    \end{subfigure}

    \vspace{0.5em}

    \begin{subfigure}{0.95\linewidth}
        \centering
        \begin{subfigure}[c]{0.05\linewidth}
            \rotatebox{90}{(b) Data Size}
        \end{subfigure}%
        \begin{subfigure}[c]{0.95\linewidth}
            \includegraphics[width=0.33\linewidth]{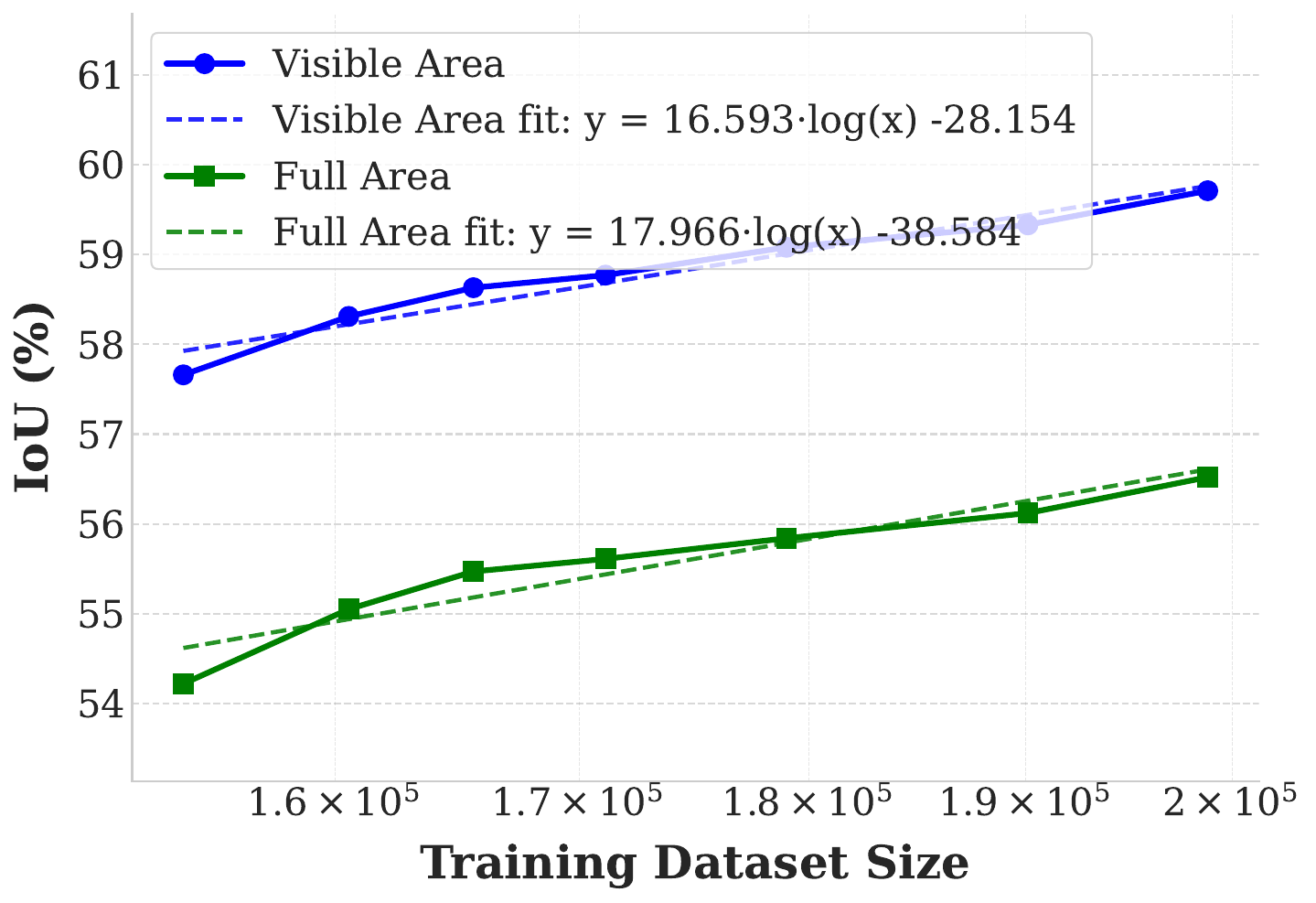}%
            \includegraphics[width=0.33\linewidth]{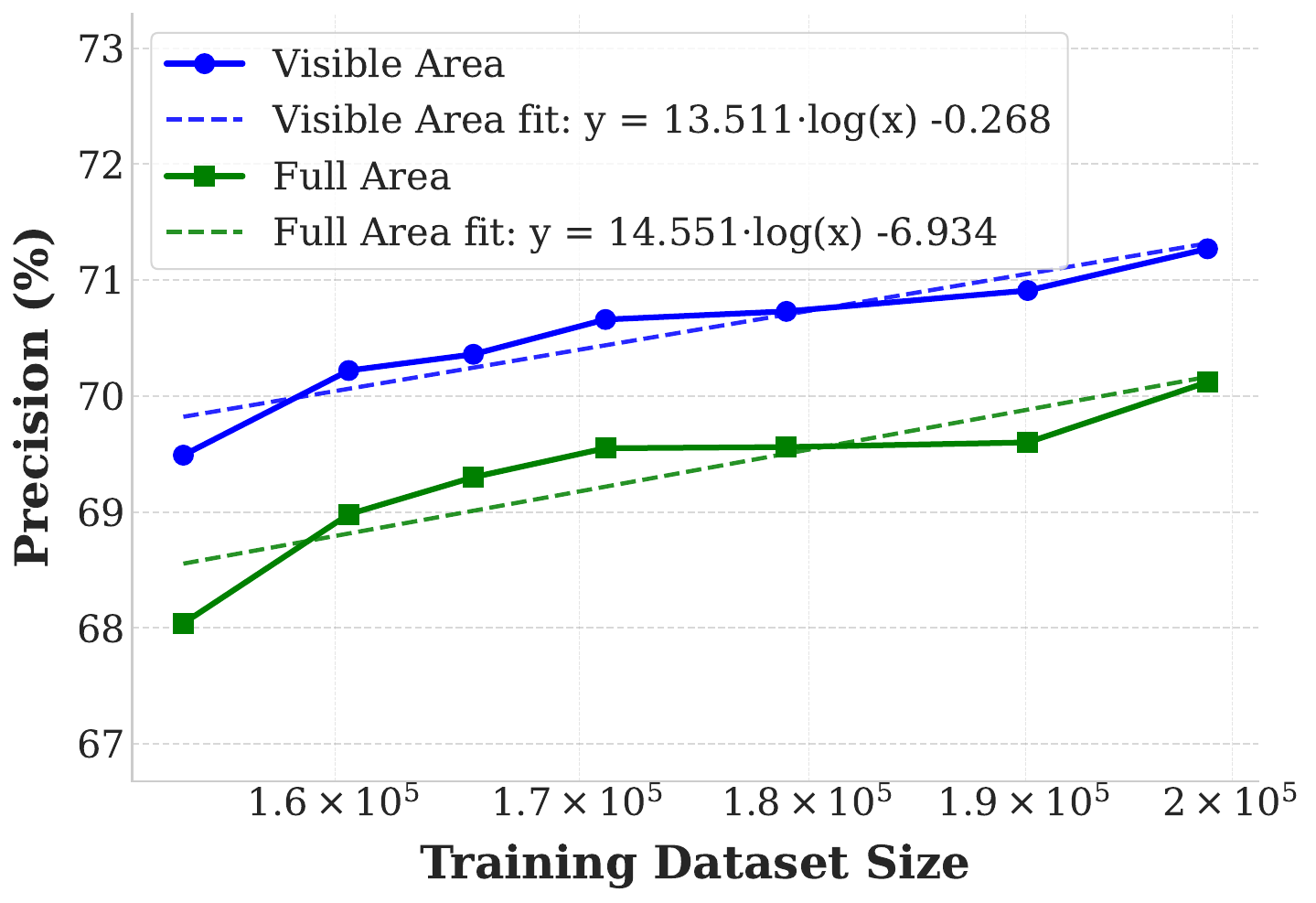}%
            \includegraphics[width=0.33\linewidth]{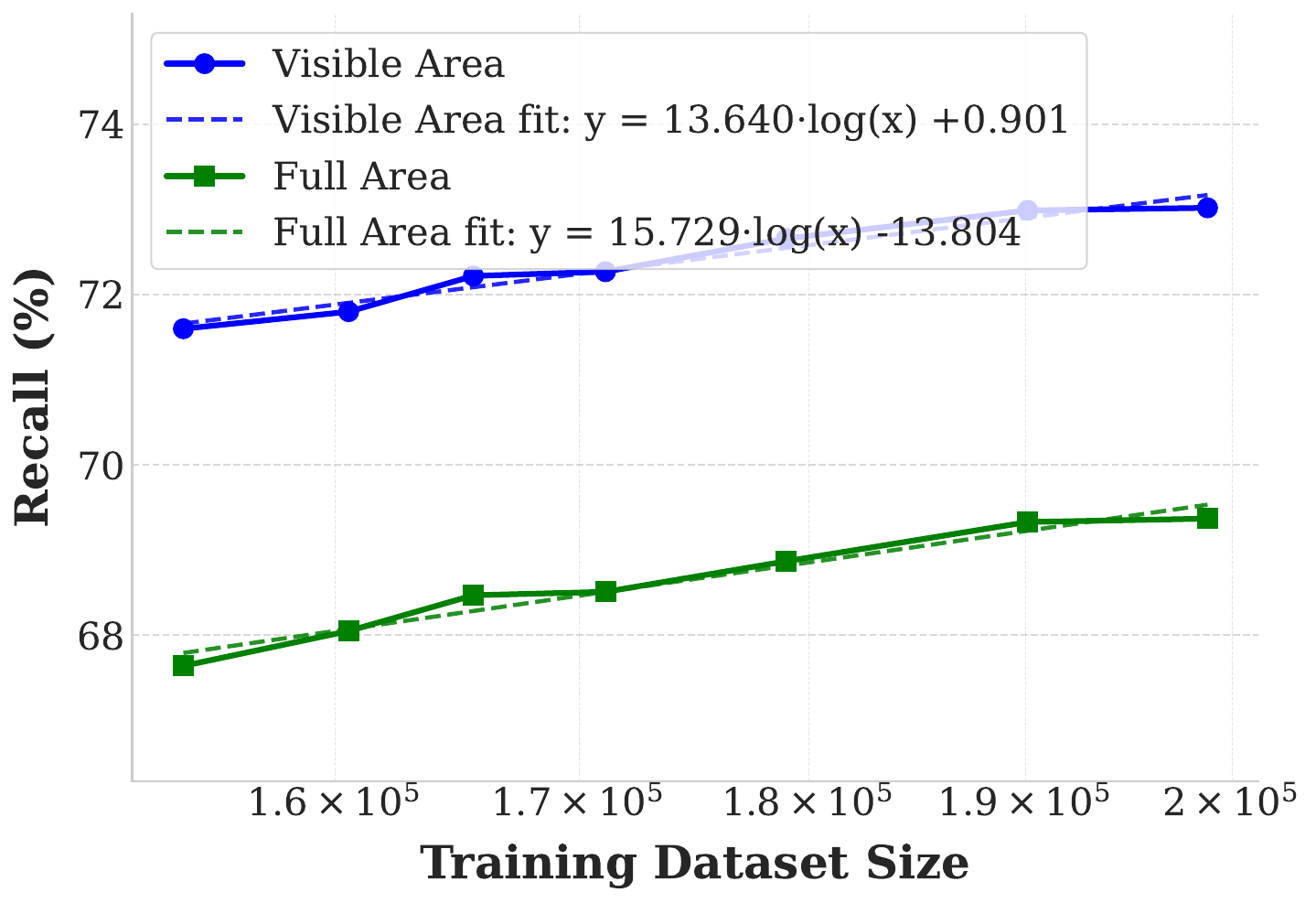}
        \end{subfigure}
        \phantomsubcaption
        \label{subfig:data_size}
    \end{subfigure}

    \vspace{0.5em}
    
    \begin{subfigure}{0.95\linewidth}
        \centering
        \begin{subfigure}[c]{0.05\linewidth}
            \rotatebox{90}{(c) Training Steps}
        \end{subfigure}%
        \begin{subfigure}[c]{0.95\linewidth}
            \includegraphics[width=0.33\linewidth]{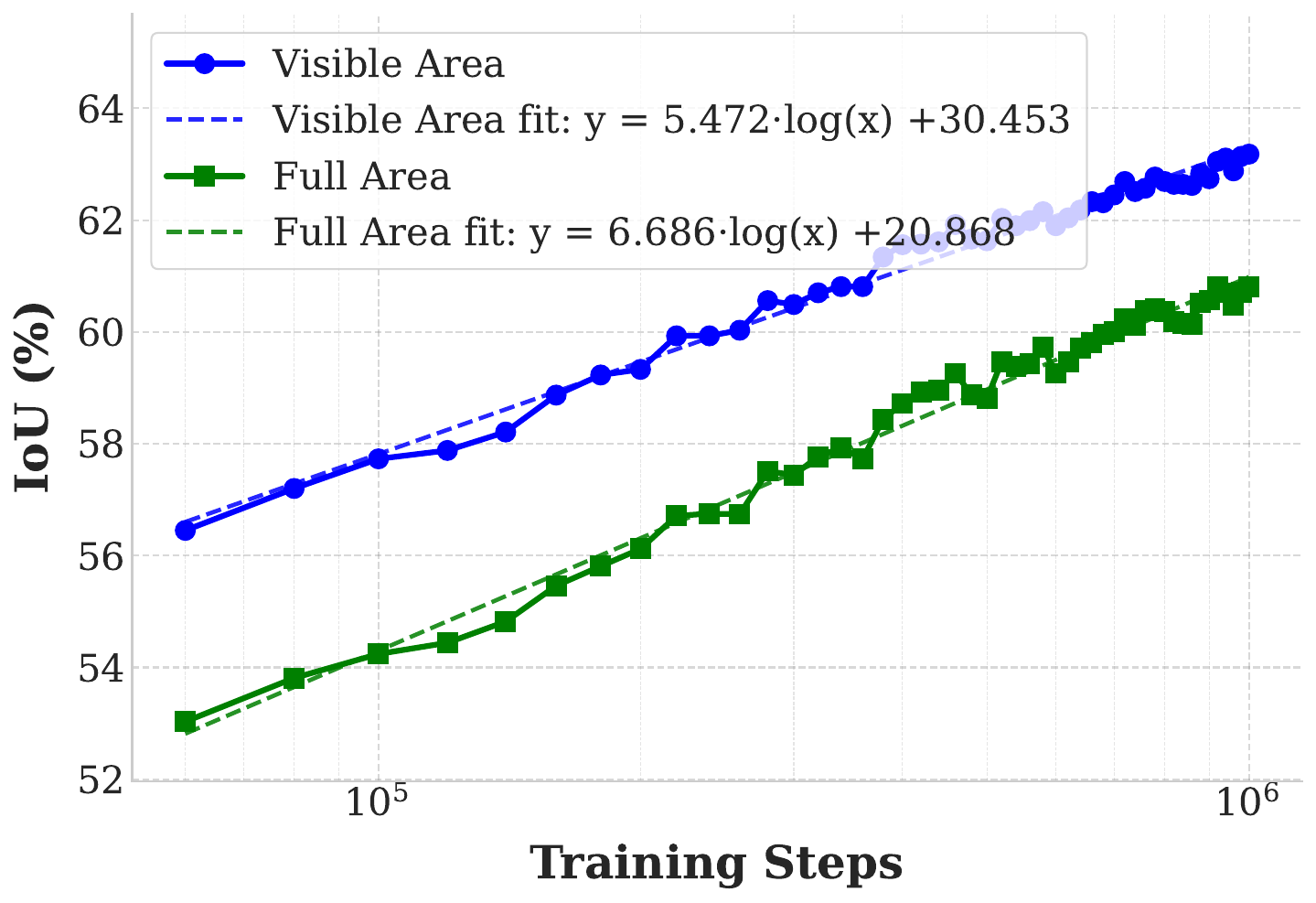}%
            \includegraphics[width=0.33\linewidth]{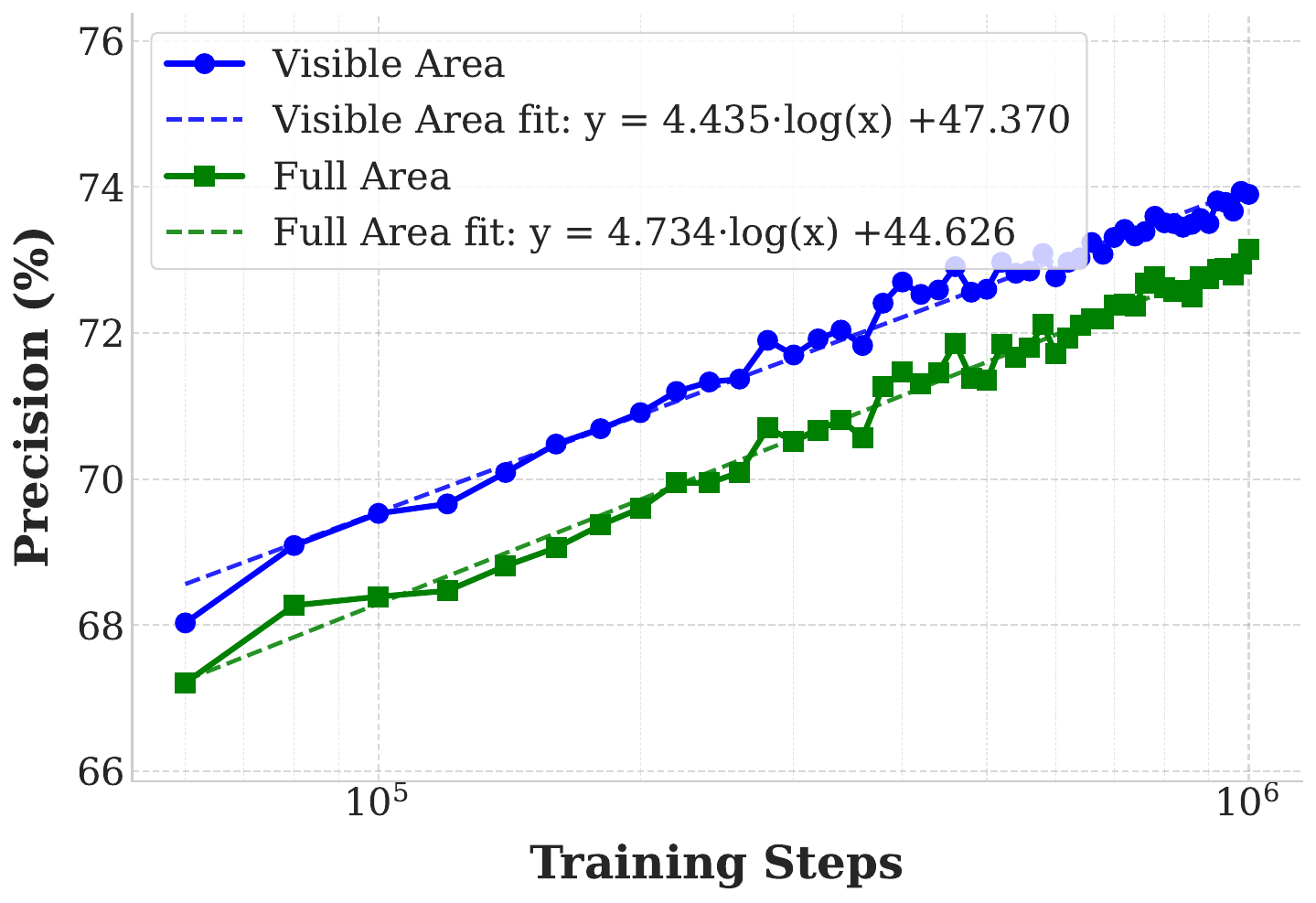}%
            \includegraphics[width=0.33\linewidth]{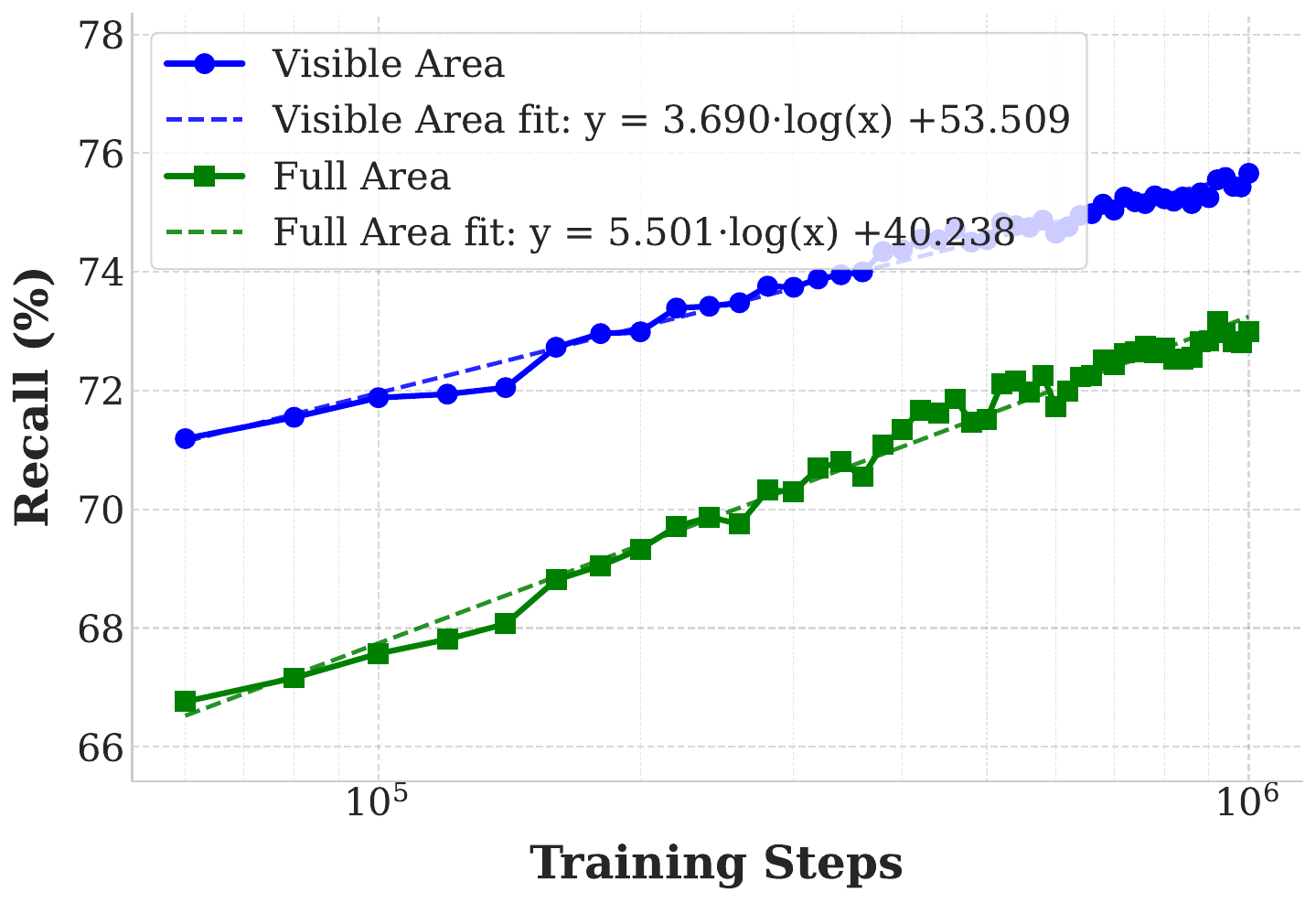}
        \end{subfigure}
        \phantomsubcaption
        \label{subfig:train_steps}
    \end{subfigure}

    \vspace{0.5em}

    \begin{subfigure}{0.95\linewidth}
        \centering
        \begin{subfigure}[c]{0.05\linewidth}
            \rotatebox{90}{(d) Optimization Iterations}
        \end{subfigure}%
        \begin{subfigure}[c]{0.95\linewidth}
            \includegraphics[width=0.33\linewidth]{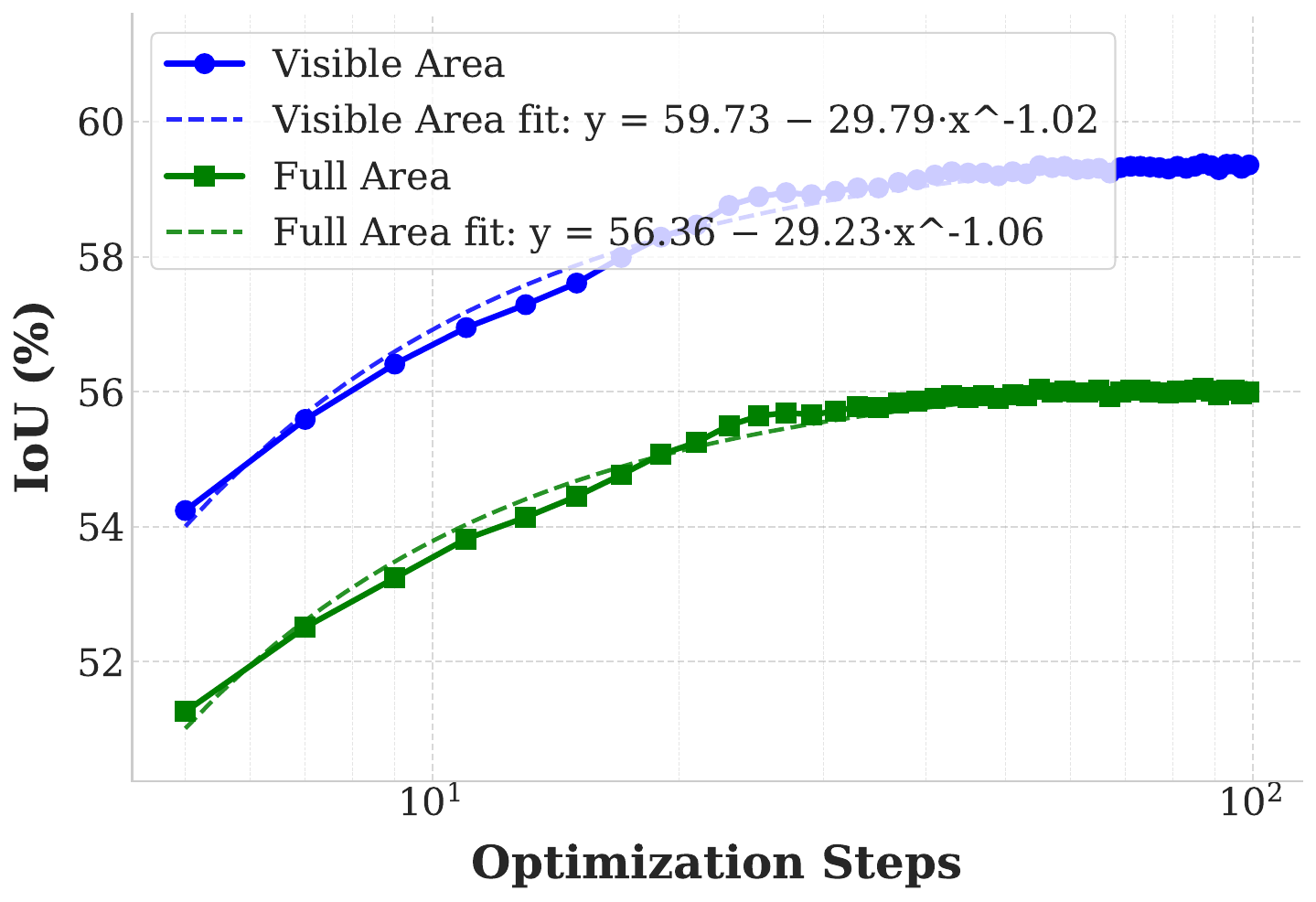}%
            \includegraphics[width=0.33\linewidth]{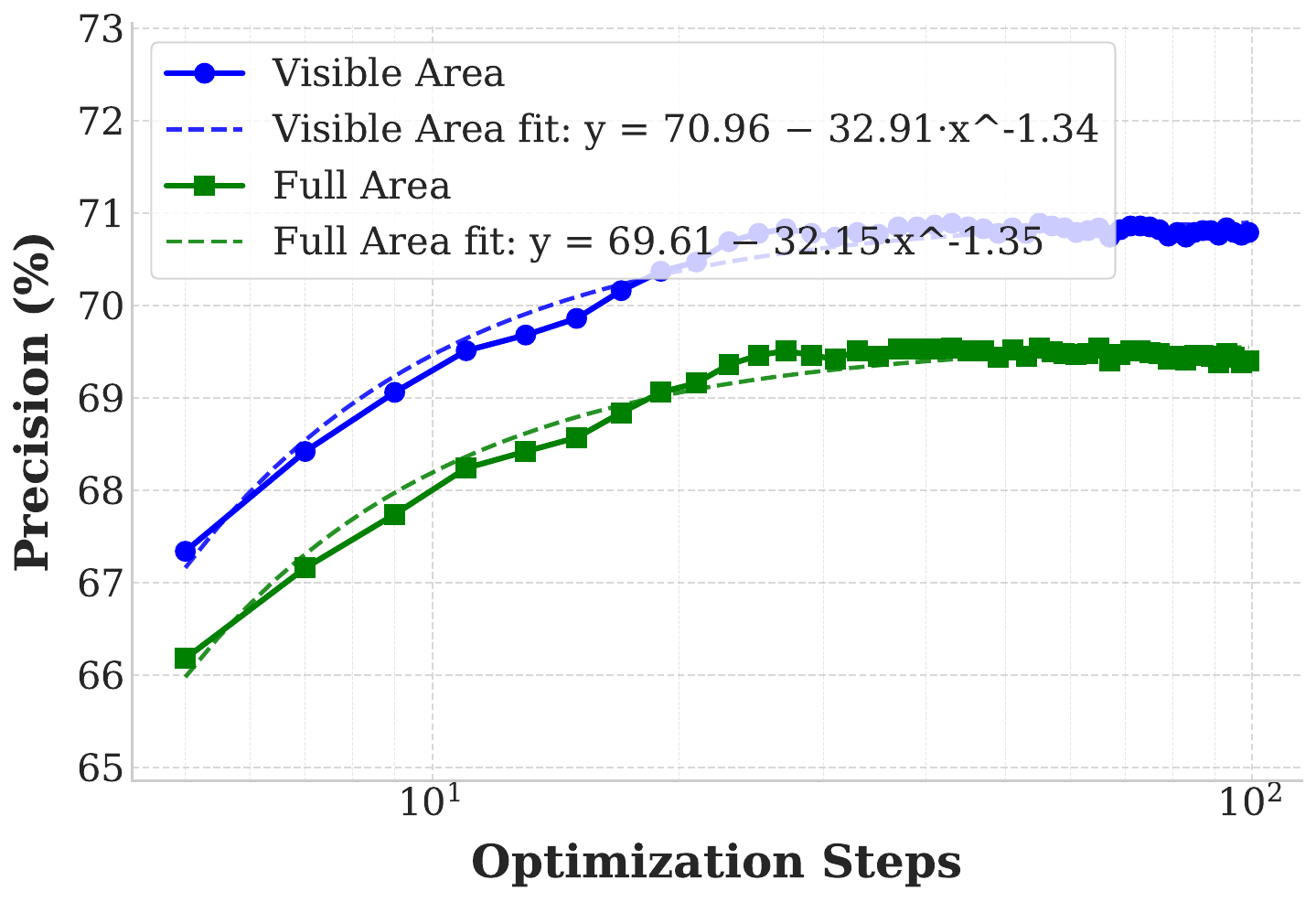}%
            \includegraphics[width=0.33\linewidth]{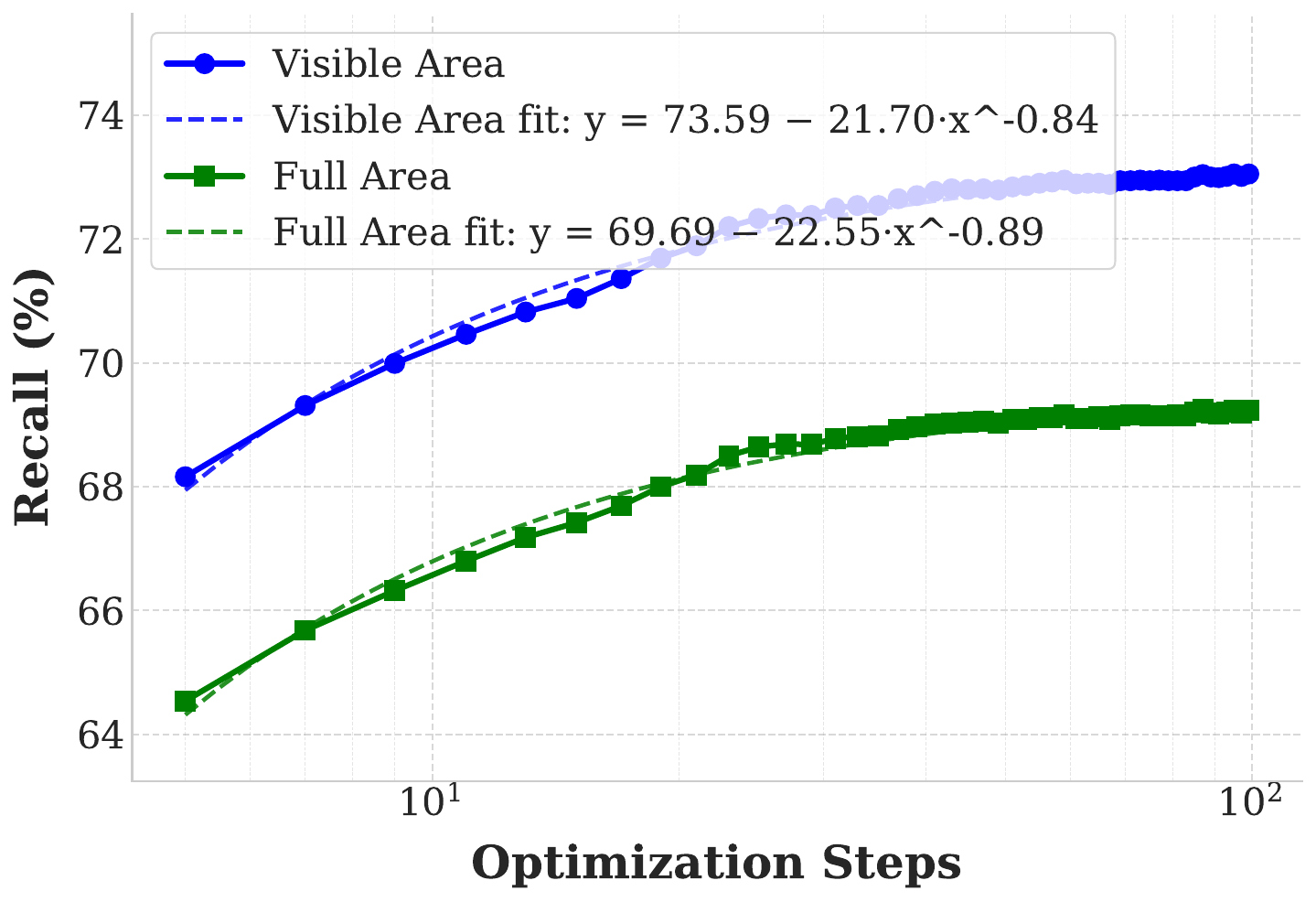}
        \end{subfigure}
        \phantomsubcaption
        \label{subfig:optimization_iterations}
    \end{subfigure}

    \caption{Scaling analysis. Impact of (a) model size, (b) data size, (c) training steps, and (d) optimization iterations on IoU, Precision, and Recall (left to right).}
    \label{fig:scaling_laws}

\end{figure*}

\subsubsection{Denoising Scheduler}
\label{subsec:ab_scheduler}
DDIM~\cite{song2020denoising} and DPM-Solver++~\cite{lu2022dpm} are two popular schedulers. Fig.~\ref{fig:ablation_num_basis} and Fig.~\ref{fig:ablation_denoising_steps} compare them under different \(K\) and step counts. DPM-Solver++ generally performs better, likely due to its higher-order solver yielding more accurate solutions in few steps. For denoising steps, too few steps reduce generation ability. Increasing steps steers the latent more strongly toward high-prior regions (good for generation) but reduces the influence of observation likelihood on the final result (hurting reconstruction), producing a slight degradation (Fig.~\ref{fig:ablation_denoising_steps}). A range of 3--5 steps strikes a good balance.

\subsubsection{Computational Cost Analysis}
Our latent-space optimization provides a unified framework: the optimized variables, generative process variables, and observation-likelihood variables share shapes across samples, enabling batched optimization for multiple samples. Fig.~\ref{fig:runtime_analysis} evaluates efficiency versus batch size, reporting memory and per-map iteration time (total time divided by batch size and iteration number). With batch parallelization, memory grows linearly with batch size, while per-map iteration time decreases and stabilizes at \(11.3\,\mathrm{ms}\) (\(\sim0.66\,\mathrm{s}\) per map in total for 60 iterations).

\subsection{Scalability Analysis for Semantic Mapping}
\label{subsec:scalability}
As discussed in Section~\ref{subsec:paradigms_comp}, our method integrates a data-driven generative model into a classic mapping framework via a prior constraint. This enables improving mapping quality either by strengthening the model during training or by improving optimization during inference.

\begin{figure*}[t]
    \centering
    \includegraphics[width=0.9\linewidth]{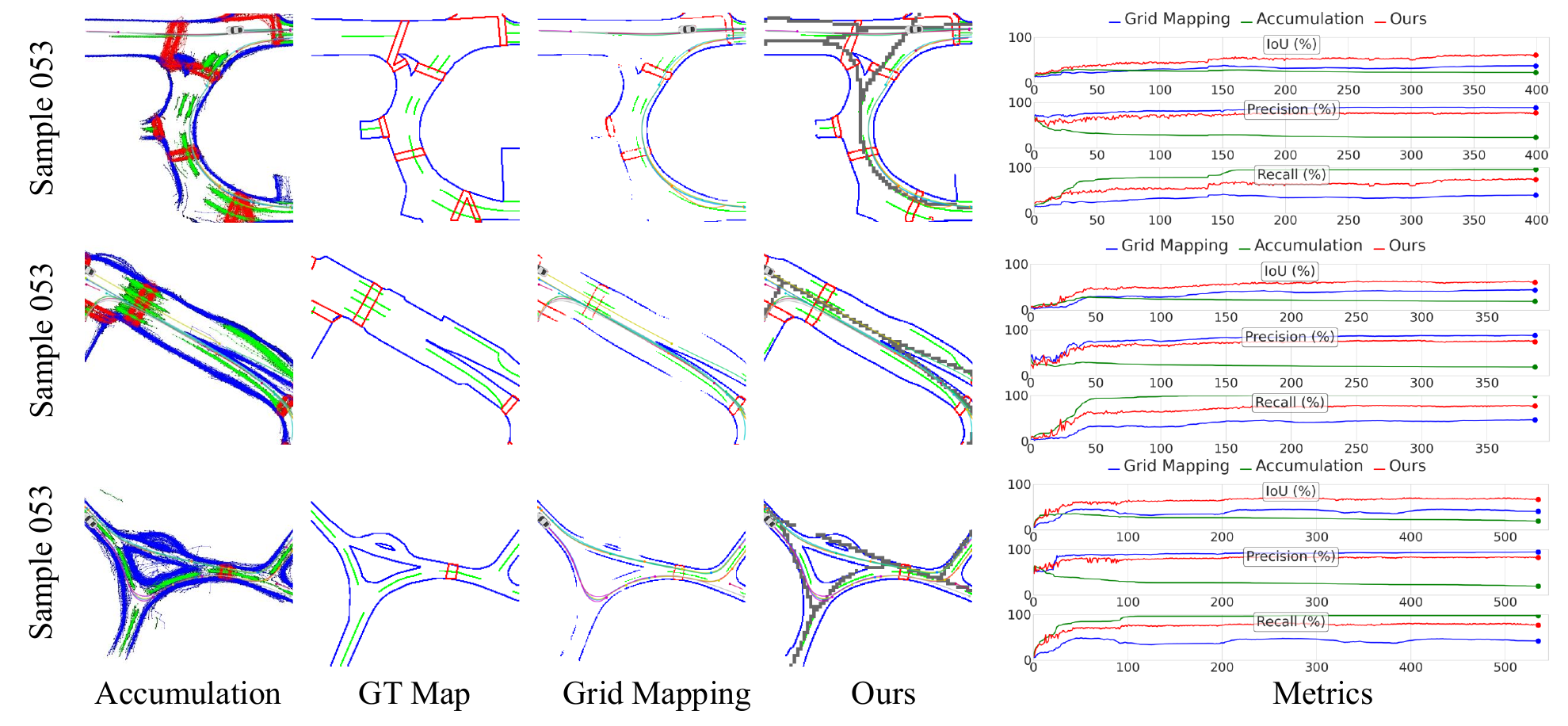} 
    \caption{Performance with crowdsourced data scaling at inference time. Each row shows a sample with five columns (left to right): accumulation of map detections, ground-truth map, grid mapping, our method, and metrics vs observations count. The colorful curves overlaid on the maps represent crowdsourced vehicle trajectories.}\label{fig:scaling_cs_data}
    \vspace{-2mm}
\end{figure*}

\begin{figure}[t]
    \centering
    \includegraphics[width=0.95\linewidth]{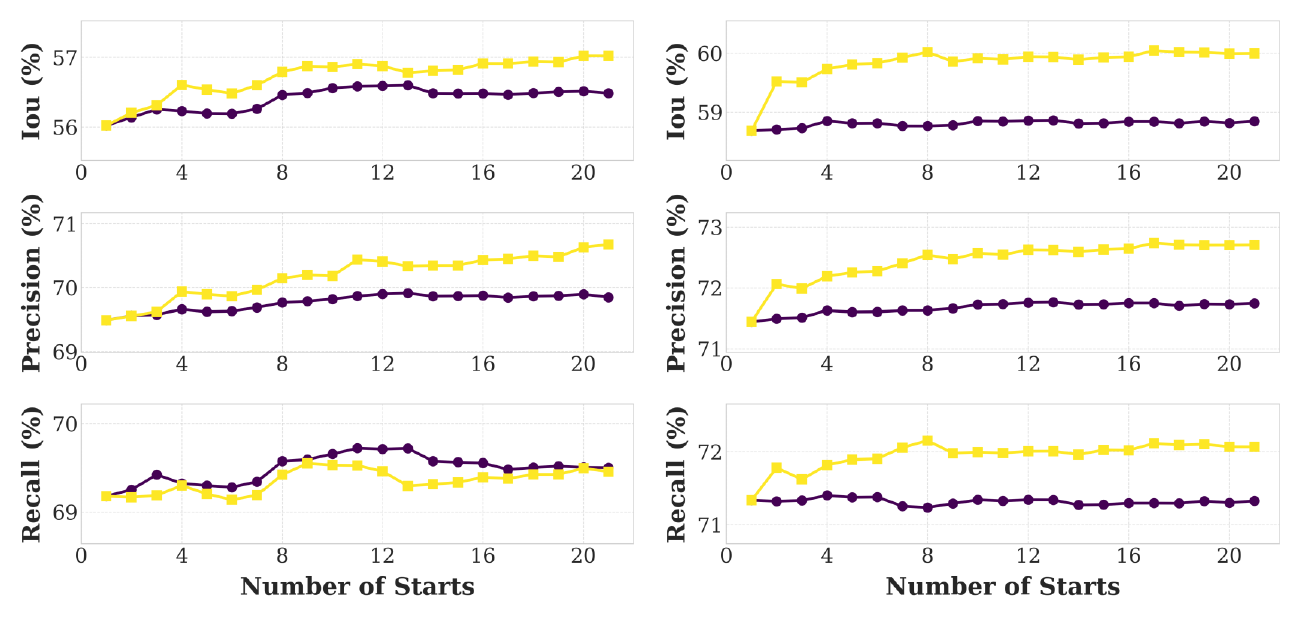}
    \caption{Performance with the number of starts scaling at inference time. \textcolor{yellow}{Yellow} and \textcolor{violet}{purple} represent our proposed posterior-based selection and likelihood-only selection, respectively. Left and right plots represent models with 200{,}000 and 400{,}000 training steps, respectively.}
    \label{fig:scaling_multi_start}
\end{figure} 

\subsubsection{Training-Phase Scalability}
\label{subsec:scalability_training}
We examine how performance scales with dataset size, model size, training duration, and conditioning. To vary dataset size, we adjust the minimum distance between training and test sample centers from 5 to 40 m. Model capacity is varied via the number of transformer blocks and embedding dimension. Figs.~\ref{subfig:model_size}, \ref{subfig:data_size}, and \ref{subfig:train_steps} show steady gains with larger models, larger datasets, and more training—consistent with scaling trends in image generation~\cite{peebles2023} and language models~\cite{kaplan2020scaling}. Although nuScenes limits the absolute number of HD maps, the trend remains clear. Using center distance as a proxy for dataset size reflects our aim: when training data sufficiently covers more HD patterns (ideally including those in test regions), the learned map prior becomes more effective, thereby improving generative mapping performance. Importantly, our training does not require paired crowdsourced/HD maps, improving data accessibility for scaling the dataset up.

Furthermore, while SD maps are intractable for classic mapping, they can condition generation to enhance mapping. Quantitative effects are analyzed in Section~\ref{exp:csm_quality}; here we focus on controllability and practical issues.

For controllability, Fig.~\ref{fig:sd_control} shows three samples under different SD-map conditions. The figure indicates: (i) changes to SD maps outside observed areas do not impair robust reconstruction within observed areas; (ii) generation in unobserved areas remains consistent with reconstruction in connected regions; and (iii) the generated maps are novel (not present in the training data), which means they are not simple retrievals of training samples. Beyond controllability, SD-conditioned generation offers plausible map extension, useful for applications such as traffic-scene simulation~\cite{ren2025cosmos} and simulation of unseen areas~\cite{barsalou2009simulation} when planning. 

On the other hand, in practice, SD maps may misalign with crowdsourced observations due to localization errors or outdated maps. Fig.~\ref{fig:invalid_sd} evaluates reconstruction and generation under three cases: missing (No SD), entirely wrong (Wr. SD), and partially wrong (Part. Wr.). When missing or entirely wrong, the method relies more on observations—maintaining reconstruction in observed areas but degrading on the generation in unobserved regions. Partially wrong SD maps yield different behaviors: (row 1) an extra, non-existent road link—non-conflicting SD segments still guide generation; (row 2) small rotation/translation—minor geometric deviations have limited impact because the SD map serves as abstract topological condition; (row 3) missing links—the model cannot disambiguate SD incompleteness from observation faults, yielding errors. Given such variability, removing SD conditioning may be preferable when SD reliability is questionable in practice.

\begin{figure*}[t]
    \centering
    \includegraphics[width=\linewidth]{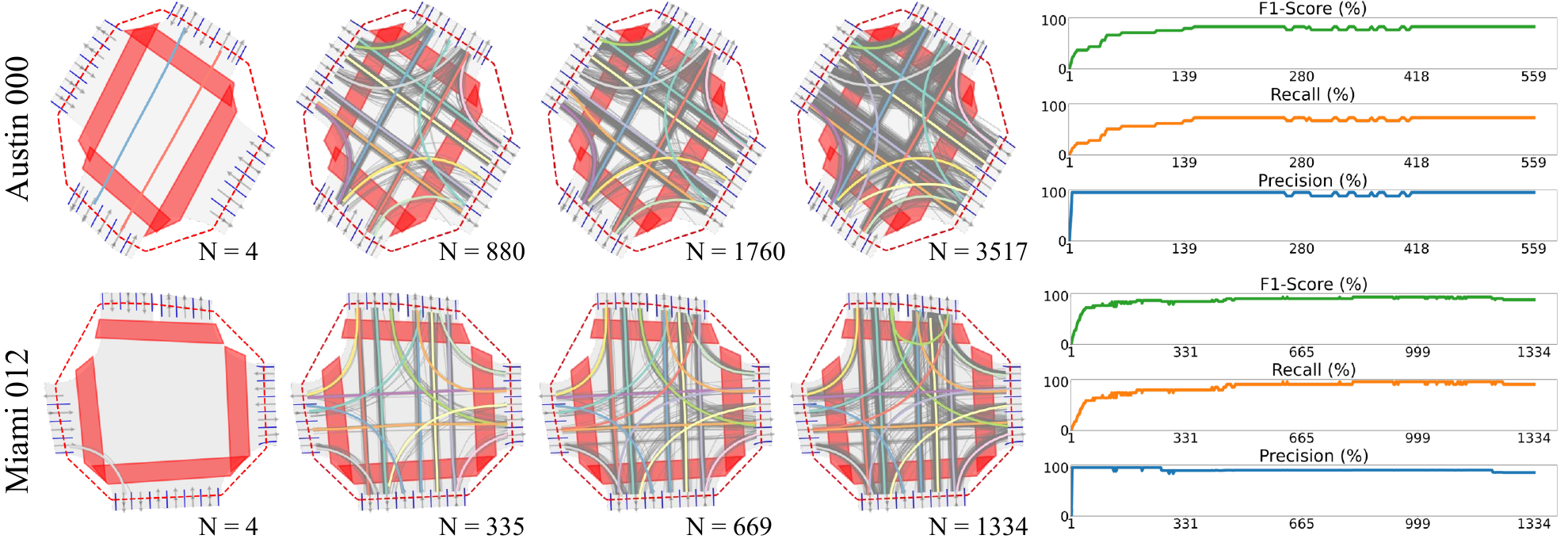}
    \caption{Scaling on trajectory data. Each row shows a region with extracted centerlines under four different trajectory counts (left to right: increasing trajectory counts) and metrics.}
    \label{fig:scaling_traj}
\end{figure*}

\subsubsection{Inference-Phase Scalability}
\label{subsec:scalability_inference}
We evaluate inference-time scalability with respect to the number of optimization iterations, the amount of observation data, and multi-start strategies. 

Fig.~\ref{subfig:optimization_iterations} shows quality improves with more iterations. Iterative refinement is standard in classic mapping but absent in feed-forward mapping~\cite{wang2025vggt,keetha2025mapanything}; our approach enables projected gradient descent in latent space to achieve iterative map refinement.

Fig.~\ref{fig:scaling_cs_data} shows performance versus crowdsourced coverage. Direct accumulation highlights the severe noise in crowdsourced data, posing a significant challenge for classic mapping. Methods like grid mapping have limited performance and no generation ability in unobserved areas. Despite increased noise with more data, our prior constraint sustains performance. At very sparse coverage, for example at the initial stage, metrics fluctuate due to stochastic generation in unobserved regions; as coverage grows, metrics stabilize.

Fig.~\ref{fig:scaling_multi_start} validates the multi-start strategy: performance improves with more initial latents (we test \(N_s\) from 1 to 20). Posterior-based selection outperforms likelihood-only selection, highlighting that the plausibility indicated by the prior score is important for evaluating mapping quality. Furthermore, with more training steps, posterior scoring improves as noise prediction accuracy increases.

\begin{table}[t]
    \centering
    \caption{Comparison with HD-map centerlines. All metrics are in percentage (\%). P: Precision, R: Recall, F1: F1-score. Higher is better ($\uparrow$).}
    \label{tab:topo-hd}
    \resizebox{\linewidth}{!}{%
    \begin{tabular}{@{}ll ccc ccc ccc@{}}
    \toprule
    \multirow{2}{*}{Type} &
      \multirow{2}{*}{Method} &
      \multicolumn{3}{c}{Threshold@1 m} &
      \multicolumn{3}{c}{Threshold@2 m} &
      \multicolumn{3}{c}{Threshold@3 m} \\ 
    \cmidrule(lr){3-5} \cmidrule(lr){6-8} \cmidrule(l){9-11}
     &
       &
      \textbf{P} $\uparrow$ &
      \textbf{R} $\uparrow$ &
      \textbf{F1} $\uparrow$ &
      \textbf{P} $\uparrow$ &
      \textbf{R} $\uparrow$ &
      \textbf{F1} $\uparrow$ &
      \textbf{P} $\uparrow$ &
      \textbf{R} $\uparrow$ &
      \textbf{F1} $\uparrow$ \\
    \midrule
    \multirow{3}{*}{Straight} & FlowMap~\cite{ding2023flowmap} & 64.5 & \cellcolor{blue!30}{79.4} & 71.2 & 69.2 & \cellcolor{blue!30}{85.1} & 76.3 & 70.1 & \cellcolor{blue!30}{86.2} & 77.3 \\
                              & K-MEANS & \cellcolor{blue!10}{78.0} & \cellcolor{blue!10}{77.4} & \cellcolor{blue!10}{77.7} & \cellcolor{blue!10}{83.8} & \cellcolor{blue!10}{83.1} & \cellcolor{blue!10}{83.4} & \cellcolor{blue!10}{84.6} & \cellcolor{blue!10}{84.0} & \cellcolor{blue!10}{84.3} \\
                              & Ours  & \cellcolor{blue!30}{82.8} & 77.0 & \cellcolor{blue!30}{79.8} & \cellcolor{blue!30}{88.0} & 81.9 & \cellcolor{blue!30}{84.8} & \cellcolor{blue!30}{88.9} & 82.7 & \cellcolor{blue!30}{85.7} \\
    \midrule
    \multirow{3}{*}{Left}     & FlowMap~\cite{ding2023flowmap} & 39.4 & \cellcolor{blue!30}{45.3} & 42.2 & 59.4 & \cellcolor{blue!30}{68.3} & 63.5 & 63.8 & \cellcolor{blue!30}{73.3} & 68.2 \\
                              & K-MEANS & \cellcolor{blue!10}{47.8} & \cellcolor{blue!10}{42.3} & \cellcolor{blue!30}{44.8} & \cellcolor{blue!10}{74.7} & \cellcolor{blue!10}{66.1} & \cellcolor{blue!10}{70.1} & \cellcolor{blue!10}{81.2} & \cellcolor{blue!10}{71.8} & \cellcolor{blue!10}{76.2} \\
                              & Ours  & \cellcolor{blue!30}{50.0} & 40.2 & \cellcolor{blue!10}{44.5} & \cellcolor{blue!30}{79.4} & 63.7 & \cellcolor{blue!30}{70.7} & \cellcolor{blue!30}{86.7} & 69.6 & \cellcolor{blue!30}{77.2} \\
    \midrule
    \multirow{3}{*}{Right}    & FlowMap~\cite{ding2023flowmap} & 62.4 & \cellcolor{blue!30}{67.3} & 64.8 & 74.5 & \cellcolor{blue!30}{80.3} & 77.3 & 76.0 & \cellcolor{blue!30}{82.0} & 78.9 \\
                              & K-MEANS & \cellcolor{blue!10}{70.2} & \cellcolor{blue!10}{64.2} & \cellcolor{blue!10}{67.0} & \cellcolor{blue!10}{85.5} & \cellcolor{blue!10}{78.2} & \cellcolor{blue!10}{81.7} & \cellcolor{blue!10}{87.5} & \cellcolor{blue!10}{80.0} & \cellcolor{blue!10}{83.6} \\
                              & Ours  & \cellcolor{blue!30}{73.0} & 62.0 & \cellcolor{blue!30}{67.1} & \cellcolor{blue!30}{89.0} & 75.6 & \cellcolor{blue!30}{81.8} & \cellcolor{blue!30}{91.8} & 78.0 & \cellcolor{blue!30}{84.4} \\
    \midrule
    \multirow{3}{*}{U-Turn}   & FlowMap~\cite{ding2023flowmap} & 25.0 & \cellcolor{blue!30}{16.7} & 20.0 & \cellcolor{blue!10}{75.0} & \cellcolor{blue!30}{50.0} & \cellcolor{blue!30}{60.0} & \cellcolor{blue!10}{91.7} & \cellcolor{blue!30}{61.1} & \cellcolor{blue!30}{73.3} \\
                              & K-MEANS & \cellcolor{blue!10}{33.3} & \cellcolor{blue!30}{16.7} & \cellcolor{blue!10}{22.2} & \cellcolor{blue!30}{77.8} & 38.9 & \cellcolor{blue!10}{51.9} & \cellcolor{blue!30}{100.0} & 50.0 & \cellcolor{blue!10}{66.7} \\
                              & Ours  & \cellcolor{blue!30}{42.9} & \cellcolor{blue!30}{16.7} & \cellcolor{blue!30}{24.0} & 71.4 & 27.8 & 40.0 & \cellcolor{blue!30}{100.0} & 38.9 & 56.0 \\
    \midrule
    \multirow{3}{*}{Overall} & FlowMap~\cite{ding2023flowmap} & 57.3 & \cellcolor{blue!30}{66.9} & 61.8 & 67.9 & \cellcolor{blue!30}{79.2} & 73.1 & 69.9 & \cellcolor{blue!30}{81.6} & 75.3 \\
                              & K-MEANS & \cellcolor{blue!10}{68.4} & \cellcolor{blue!10}{64.3} & \cellcolor{blue!10}{66.3} & \cellcolor{blue!10}{81.9} & \cellcolor{blue!10}{77.1} & \cellcolor{blue!10}{79.4} & \cellcolor{blue!10}{84.5} & \cellcolor{blue!10}{79.6} & \cellcolor{blue!10}{82.0} \\
                              & Ours & \cellcolor{blue!30}{72.2} & 63.0 & \cellcolor{blue!30}{67.3} & \cellcolor{blue!30}{86.1} & 75.2 & \cellcolor{blue!30}{80.3} & \cellcolor{blue!30}{89.2} & 77.8 & \cellcolor{blue!30}{83.1} \\
    \bottomrule
    \end{tabular}%
    }
    \end{table}
\begin{figure}[t]
    \centering
    \includegraphics[width=0.8\linewidth]{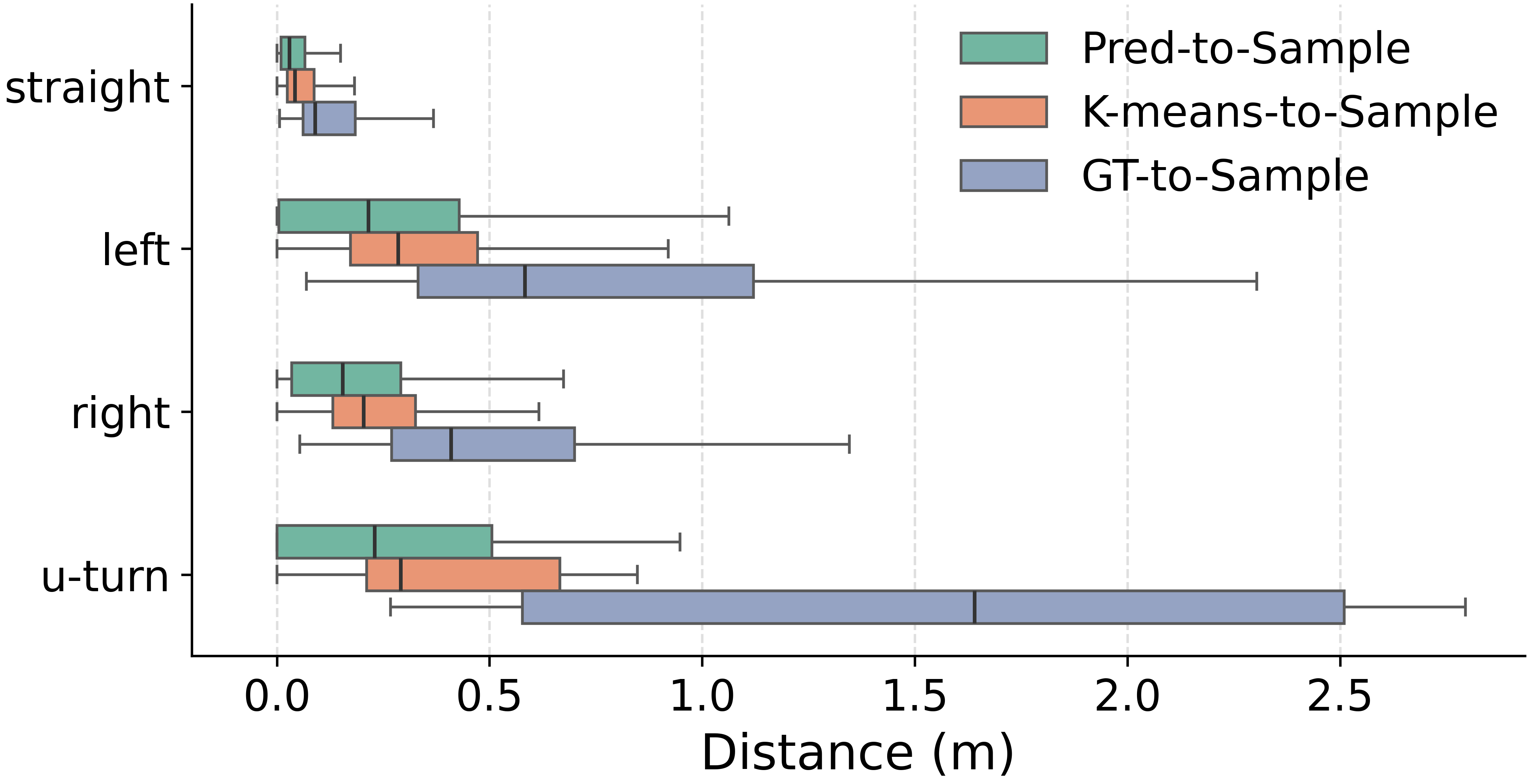}
    \caption{CDTW distance to human trajectories.}
    \label{fig:dist_to_human}
\end{figure}

\begin{figure}[t]
    \centering
    \includegraphics[width=0.8\linewidth]{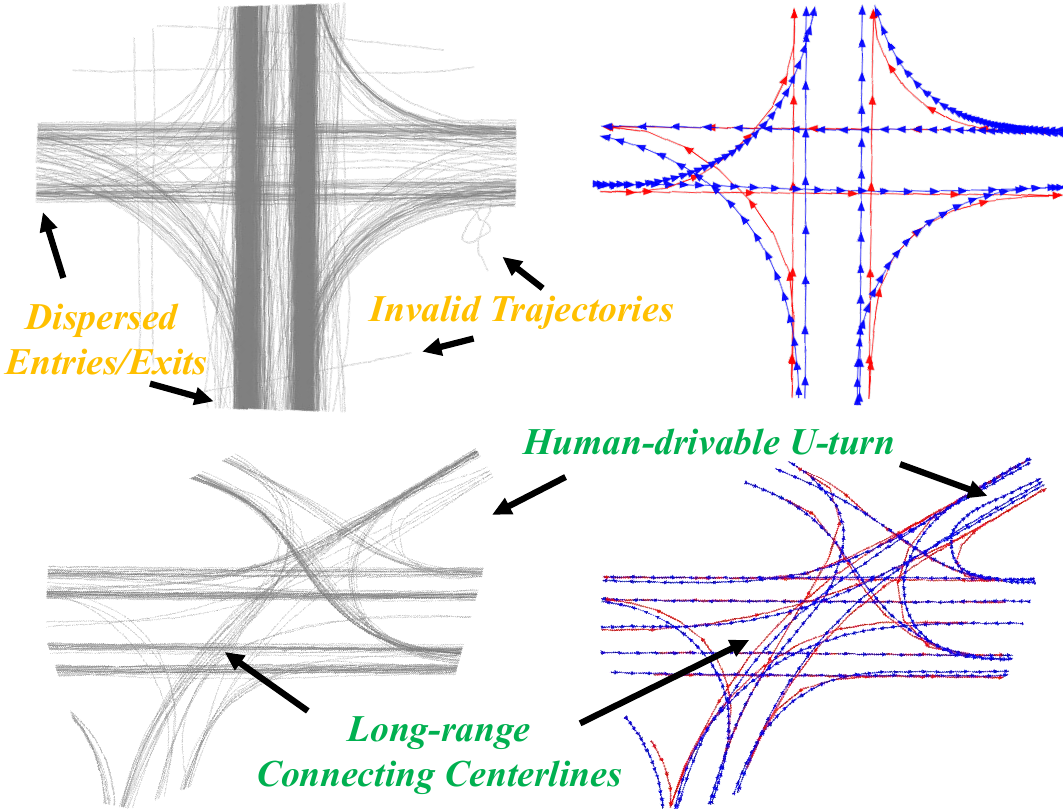}
    \caption{Topology mapping visualization across two scenes. Top row: raw trajectories (\textcolor{gray}{gray}) and centerline comparison in Pittsburgh 001 (HD annotations: \textcolor{red}{red}; ours: \textcolor{blue}{blue}). Bottom row: the same for Pittsburgh 038.}
    \label{fig:topo_gt_pred}
\end{figure}

\subsection{Topological Mapping Experiments}
\label{subsec:topo_exp}

\subsubsection{Task Setup and Evaluation}
We select 365 complex intersections from the \emph{Argoverse 2} motion-forecasting dataset. On average, there are 718.9 human driving trajectories and 12.2 centerlines per intersection; the average trajectory length is \(27.9\,\mathrm{m}\). We categorize topologies into four types—straight, left turn, right turn, and U-turn—to measure performance across maneuvers. The k-medoids cluster count is \(k=25\). The CDTW threshold for Non-Maximum Suppression (NMS), \(\tau_{\text{nms}}\), is \(3\,\mathrm{m}\). For trajectory confidence weighting, we set \(\lambda_{\text{obs}}=\text{0.2}\), \(\lambda_{\text{head}}=\text{0.3}\), \(\lambda_{\text{smooth}}=\text{0.5}\), heading threshold \(\tau_{\theta}=\text{4.0}^\circ\), and smoothness parameter \(\sigma_r=\text{0.2}\). For kinematic refinement, we use time step \(dt=\text{0.1}\) and control regularization \(\lambda_{\text{ctrl}}=\text{10}\).

\subsubsection{Topology Performance Evaluation}

We use manually annotated HD-map centerlines as ground truth (GT). Continuous Dynamic Time Warping (CDTW) measures similarity between predictions and GT. A detection is a true positive if its CDTW distance to the corresponding GT is below a threshold (1/2/3 m). We report Precision, Recall, and F1-score. 

Table~\ref{tab:topo-hd} compares against FlowMap~\cite{ding2023flowmap} (self-implementation) and k-means. Our method achieves the best F1 across almost all topology types and thresholds, with a few exceptions (e.g., Left@1m and U-Turn@2m/3m). FlowMap attains higher recall but lower precision: it groups by entry/exit and then clusters within groups, which can yield redundant adjacent centerlines and sensitivity to dispersed entry/exit as trajectories increase (see Fig.~\ref{fig:topo_gt_pred}). In contrast, we cluster all trajectories directly and use NMS to prune redundancy, facilitating convergence to a global optimum. K-means averages trajectories to form centerlines, making it sensitive to outliers. Moreover, Fig.~\ref{fig:scaling_traj} shows that increasing trajectory data consistently improves mapping quality, highlighting the scalability of our topological mapping system. 


In addition to GT comparison, we assess human-driving consistency: for each true positive, we compute CDTW distances from both predicted and annotated centerlines to all human trajectories in the cluster. Fig.~\ref{fig:dist_to_human} shows our centerlines are closer to actual driving than k-means or manual annotation because our clustering centers are derived directly from real trajectory data rather than averaging or rules. Fig.~\ref{fig:topo_gt_pred} also shows human-drivable U-turns and long-range connectors extracted by our method alongside human labels.


\section{Conclusion and Future Work}
\label{sec:conclusion}
We presented CSMapping, a scalable crowdsourced mapping system that leverages perceived road semantics and vehicle trajectories to construct both semantic and topological maps. For semantic mapping, an HD-map diffusion model provides a strong prior while vectorized mapping supplies high-quality initialization; constraining latent-space optimization to a Gaussian manifold yields accurate and complete maps. Extensive experiments demonstrate scalability along both training axes (larger datasets, models, training steps, and conditional control) and inference axes (optimization iterations, observation data, and multi-start strategy), and show practical benefits for enhancing online detection. For topological mapping, we introduce confidence-weighted k-medoids clustering over raw trajectories with kinematic refinement, achieving high-quality centerlines that remain robust as trajectory volume and noise increase.

Looking ahead, we identify four promising directions. (i) Computational efficiency: reduce optimization and denoising costs via consistency models~\cite{song2023consistency} or distillation toward few-/single-step generation~\cite{sauer2024adversarial}. (ii) Representation: transition from raster to vectorized instance representations~\cite{monninger2025mapdiffusion} to shrink the state space and improve geometric precision, while handling variable-size instance sets during optimization. (iii) Conditioning: exploit additional modalities (e.g., perspective segmentation, satellite imagery) and develop reliability-aware conditioning to mitigate misaligned or outdated SD maps. (iv) Self-improving autonomy: close the loop so that validated maps harvested during operation continually refine the generative model, which in turn enhances downstream mapping and perception.

\bibliographystyle{IEEEtran}
\bibliography{root}

\clearpage
\label{appendix:startpage}
\appendix
This supplementary material provides detailed derivations, architectural specifications, and additional experimental results that complement the main paper. If accepted, this document will be made publicly available via arXiv or the project's open-source repository.

\subsection{Continuous Dynamic Time Warping (CDTW)}
\label{app:cdtw}
\begin{figure}[h]
    \centering
    \includegraphics[width=\columnwidth]{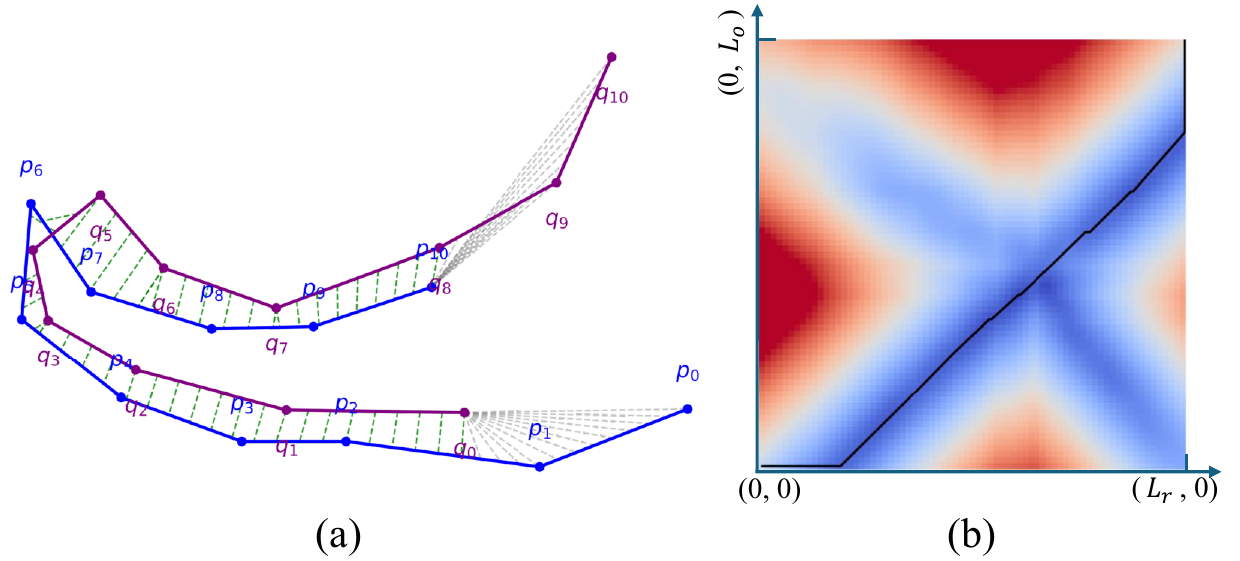}
    \caption{(a) An example of Continuous Dynamic Time Warping (CDTW) between two curves. The dashed lines indicate the optimal alignment, or warping path, found by CDTW. (b) The corresponding fine-grained cost field, with black lines denoting the optimal warping path.}
    \label{fig:cdtw_curves}
\end{figure}

Continuous Dynamic Time Warping (CDTW) is a distance measure between two curves that is robust to differences in their sampling rates and local stretching. It generalizes the well-known discrete Dynamic Time Warping (DTW) by finding an optimal continuous alignment, or ``warping path'', between the curves, as illustrated in Fig.~\ref{fig:cdtw_curves}.

\subsubsection{Problem Formulation}
Given two continuous curves, $c_r: [0, L_r] \to \mathbb{R}^d$ and $c_o: [0, L_o] \to \mathbb{R}^d$, where $d$ is the dimension of the curve, parameterized by their respective arc lengths, the joint parameter space $\mathcal{P} = [0, L_r] \times [0, L_o]$ is the product of the two curve parameter spaces, as shown in Fig.~\ref{fig:cdtw_param_space}.

The goal of CDTW is to find a path through their joint parameter space that minimizes a total cost, defined as the line integral of the squared Euclidean distance between corresponding points on the two curves, as follows:

\begin{equation}
    \gamma(\pi) = \int_0^1 \| c_r(\pi(t)) - c_o(\pi(t)) \|^2_2 \cdot \|\pi'(t)\|_1 \, dt
\end{equation}
where a valid warping path $\pi: [0, 1] \to \mathcal{P}$ must be continuous and monotonic, starting at $\pi(0) = (0,0)$ and ending at $\pi(1) = (L_r, L_o)$. Intuitively, $\pi$ starts from the lower-left corner and ends at the upper-right corner of the parameter space, moving only in nondecreasing directions (right, up, or right-up), as shown in Fig.~\ref{fig:cdtw_curves}\textcolor{red}{b}.

\begin{figure}[t]
    \centering
    \subfloat[]{
        \includegraphics[width=0.75\linewidth]{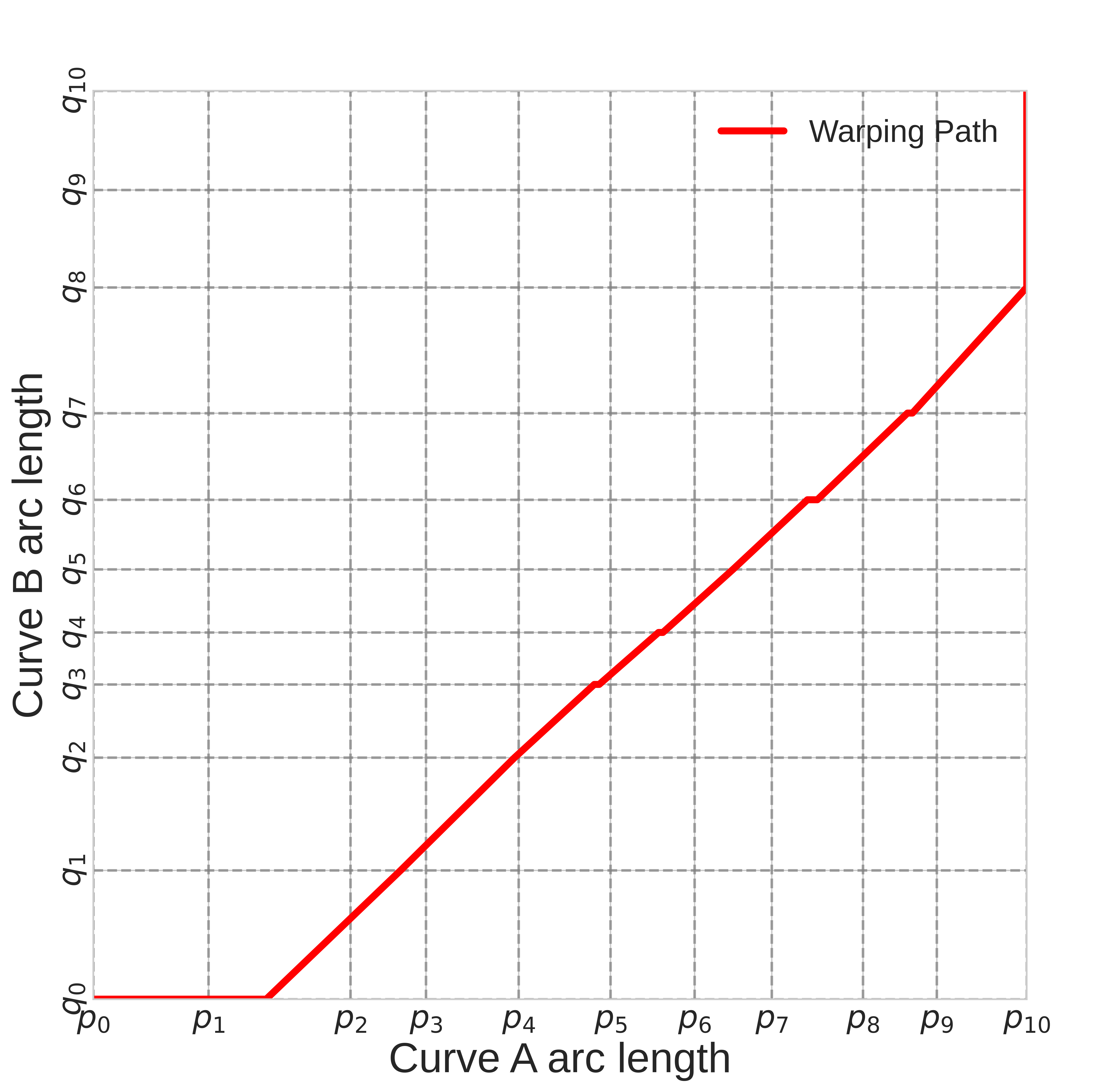}
        \label{fig:cdtw_param_space}
    }
    \\
    \subfloat[]{
        \includegraphics[width=\linewidth]{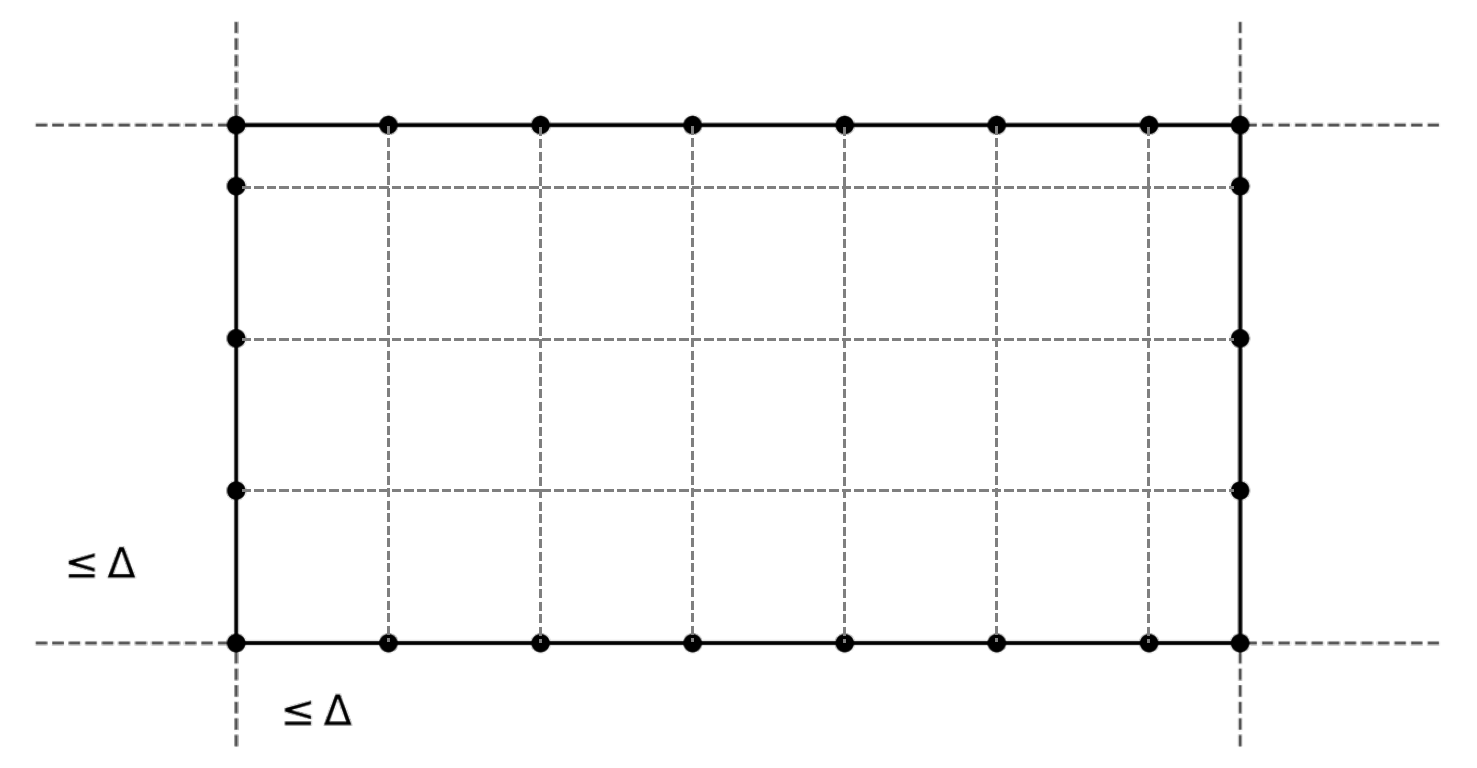}
        \label{fig:cdtw_cell}
    }
    \caption{Graph-based approximation of CDTW. (a) The joint parameter space is partitioned into a grid along the curve vertices $q_{0,1,\ldots, 10}$ and $p_{0,1,\ldots, 10}$ in $c_r$ and $c_o$ respectively, where $(q_0, p_0)$ correspond to $\pi(0)$ and $(q_{10}, p_{10})$ correspond to $\pi(1)$. (b) Within each cell, the cell is partitioned into a grid with a resolution $\Delta$ to obtain higher approximation accuracy, where any two adjacent vertices' CDTW can be computed analytically.}
    \label{fig:cdtw_approx}
\end{figure}

\subsubsection{Practical Implementation}
Practical implementations of CDTW rely on discretizing the continuous parameter space into a directed graph and finding the shortest path. The simplest approach partitions the parameter space into a fine-grained grid, computes the cost (Euclidean distance between curves) at each grid point to construct a cost field, as shown in Fig.~\ref{fig:cdtw_curves}\textcolor{red}{b}. Every vertex in the grid is connected to its right, up and right-up neighbors, and the cost of each edge is given by the average cost of its two incident vertices, which yields a directed graph. CDTW can then be approximated by finding the shortest path from $(0,0)$ to $(L_r, L_o)$ on this directed graph.

However, although conceptually simple, this method is also computationally expensive, because a fine-grained grid is required to ensure an accurate cost field, and the cost must be evaluated over the entire grid. Brankovic et al.~\cite{brankovic2020} proposed a more efficient approximation by partitioning the parameter space into a grid aligned with the curve vertices, as shown in Fig.~\ref{fig:cdtw_param_space}. The key observation is that, while the CDTW between two full curves is difficult to compute directly, the CDTW between two line segments can be computed analytically. This allows us to efficiently evaluate the CDTW between any two adjacent vertices in the grid of Fig.~\ref{fig:cdtw_param_space}. To obtain higher approximation accuracy, each grid cell is further partitioned with resolution $\Delta$, as shown in Fig.~\ref{fig:cdtw_cell}. Each subgrid again corresponds to a parameter subspace of two line segments. We then use a bidirectional Dijkstra algorithm to find the shortest path, where each edge cost is the analytical CDTW between two line segments and is computed on demand when the edge is explored, rather than precomputed for the entire grid.

\subsubsection{Optimal Substructure for Partial Matching}
\label{app:cdtw_optimal_substructure}
A key property of CDTW is that optimal warping paths exhibit optimal substructure. This means that any subpath of an optimal path is itself an optimal path between its endpoints. This property is crucial when two curves are partially overlapping, as it guarantees that we can find the optimal alignment for the overlapping segments of two partially observed curves by computing the CDTW on the full curves. The proof of Lemma~\ref{lemma:cdtw_substructure} follows \cite{klaren2020continuous}.

\begin{lemma}
    \label{lemma:cdtw_substructure}
    Consider a minimum-cost warping path $\pi$ connecting two points $s$ and $t$ in the parameter space. For any segment $\sigma \subseteq \pi$ bounded by intermediate points $s'$ and $t'$, the segment $\sigma$ necessarily achieves the minimum cost among all valid paths from $s'$ to $t'$.
\end{lemma}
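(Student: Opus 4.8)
\emph{Proof proposal.} The plan is a standard cut-and-paste (exchange) argument by contradiction, following \cite{klaren2020continuous}. Suppose the segment $\sigma \subseteq \pi$ from $s'$ to $t'$ is \emph{not} of minimum cost. Then there is a valid warping path $\hat{\sigma}$ from $s'$ to $t'$ with $\gamma(\hat{\sigma}) < \gamma(\sigma)$, where $\gamma(\cdot)$ denotes the line-integral cost of Eq.~\ref{eq:cdtw}. Decompose the optimal path as the concatenation $\pi = \pi_1 \cdot \sigma \cdot \pi_2$, with $\pi_1$ running from $s$ to $s'$ and $\pi_2$ from $t'$ to $t$. I would then form the spliced path $\tilde{\pi} = \pi_1 \cdot \hat{\sigma} \cdot \pi_2$ and show (i) $\tilde{\pi}$ is again a valid warping path from $s$ to $t$, and (ii) $\gamma(\tilde{\pi}) < \gamma(\pi)$, contradicting the minimality of $\pi$ and hence forcing $\sigma$ to be optimal between $s'$ and $t'$.

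For (i), continuity of $\tilde{\pi}$ is immediate since the endpoints agree at both splice points ($\pi_1$ ends at $s'$ where $\hat{\sigma}$ begins; $\hat{\sigma}$ ends at $t'$ where $\pi_2$ begins), and the boundary conditions $\tilde{\pi}(0) = s$, $\tilde{\pi}(1) = t$ are inherited from $\pi_1$ and $\pi_2$. Monotonicity holds because each of the three pieces $\pi_1$, $\hat{\sigma}$, $\pi_2$ is individually nondecreasing in both coordinates, and because $s'$ and $t'$ lie on $\pi$ we have $s \le s' \le t' \le t$ componentwise; concatenating in this order therefore produces only right/up/right-up moves throughout, so $\tilde{\pi}$ is a valid warping path. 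For (ii), the key observation is that the $\|\pi'(t)\|_1$ factor in Eq.~\ref{eq:cdtw} turns the integrand into a cost-density times ($\ell_1$-)speed, so $\gamma$ is a genuine line integral and depends only on the geometric trace of the path in the joint parameter space, not on the particular parameterization over $[0,1]$. Hence $\gamma$ is additive under concatenation: $\gamma(\pi) = \gamma(\pi_1) + \gamma(\sigma) + \gamma(\pi_2)$ and $\gamma(\tilde{\pi}) = \gamma(\pi_1) + \gamma(\hat{\sigma}) + \gamma(\pi_2)$. Subtracting gives $\gamma(\pi) - \gamma(\tilde{\pi}) = \gamma(\sigma) - \gamma(\hat{\sigma}) > 0$, the desired contradiction.

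I expect the only delicate point to be justifying that $\gamma$ is reparameterization-invariant and therefore additive over concatenation: when one rescales the three sub-intervals of $[0,1]$ to splice $\pi_1$, $\hat{\sigma}$, $\pi_2$, the derivative $\pi'(t)$ and the integration limits change in exactly compensating ways, which follows from the chain rule precisely because of the $\|\pi'(t)\|_1$ weight. The monotonicity bookkeeping at the splice points is routine once $s \le s' \le t' \le t$ componentwise is recorded. Everything else is a direct subtraction.
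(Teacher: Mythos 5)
Your proof is correct and follows essentially the same exchange/cut-and-paste argument by contradiction as the paper: decompose $\pi$ into three pieces, splice in a cheaper middle segment, and contradict optimality via additivity of the cost. The only difference is that you take the time to verify two facts the paper states without justification — that the spliced path remains a valid (continuous, monotone) warping path, and that $\gamma$ is additive under concatenation because the $\|\pi'(t)\|_1$ weight makes it a reparameterization-invariant line integral — which is a useful tightening of the same argument rather than a different route.
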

    
\begin{proof}
    We establish this result via contradiction. Decompose $\pi$ into three consecutive segments: $\sigma_s$ (from $s$ to $s'$), $\sigma$ (from $s'$ to $t'$), and $\sigma_t$ (from $t'$ to $t$). By the additive nature of path costs, we have $\gamma(\pi) = \gamma(\sigma_s) + \gamma(\sigma) + \gamma(\sigma_t)$.
    
    Suppose $\sigma$ does not minimize the cost between $s'$ and $t'$. Then there must exist an alternative path $\sigma^*$ satisfying $\gamma(\sigma^*) < \gamma(\sigma)$. Replacing $\sigma$ with $\sigma^*$ yields a composite path $\pi^* = \sigma_s \circ \sigma^* \circ \sigma_t$ connecting $s$ to $t$, whose total cost equals $\gamma(\pi^*) = \gamma(\sigma_s) + \gamma(\sigma^*) + \gamma(\sigma_t)$.
    
    The strict inequality $\gamma(\sigma^*) < \gamma(\sigma)$ directly implies $\gamma(\pi^*) < \gamma(\pi)$, contradicting the optimality of $\pi$. Hence, $\sigma$ must be a minimum-cost path between $s'$ and $t'$.
\end{proof} 

In practice, we directly compute the CDTW between two curves, regardless of whether they are fully or only partially overlapping. We then check whether an endpoint of one curve is matched to a segment starting from an endpoint of the other curve (rather than matching exactly to that endpoint). If so, we find the nearest matching point on that segment, discard the portion from the endpoint to this nearest point, and keep only the remaining part, as shown by the gray dashed lines in Fig.~\ref{fig:cdtw_curves}\textcolor{red}{a}.

\subsection{Graduated Non-Convexity (GNC) Implementation}
\label{app:gnc}

This section provides the detailed implementation of the Graduated Non-Convexity (GNC) optimization used in \mainref{subsec:robust_estimation} for robust Chebyshev-curve parameter estimation. Our derivation follows \cite{yang2020graduated}.

\subsubsection{Black-Rangarajan Duality Formulation}
The robust optimization problem (Truncated Least Squares, TLS) in \maineqref{eq:tls_objective} is:
\begin{equation}
\mathbf{a}^* = \argmin_{\mathbf{a}} \sum_{j=1}^N \min\left( \|r_j(\mathbf{a})\|^2_2, c^2 \right)
\end{equation}
where $c^2$ is the truncation threshold and $r_j(\mathbf{a}) = p_j - \sum_{i=0}^n a_i \cos(i \arccos(t_j))$ is the linear residual function with respect to the Chebyshev coefficients $\mathbf{a}$.

It can be reformulated using the Black-Rangarajan duality principle \cite{black1996unification} to jointly optimize over curve parameters $\mathbf{a}$ and auxiliary weights $w_j \in [0,1]$:
\begin{equation}
\label{eq:gnc_dual}
\mathbf{a}^*, \hat{\mathbf{W}} = \argmin_{\mathbf{a}, \{w_j\}} \sum_{j=1}^N \left[ w_j \|r_j(\mathbf{a})\|^2_2 + \Phi_{\rho_\mu}(w_j) \right]
\end{equation}
where $\mathbf{W} = \text{diag}(w_1, \ldots, w_N)$ is the weight matrix, and $\Phi_{\rho_\mu}(w_j)$ is a penalty term derived from the robust loss function $\rho_\mu(\cdot)$. For the truncated least-squares (TLS) loss, the penalty is:
\begin{equation}
\Phi_{\rho_\mu}(w_j) = \frac{\mu(1-w_j)}{\mu+w_j}c^2
\end{equation}
where $\mu$ controls the sharpness of the robust loss. As $\mu$ increases from $0$ to $\infty$, the robust loss function $\rho_\mu(\cdot)$ transitions from a convex quadratic loss to the original truncated loss.

\subsubsection{Alternating Optimization Strategy}

Since Eq.~\ref{eq:gnc_dual} is non-convex in the joint variables $(\mathbf{a}, \{w_j\})$, we adopt an alternating optimization strategy that iteratively updates the parameters and weights:

\paragraph{Step 1: Parameter Update (Fixed Weights)} Given weights $\hat{w}_j^{(t-1)}$ from the previous iteration, solve the weighted least squares problem:
\begin{equation}
\label{eq:gnc_param_update}
\mathbf{a}^{(t)} = \argmin_{\mathbf{a}} \sum_{j=1}^N \hat{w}_j^{(t-1)} \|r_j(\mathbf{a})\|^2_2
\end{equation}
This is a standard weighted least-squares problem that can be solved efficiently in closed form, since the residual is quadratic in the Chebyshev coefficients $\mathbf{a}$. In practice, this step can be further accelerated using GPU parallelization.

\paragraph{Step 2: Weight Update (Fixed Parameters)} Given updated parameters $\mathbf{a}^{(t)}$, update each weight $w_j$ independently by minimizing:
\begin{equation}
\hat{w}_j^{(t)} = \argmin_{w_j \in [0,1]} \left[ w_j \|r_j(\mathbf{a}^{(t)})\|^2_2 + \Phi_{\rho_\mu}(w_j) \right]
\end{equation}
Taking the derivative with respect to $w_j$ and setting it to zero:
\begin{equation}
\frac{\partial}{\partial w_j} \left( w_j \|r_j(\mathbf{a}^{(t)})\|^2_2 + \Phi_{\rho_\mu}(w_j) \right) = 0
\end{equation}

The derivative of the TLS penalty function is:
\begin{equation}
\frac{\partial \Phi_{\rho_\mu}(w_j)}{\partial w_j} = \frac{\partial}{\partial w_j} \left( \frac{\mu(1-w_j)}{\mu+w_j} c^2 \right) = -c^2 \cdot \frac{\mu(\mu+1)}{(\mu+w_j)^2}
\end{equation}

Thus, the optimality condition becomes:
\begin{equation}
\|r_j(\mathbf{a}^{(t)})\|^2_2 - c^2 \cdot \frac{\mu(\mu+1)}{(\mu+w_j)^2} = 0
\end{equation}

Let $\bar{r}_j = \|r_j(\mathbf{a}^{(t)})\|^2_2$, then:
\begin{equation}
\bar{r}_j = c^2 \cdot \frac{\mu(\mu+1)}{(\mu+w_j)^2}
\end{equation}

Solving for $w_j$:
\begin{equation}
w_j = c \sqrt{\frac{\mu(\mu+1)}{\bar{r}_j}} - \mu
\end{equation}

However, since $w_j \in [0,1]$ and the penalty function is designed for convex--concave optimization, the solution must be clamped to the valid range. The unconstrained solution $w_j = c \sqrt{\frac{\mu(\mu+1)}{\bar{r}_j}} - \mu$ must satisfy:

- When $\bar{r}_j \geq \frac{\mu+1}{\mu} c^2$, the unconstrained solution gives $w_j \leq 0$, but since $w_j \geq 0$, it is clamped to $w_j = 0$ (outlier case).
- When $\bar{r}_j \leq \frac{\mu}{\mu+1} c^2$, the unconstrained solution gives $w_j \geq 1$, but since $w_j \leq 1$, it is clamped to $w_j = 1$ (inlier case).
- When $\frac{\mu}{\mu+1} c^2 < \bar{r}_j < \frac{\mu+1}{\mu} c^2$, the unconstrained solution gives $0 < w_j < 1$ (uncertain case).

This leads to the following closed-form weight update:
\begin{equation}
\label{eq:gnc_weight_update}
\hat{w}_j^{(t)} =
\begin{cases}
0 & \text{if } \bar{r}_j \geq \frac{\mu+1}{\mu}c^2 \quad \text{(outlier)} \\
c\sqrt{\frac{\mu(\mu+1)}{\bar{r}_j}} - \mu & \text{if } \frac{\mu}{\mu+1}c^2 < \bar{r}_j < \frac{\mu+1}{\mu}c^2 \quad \text{(uncertain)} \\
1 & \text{if } \bar{r}_j \leq \frac{\mu}{\mu+1}c^2 \quad \text{(inlier)}
\end{cases}
\end{equation}
where $\bar{r}_j = \|r_j(\mathbf{a}^{(t)})\|^2_2$ is the squared residual for observation $j$ given current parameters $\mathbf{a}^{(t)}$.

\subsubsection{Parameter Scheduling and Initialization}

The parameter $\mu$ controls the shape of the robust loss function. GNC gradually increases $\mu$ across iterations to transition from a convex (quadratic) loss to the non-convex truncated loss:

\paragraph{Initialization} At the first iteration $t=0$, set:
\begin{equation}
\mu^{(0)} = \frac{c^2}{2\max_j \|r_j(\mathbf{a}^{(0)})\|^2_2 + c^2}
\end{equation}
where $\mathbf{a}^{(0)}$ is an initial parameter estimate (e.g., from a simple least-squares fit). This ensures that the initial loss is nearly convex.

\paragraph{Progression Schedule} At each iteration $t$, update:
\begin{equation}
\mu^{(t)} = \kappa \cdot \mu^{(t-1)}
\end{equation}
where $\kappa > 1$ is a constant (typically $\kappa = 1.4$). This gradually sharpens the robust loss, allowing GNC to avoid poor local minima in early iterations while converging to an accurate solution.

\paragraph{Termination} The optimization terminates when either:
\begin{itemize}
    \item The gradient norm $\left\|\nabla_{\mathbf{a}} \sum_j \hat{w}_j^{(t)} \|r_j(\mathbf{a}^{(t)})\|^2_2\right\|$ falls below a threshold (e.g., $10^{-6}$), or
    \item The maximum number of iterations $N_{\text{iter}}$ (e.g., 50) is reached.
\end{itemize}

\subsection{Denoising Diffusion Probabilistic Models (DDPM)}
\label{app:ddpm_discrete}

The original Denoising Diffusion Probabilistic Model (DDPM) \cite{ho2020denoising} is formulated with discrete time steps. It consists of two Markov processes: a forward process and a reverse process.

\subsubsection{Forward Process} 
\label{app:forward_process}
The forward process gradually adds Gaussian noise to a data sample $\x_0$ over $T$ discrete time steps (e.g., $T=1000$). The transition at each step $t \in \{1, \dots, T\}$ is defined as a Markov chain with a fixed variance schedule $\{\beta_t\}_{t=1}^T$:
\begin{equation}
    q(\x_t|\x_{t-1}) := \mathcal{N}(\x_t; \sqrt{1-\beta_t} \x_{t-1}, \beta_t\mathbf{I})
\end{equation}
Letting $\alpha_t = 1 - \beta_t$ and $\bar{\alpha}_t = \prod_{s=1}^t \alpha_s$, a key property of this process is that we can sample $\x_t$ directly from $\x_0$ in a closed form:
\begin{equation}
    q(\x_t|\x_0) := \mathcal{N}(\x_t; \sqrt{\bar{\alpha}_t}\x_0, (1-\bar{\alpha}_t)\mathbf{I})
\end{equation}
This allows for efficient training, as we can sample $\x_t$ by the following formula
\begin{equation}
    \label{eq:direct_sampling}
    \x_t = \sqrt{\bar{\alpha}_t}\x_0 + \sqrt{1-\bar{\alpha}_t}\epsilon
\end{equation}
for any $t$, where $\epsilon \sim \mathcal{N}(0, \mathbf{I})$.

\subsubsection{Reverse Process} 
\label{app:reverse_process}
The reverse process is a learned generative process that starts from a standard Gaussian noise $\x_T \sim \mathcal{N}(\mathbf{0}, \mathbf{I})$ and denoises it back to a clean sample $\x_0$. This process is also a Markov chain defined as:
\begin{equation}
    p_\theta(\x_{t-1}|\x_t) := \mathcal{N}(\x_{t-1}; \mu_\theta(\x_t, t), \Sigma_\theta(\x_t, t))
\end{equation}
The goal of training is to learn the parameters $\theta$ of the mean $\mu_\theta$ and variance $\Sigma_\theta$. In DDPM~\cite{ho2020denoising}, the mean $\mu_\theta$ and variance $\Sigma_\theta$ are defined as:
\begin{equation}
    \mu_\theta(\x_t, t) = \frac{1}{\sqrt{\alpha_t}} \left( \x_t - \frac{1-\alpha_t}{\sqrt{1-\bar{\alpha}_t}} \epsilon_\theta(\x_t, t) \right)
    \label{eq:ddpm_mu}
\end{equation}
\begin{equation}
    \Sigma_\theta(\x_t, t) = \sigma_t^2 \mathbf{I} = \frac{1-\bar{\alpha}_{t-1}}{1-\bar{\alpha}_t} \beta_t \mathbf{I}
    \label{eq:ddpm_sigma}
\end{equation}
where $\epsilon_\theta(\x_t, t)$ is the predicted noise by the neural network.

While the standard DDPM reverse process requires $T=1000$ steps for high-quality generation, more efficient solvers have been developed to accelerate sampling. DDIM~\cite{song2020denoising} provides a deterministic sampling method that treats the diffusion process as an Ordinary Differential Equation (ODE), enabling high-quality generation with significantly fewer steps. DPM-Solver++~\cite{lu2022dpm++} further improves efficiency through high-order ODE solvers, achieving comparable quality with even fewer steps. Both solvers maintain the same learned model $\epsilon_\theta$ but use different numerical methods to solve the reverse process, significantly reducing computational cost during inference.

\subsection{Derivation of the DDPM Training Objective}
\label{app:ddpm_loss_derivation}

This section provides the theoretical justification for the DDPM training objective, which simplifies to minimizing the mean-squared error between the true and predicted noise. The derivation follows the principles in \cite{ho2020denoising, chan2024tutorial}.

The training goal is to maximize the log-likelihood of the data, $\log p_\theta(\x_0)$. Since this is intractable, we instead maximize a lower bound, the Evidence Lower Bound (ELBO), derived via Jensen's inequality:
\begin{align}
\begin{split}
\log p_\theta(\x_0) \geq \mathbb{E}_{q(\x_{1:T}|\x_0)}\left[\log \frac{p_\theta(\x_{0:T})}{q(\x_{1:T}|\x_0)}\right] = \mathcal{L}_{\text{ELBO}}
\end{split}
\end{align}
Using the chain rule of probability, this ELBO can be decomposed into three main components:
\begin{align}
\mathcal{L}_{\text{ELBO}} ={}& \underbrace{\mathbb{E}_q[\log p_\theta(\x_0|\x_1)]}_{\text{Reconstruction Term}} \nonumber \\
& \underbrace{- D_{KL}(q(\x_T|\x_0) || p(\x_T))}_{\text{Prior Matching Term}} \nonumber \\
& \underbrace{- \sum_{t=2}^T \mathbb{E}_q[D_{KL}(q(\x_{t-1}|\x_t, \x_0) || p_\theta(\x_{t-1}|\x_t))]}_{\text{Denoising Matching Terms}}
\label{eq:elbo_decomposed}
\end{align}
The three components can be interpreted as follows:
\begin{itemize}
    \item \textbf{Reconstruction Term ($L_0$)}: This measures how well the model can reconstruct the original data $\x_0$ from its first noised version $\x_1$. As $p_\theta(\x_0|\x_1)$ is a Gaussian with mean $\mu_\theta(\x_1, 1)$, maximizing this log-likelihood term is equivalent to minimizing the squared error $\|\x_0 - \mu_\theta(\x_1, 1)\|^2$. As we will see, this also simplifies to a noise-prediction objective, consistent with the other terms.
    \item \textbf{Prior Matching Term ($L_T$)}: This term encourages the distribution of the final latent variable after the forward process, $q(\x_T|\x_0)$, to match the standard Gaussian prior $p(\x_T)$. Since the forward process $q$ is fixed and has no learnable parameters, this term is treated as a constant during training and can be ignored for optimization.
    \item \textbf{Denoising Matching Terms ($L_{t-1}$ for $t \in [2, T]$)}: Forming the core of the optimization, this is a sum of KL divergences that measure the mismatch between the true posterior $q(\x_{t-1}|\x_t, \x_0)$ and our learned reverse step $p_\theta(\x_{t-1}|\x_t)$.
\end{itemize}
Our primary goal is to simplify the Denoising Matching and Reconstruction terms. To make the KL divergence tractable, we first need a closed-form expression for the true posterior $q(\x_{t-1}|\x_t, \x_0)$. Using Bayes' theorem, it can be shown that this posterior is also a Gaussian, $q(\x_{t-1}|\x_t, \x_0) = \mathcal{N}(\x_{t-1}; \tilde{\mu}_t, \tilde{\beta}_t\mathbf{I})$, where the mean $\tilde{\mu}_t$ and variance $\tilde{\beta}_t$ are given by:
\begin{align}
    \tilde{\mu}_t(\x_t, \x_0) &= \frac{\sqrt{\bar{\alpha}_{t-1}}\beta_t}{1-\bar{\alpha}_t}\x_0 + \frac{\sqrt{\alpha_t}(1-\bar{\alpha}_{t-1})}{1-\bar{\alpha}_t}\x_t \label{eq:posterior_mean} \\
    \tilde{\beta}_t &= \frac{1-\bar{\alpha}_{t-1}}{1-\bar{\alpha}_t}\beta_t \label{eq:posterior_var}
\end{align}
The detailed derivation of this posterior distribution using Bayes' theorem can be found in \cite{chan2024tutorial}. Since the true posterior is Gaussian, we set our model's reverse process $p_\theta(\x_{t-1}|\x_t)$ to also be a Gaussian, with a learnable mean $\mu_\theta(\x_t, t)$ and a fixed variance $\Sigma_\theta(\x_t, t) = \tilde{\beta}_t\mathbf{I}$. With matching variances, the KL divergence simplifies to the squared difference between the means:
\begin{equation}
\begin{split}
L_{t-1} &= D_{KL}(q(\x_{t-1}|\x_t, \x_0) || p_\theta(\x_{t-1}|\x_t)) \\
&= \mathbb{E}_{\x_0, \epsilon} \left[ \frac{1}{2\tilde{\beta}_t} \|\tilde{\mu}_t(\x_t, \x_0) - \mu_\theta(\x_t, t)\|^2 \right]
\end{split}
\end{equation}
The key insight is how to parameterize $\mu_\theta$. By substituting $\x_0 = (\x_t - \sqrt{1-\bar{\alpha}_t}\epsilon)/\sqrt{\bar{\alpha}_t}$ (recalling Eq.~(\ref{eq:direct_sampling})) into the equation for $\tilde{\mu}_t$, we can express the true posterior mean as a function of the noisy image $\x_t$ and the true noise $\epsilon$:
\begin{equation}
\tilde{\mu}_t(\x_t, \x_0) = \frac{1}{\sqrt{\alpha_t}}\left(\x_t - \frac{\beta_t}{\sqrt{1-\bar{\alpha}_t}}\epsilon\right)
\end{equation}
This motivates parameterizing our model's mean $\mu_\theta$ with the same functional form, but replacing the true noise $\epsilon$ with a prediction from a neural network $\epsilon_\theta(\x_t, t)$:
\begin{equation}
\mu_\theta(\x_t, t) = \frac{1}{\sqrt{\alpha_t}}\left(\x_t - \frac{\beta_t}{\sqrt{1-\bar{\alpha}_t}}\epsilon_\theta(\x_t, t)\right)
\label{eq:mu_theta_param_elbo}
\end{equation}
Plugging these two forms for the mean into the KL divergence term, the $\x_t$ terms cancel out, leaving a direct comparison between the true and predicted noise:
\begin{align}
\label{eq:mu_diff_squared}
\begin{split}
\|\tilde{\mu}_t - \mu_\theta\|^2 &= \left\| \frac{1}{\sqrt{\alpha_t}}\left(\x_t - \frac{\beta_t}{\sqrt{1-\bar{\alpha}_t}}\epsilon\right) \right. \\
&\qquad \left. - \frac{1}{\sqrt{\alpha_t}}\left(\x_t - \frac{\beta_t}{\sqrt{1-\bar{\alpha}_t}}\epsilon_\theta(\x_t, t)\right) \right\|^2 \\
&= \left\| \frac{\beta_t}{\sqrt{\alpha_t(1-\bar{\alpha}_t)}} (\epsilon_\theta(\x_t, t) - \epsilon) \right\|^2 \\
&= \frac{\beta_t^2}{\alpha_t(1-\bar{\alpha}_t)} \|\epsilon - \epsilon_\theta(\x_t, t)\|^2
\end{split}
\end{align}
This shows that all trainable parts of the ELBO ($L_0, L_1, \dots, L_{T-1}$) can be unified under the same objective of minimizing the noise prediction error. The authors of \cite{ho2020denoising} found empirically that ignoring the complex, time-dependent weighting term that arises from this derivation and optimizing a simplified, unweighted objective yields better sample quality. This leads to the final simplified training objective for DDPM, which sums the noise prediction error over all timesteps:
\begin{equation}
    L_{\text{simple}}(\theta) = \mathbb{E}_{t \sim [1,T], \x_0, \epsilon} \left[ \left\| \epsilon - \epsilon_\theta(\sqrt{\bar{\alpha}_t}\x_0 + \sqrt{1-\bar{\alpha}_t}\epsilon, t) \right\|^2 \right]
    \label{eq:ddpm_loss}
\end{equation}

This objective is intuitive: the model serves as a noise predictor trained across various noise levels, with the goal of reversing the forward noising process. Alternatively, predicting the clean sample $\mathbf{x}_0$ is also viable, as it can be converted back to the noise prediction via Eq.~(\ref{eq:direct_sampling}). The choice between these two prediction targets typically depends on experimental results.

\begin{figure}[t]
    \centering
    \includegraphics[width=0.618\linewidth]{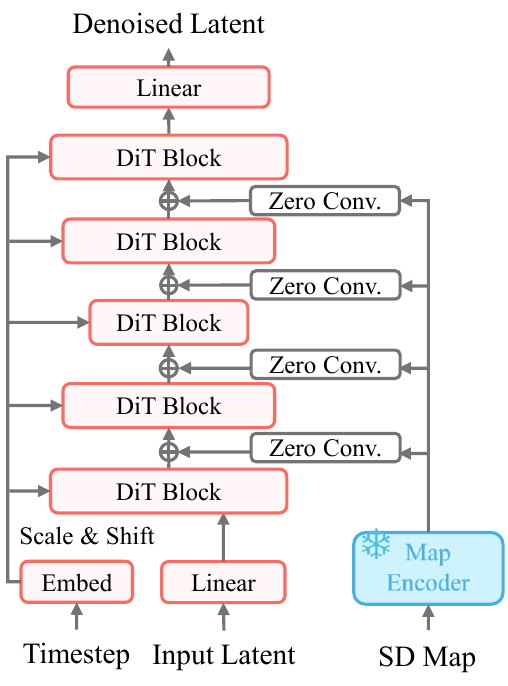}
    \caption{Noise-prediction network: the diffused latent $\x_t$ is mapped to the predicted noise. Timestep and SD-map conditioning are injected into the DiT via adaLN and zero-initialized convolution layers, respectively, enabling controlled denoising.}
    \label{fig:gen_model}
\end{figure}

\subsection{Diffusion Model Architecture}
\label{app:diffusion_architecture}

This section details the latent diffusion architecture used in \mainref{sec:diff_mapping}. It follows a VAE + DiT design: a VAE first encodes the raster HD map into a compact latent representation, and a conditional DiT-based denoiser operates in latent space to predict the diffusion noise.


\paragraph{Map Variational-Autoencoder} In nuScenes experiments, we use raster maps of resolution $256 \times 256$ with three semantic channels (pedestrian crossings, lane dividers, and road boundaries). The encoder $E$ and decoder $D$ are implemented as an AutoencoderKL~\cite{diffusers} with five downsampling and five upsampling blocks: \texttt{DownEncoderBlock2D} / \texttt{UpDecoderBlock2D} with block widths $(32, 32, 64, 128, 256)$ and one layer per block. This produces a latent tensor $\x_0 \in \mathbb{R}^{16 \times 16 \times 16}$ with $16$ channels and spatial size $16 \times 16$ (i.e., $H' = W' = 16$, $C' = 16$). To ensure that the latent space possesses regularity to facilitate the learning of the diffusion model while maintaining the reconstruction quality as much as possible, we adopt a relatively low KL loss weight of 1e-8. 

\paragraph{DiT Denoiser for nuScenes} The architecture of the proposed noise-prediction network is illustrated in Fig.~\ref{fig:gen_model}. The latent denoiser performs denoising operations directly on the $16 \times 16 \times 16$ latent tensor. We maintain the latent spatial resolution unchanged and adopt a patch size of $1$, such that each latent pixel corresponds to a single token. The DiT is configured with the embedding dimension of $384$, the depth of $16$, and $6$ self-attention heads. The architecture follows the DiT with skip connections: the encoder and decoder each comprise $8$ DiT blocks, where every block integrates self-attention with rotary position embedding \cite{su2024roformer} and feed-forward layers, along with skip connections that mirror the U-Net structure (omitted in the figure for clarity). A learnable timestep embedding is processed by an MLP and injected into every DiT block through adaLN-style modulation, thereby enabling time-dependent noise prediction. For SD-Map conditioning, the SD map $c_{\text{sd}}$ is rasterized and aligned with the latent. The SD map is first encoded into a latent feature grid and subsequently projected through zero-initialized linear layers before being fused into every DiT block, similar in spirit to ControlNet~\cite{zhang2023adding}.


\paragraph{DiT Denoiser for Proprietary Dataset} For the larger proprietary dataset, we use a higher-capacity DiT denoiser with the same latent resolution ($16 \times 16 \times 16$) but a deeper and wider backbone to better exploit the larger training set. The DiT uses an embedding dimension of $512$, depth $24$, and $8$ self-attention heads. The architecture follows the same design as the nuScenes variant. For SD-Map conditioning, the SD map is first transformed into bird's-eye-view features using LSS~\cite{philion2020lift}, then encoded into the denoiser via adaLN~\cite{peebles2023}. Training uses 110{,}000 steps with batch size 64 on 32 NVIDIA RTX 3090 GPUs.

\paragraph{Diffusion Scheduler} During training and inference we use a discrete DDPM~\cite{ho2020denoising} scheduler with $T = 1000$ timesteps and a scaled linear $\beta$ schedule from $\beta_{\text{start}} = 8.5 \times 10^{-4}$ to $\beta_{\text{end}} = 1.2 \times 10^{-2}$. For nuScenes, we use $\epsilon$-prediction (noise prediction) as the training objective, while for the proprietary dataset, we directly predict the clean sample $\x_0$ (sample prediction).

\begin{figure}[t]
    \centering
    \includegraphics[width=0.99\linewidth]{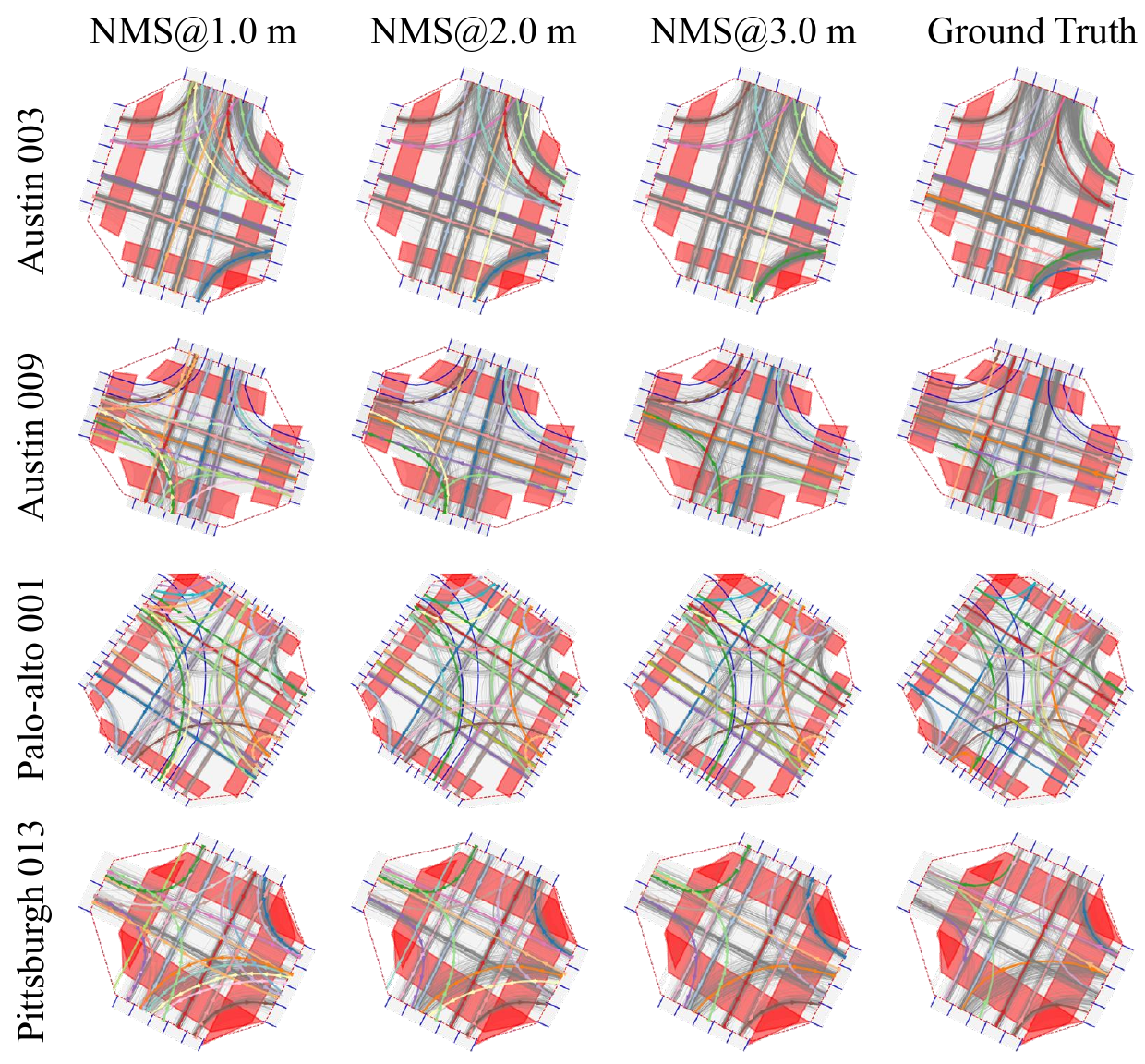}
    \caption{Comparison of NMS thresholds at 1\,m, 2\,m, and 3\,m. For each setting, we visualize four sample clusters together with the human-labeled ground-truth centerlines to illustrate the influence of the NMS threshold.}
    \label{fig:topo_nms}
\end{figure}

\subsection{Observation Aggregation Equivalence}
\label{app:obs_aggregation}

This section proves that the $\ell_2$ loss between the generated map $\mathcal{M}_s$ and the probability map $\mathcal{M}_{prob}$ is mathematically equivalent (up to a constant) to averaging the losses over all individual observations for each pixel and then summing across all pixels and classes.

Consider a scene with multiple crowdsourced observations. For each pixel $p \in \Omega$ and semantic class $c \in \{1,\dots,C\}$, we have $N_p$ binary observations $m_i^c(p) \in \{0,1\}$. The total loss across all pixels and classes is:
\begin{equation}
\label{eq:total_loss_individual}
\mathcal{L}_{\text{total}} = \sum_{p \in \Omega} \sum_{c=1}^C \frac{1}{N_p} \sum_{i=1}^{N_p} (m_i^c(p) - \mathcal{M}_s^c(p))^2
\end{equation}

Without loss of generality, we focus on a single pixel-class pair $(p,c)$ and derive the loss for that pair. The total loss becomes
\begin{equation}
\label{eq:single_loss_individual}
\mathcal{L}_{p,c} = \frac{1}{N_p} \sum_{i=1}^{N_p} (m_i^c(p) - \mathcal{M}_s^c(p))^2
\end{equation}

Expanding this expression:
\begin{align}
\mathcal{L}_{p,c} &= \frac{1}{N_p} \sum_{i=1}^{N_p} \left[ m_i^c(p)^2 - 2m_i^c(p)\mathcal{M}_s^c(p) + \mathcal{M}_s^c(p)^2 \right] \nonumber \\
&= \frac{1}{N_p} \sum_{i=1}^{N_p} m_i^c(p)^2 - \frac{2\mathcal{M}_s^c(p)}{N_p} \sum_{i=1}^{N_p} m_i^c(p) + \mathcal{M}_s^c(p)^2
\end{align}

Since $m_i^c(p) \in \{0,1\}$, we have $m_i^c(p)^2 = m_i^c(p)$. Define the average observation as $\bar{m}^c(p) = \frac{1}{N_p} \sum_{i=1}^{N_p} m_i^c(p)$, which equals the probability map value $\mathcal{M}_{prob}^c(p)$. This gives:
\begin{equation}
\begin{aligned}
    \mathcal{L}_{p,c} &= \bar{m}^c(p) - 2\mathcal{M}_s^c(p) \bar{m}^c(p) + \mathcal{M}_s^c(p)^2 \\
    &= (\bar{m}^c(p) - \mathcal{M}_s^c(p))^2 + \bar{m}^c(p)(1 - \bar{m}^c(p))
\end{aligned}
\end{equation}

The last term $\bar{m}^c(p)(1 - \bar{m}^c(p))$ is constant with respect to $\mathcal{M}_s^c(p)$, so:
\begin{equation}
\label{eq:equivalence_single}
\mathcal{L}_{p,c} = (\bar{m}^c(p) - \mathcal{M}_s^c(p))^2 + \text{const}
\end{equation}

For the probability map loss, we compute the squared error against $\bar{m}^c(p) = \mathcal{M}_{prob}^c(p)$:
\begin{equation}
\label{eq:prob_map_loss}
\mathcal{L}_{\text{prob},p,c} = (\mathcal{M}_{prob}^c(p) - \mathcal{M}_s^c(p))^2 = (\bar{m}^c(p) - \mathcal{M}_s^c(p))^2
\end{equation}

Comparing Eq.~\ref{eq:equivalence_single} and Eq.~\ref{eq:prob_map_loss}, the loss against the probability map equals the loss against individual observations up to a constant that depends only on the observations and not on the generated map $\mathcal{M}_s$.

Since this holds for each pixel-class pair, the total loss against the probability map:
\begin{equation}
\mathcal{L}_{\text{prob}} = \sum_{p \in \Omega} \sum_{c=1}^C (\mathcal{M}_{prob}^c(p) - \mathcal{M}_s^c(p))^2 = \sum_{p \in \Omega} \sum_{c=1}^C (\bar{m}^c(p) - \mathcal{M}_s^c(p))^2
\end{equation}
equals the total loss against individual observations (Eq.~\ref{eq:total_loss_individual}) up to constants that are independent of $\mathcal{M}_s$.

\subsection{Ablation of NMS Threshold}
\label{app:topo_nms}
In \mainref{subsec:traj_cluster}, to avoid manually selecting the number of clusters $k$, we first initialize with a larger value and then apply Non-Maximum Suppression (NMS). Specifically, we sort medoids by their cluster sizes (number of trajectories in each cluster) in descending order, and remove any medoid whose trajectory distance $d_{\tau}$ to a medoid of a larger cluster (appearing earlier in the sorted list) falls below the threshold $\tau_{\text{nms}}$. 

Fig.~\ref{fig:topo_nms} compares the results under NMS thresholds of 1\,m, 2\,m, and 3\,m together with the corresponding human-labeled ground-truth centerlines. The main differences appear at large turns, where vehicle trajectories become more dispersed: although drivers tend to start from similar entry points, they exhibit diverse exit points and follow trajectories with different curvatures. With a small NMS threshold (e.g., 1\,m), multiple reference centerlines corresponding to a single GT centerline are preserved, capturing different driving styles. In contrast, a large threshold (e.g., 3\,m) suppresses these variations, favoring a single centerline that reflects the maximum consensus among trajectories.

\subsection{More Qualitative Results on Proprietary Dataset}
\label{app:proprietary}

This section provides additional qualitative results on the proprietary dataset, complementing the examples shown in \mainref{subsec:self_proprietary_datasets}.

\begin{figure*}[t]
    \centering
    \begin{subfigure}[b]{0.95\linewidth}
        \centering
        \includegraphics[width=\linewidth]{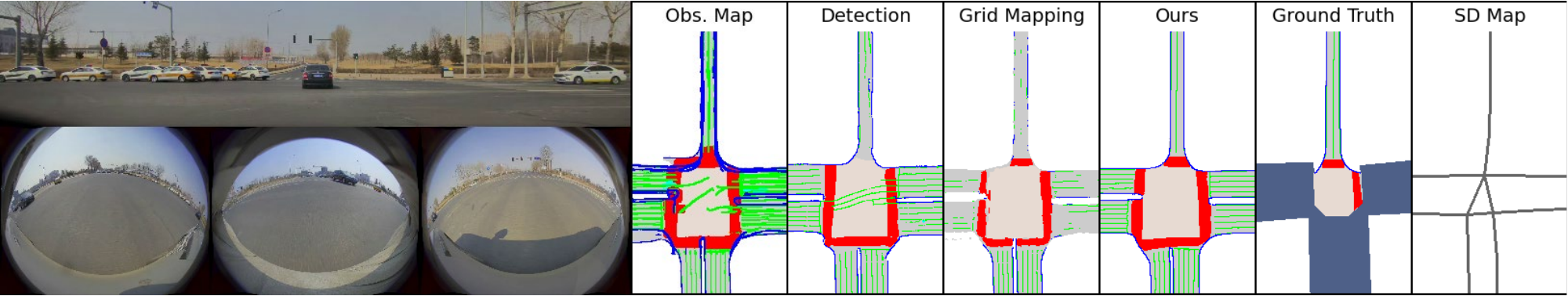}
    \end{subfigure}
    \hfill
    \begin{subfigure}[b]{0.95\linewidth}
        \centering
        \includegraphics[width=\linewidth]{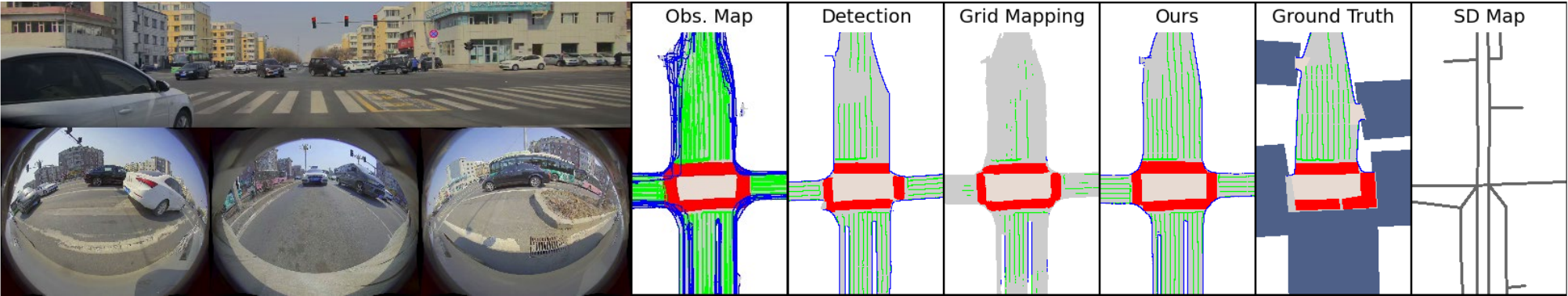}
    \end{subfigure}
    
    \vspace{0.5em}
    
    \begin{subfigure}[b]{0.95\linewidth}
        \centering
        \includegraphics[width=\linewidth]{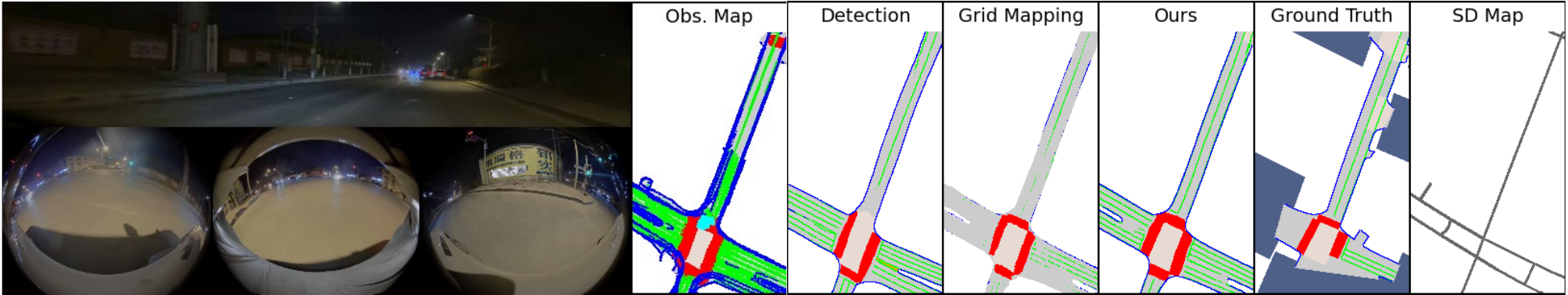}
    \end{subfigure}
    \hfill
    \begin{subfigure}[b]{0.95\linewidth}
        \centering
        \includegraphics[width=\linewidth]{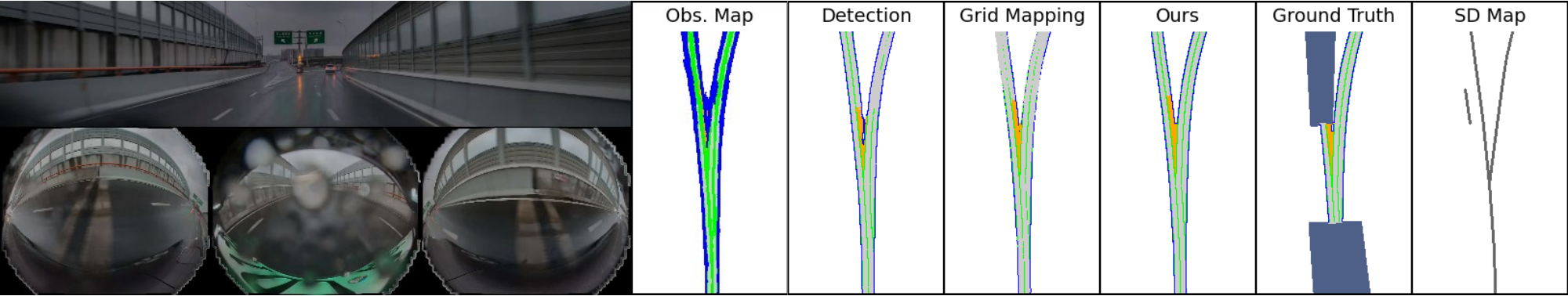}
    \end{subfigure}
    
    \vspace{0.5em}
    
    \begin{subfigure}[b]{0.95\linewidth}
        \centering
        \includegraphics[width=\linewidth]{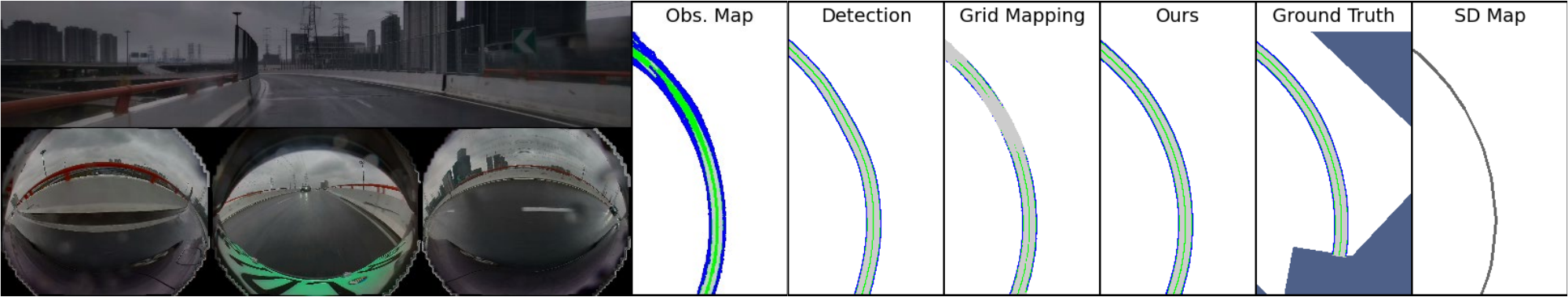}
    \end{subfigure}
    \hfill
    \begin{subfigure}[b]{0.95\linewidth}
        \centering
        \includegraphics[width=\linewidth]{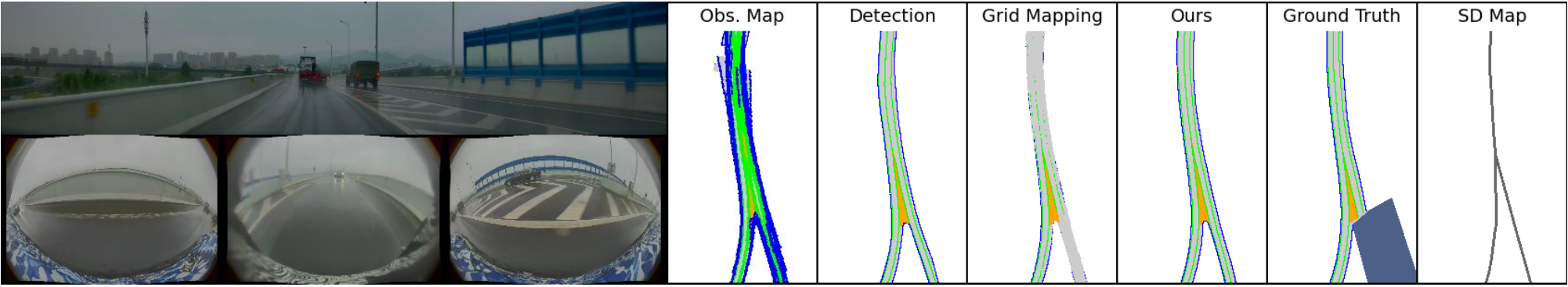}
    \end{subfigure}
    \caption{Additional qualitative results of crowdsourced semantic mapping on the proprietary dataset. \textcolor[RGB]{78,96,136}{Dark blue} areas represent unannotated pixels, which are excluded from evaluation.}
    \label{fig:proprietary_examples}
\end{figure*}

\end{document}